\documentclass{article} 
\usepackage{iclr2027_conference_added_preprint_flag,times}

\usepackage{amsmath,amsfonts,bm}

\def\eqref#1{equation~\ref{#1}}

\def\ceil#1{\lceil #1 \rceil}
\def\floor#1{\lfloor #1 \rfloor}
\def\1{\bm{1}}

\def\eps{{\epsilon}}

\DeclareMathAlphabet{\mathsfit}{\encodingdefault}{\sfdefault}{m}{sl}
\SetMathAlphabet{\mathsfit}{bold}{\encodingdefault}{\sfdefault}{bx}{n}

\newcommand{\E}{\mathbb{E}}

\newcommand{\R}{\mathbb{R}}

\DeclareMathOperator*{\argmax}{arg\,max}

\newcommand{\dd}{\mathrm{d}}
\DeclareMathOperator{\Unif}{Unif}

\DeclareMathOperator{\Bin}{Bin}

\def\safedef#1{%
   \ifx#1\undefined
      \expandafter\def\expandafter#1%
   \else
      \errmessage{The \string#1 is defined already}%
      \expandafter\def\expandafter\tmp
   \fi
}

\usepackage{amsthm,amsmath,bbm,amsfonts,amssymb}
\usepackage{dsfont} 
\usepackage{mathtools}
\usepackage{commath}
\mathtoolsset{showonlyrefs=false}
\allowdisplaybreaks 
\usepackage{placeins}

\usepackage{xspace}
\usepackage[T1]{fontenc}
\usepackage[utf8]{inputenc}
\usepackage[dvipsnames]
{xcolor} 
\usepackage{xurl}
\usepackage{microtype}
\usepackage{hyperref,url}
\usepackage{cleveref} 

\usepackage{wrapfig}
\usepackage{pbox} 
\usepackage{algorithm,algorithmic}

\usepackage{enumitem}
\usepackage[normalem]{ulem}

\newcommand{\seiyun}[1]{{\color[rgb]{0.10,0.30,0.80} \textbf{SY:} #1}}

\newcommand{\kj}[1]{{\color{RedOrange}[#1]}}

{\color{kjsaved}}%

{\color{kjsavedenvkj}}%

\newenvironment{revised}
{\colorlet{kjsavedrevised}{.}\color{MidnightBlue}}%
{\color{kjsavedrevised}}%
\usepackage{framed}
\usepackage[most]{tcolorbox}
\definecolor{kjgray}{rgb}{.7,.7,.7}

\newtheoremstyle{kjstyle}
{1ex} 
{\topsep} 
{\itshape} 
{} 
{\bfseries} 
{.} 
{.5em} 
{} 

\newtheoremstyle{kjstyle2}
{.0em} 
{.0em} 
{\itshape} 
{} 
{\bfseries} 
{.} 
{.5em} 
{} 

\newtheoremstyle{kjstylenoitalic}
{1ex} 
{\topsep} 
{} 
{} 
{\bfseries} 
{.} 
{.5em} 
{} 

\tcbset{kjboxstyle/.style={title={},breakable,colback=white,enhanced jigsaw,boxrule=1.3pt,sharp corners,colframe=kjgray,boxsep=0pt,coltitle={black},attach title to upper={},left=.8ex,bottom=.4em}}    

\tcbset{kjboxstylec/.style={title={},breakable,colback=white,enhanced jigsaw,boxrule=1.3pt,sharp corners,colframe=kjgray,boxsep=0pt,coltitle={black},attach title to upper={},before skip=1.5ex,left=.8ex,bottom=.4em,top=0.4em,enlarge top by=-0.3em,enlarge bottom by=-0.3em}}    

\newtheorem{theorem}{Theorem}
\newtheorem{lemma}{Lemma}
\newtheorem{corollary}{Corollary}
\newtheorem{proposition}{Proposition}

\usepackage{mdframed}
\usepackage{lipsum}
\definecolor{kjgray}{rgb}{.7,.7,.7}
\makeatletter



\makeatletter
\renewcommand{\paragraph}{%
  \@startsection{paragraph}{4}%
  {\z@}{0.50ex \@plus 1ex \@minus .2ex}{-1em}%
  {\normalfont\normalsize\bfseries}%
}
\makeatother

\usepackage{tabularx} 
\newcolumntype{P}[1]{>{\centering\arraybackslash}p{#1}}
\newcolumntype{M}[1]{>{\centering\arraybackslash}m{#1}}

\def\ddefloop#1{\ifx\ddefloop#1\else\ddef{#1}\expandafter\ddefloop\fi}

\def\ddef#1{\expandafter\def\csname #1#1\endcsname{\ensuremath{\mathbb{#1}}}}
\ddefloop ABCDFGHIJKLMNORSTUWXYZ\ddefloop 

\def\ddef#1{\expandafter\def\csname c#1\endcsname{\ensuremath{\mathcal{#1}}}}
\ddefloop ABCDEFGHIJKLMNOPQRSTUVWXYZ\ddefloop

\def\ddef#1{\expandafter\def\csname b#1\endcsname{\ensuremath{{\mathbf{#1}}}}}
\ddefloop ABCDEFGHIJKLMNOPQRSTUVWXYZ\ddefloop  
\def\ddef#1{\expandafter\def\csname b#1\endcsname{\ensuremath{{\boldsymbol{#1}}}}}
\ddefloop abcdeghijklmnopqrtsuvwxyz\ddefloop  

\def\ddef#1{\expandafter\def\csname h#1\endcsname{\ensuremath{\hat{#1}}}}
\ddefloop ABCDEFGHIJKLMNOPQRSTUVWXYZabcdefghijklmnopqrsuvwxyz\ddefloop 
\def\ddef#1{\expandafter\def\csname hc#1\endcsname{\ensuremath{\hat{\mathcal{#1}}}}}
\ddefloop ABCDEFGHIJKLMNOPQRSTUVWXYZ\ddefloop
\def\ddef#1{\expandafter\def\csname hb#1\endcsname{\ensuremath{\hat{\mathbf{#1}}}}}
\ddefloop ABCDEFGHIJKLMNOPQRSTUVWXYZ\ddefloop %
\def\ddef#1{\expandafter\def\csname hb#1\endcsname{\ensuremath{\hat{\boldsymbol{#1}}}}}
\ddefloop abcdefghijklmnopqrstuvwxyz\ddefloop %

\def\ddef#1{\expandafter\def\csname t#1\endcsname{\ensuremath{\tilde{#1}}}}
\ddefloop ABCDEFGHIJKLMNOPQRSTUVWXYZabcdefgijklmnpqtsuvwxyz\ddefloop 
\def\ddef#1{\expandafter\def\csname tc#1\endcsname{\ensuremath{\tilde{\mathcal{#1}}}}}
\ddefloop ABCDEFGHIJKLMNOPQRSTUVWXYZ\ddefloop
\def\ddef#1{\expandafter\def\csname tb#1\endcsname{\ensuremath{\tilde{\mathbf{#1}}}}}
\ddefloop ABCDEFGHIJKLMNOPQRSTUVWXYZ\ddefloop
\def\ddef#1{\expandafter\def\csname tb#1\endcsname{\ensuremath{\tilde{\boldsymbol{#1}}}}}
\ddefloop abcdefghijklmnopqrstuvwxyz\ddefloop %

\def\ddef#1{\expandafter\def\csname bar#1\endcsname{\ensuremath{\bar{#1}}}}
\ddefloop ABCDEFGHIJKLMNOPQRSTUVWXYZabcdefghijklmnopqrtsuvwxyz\ddefloop
\def\ddef#1{\expandafter\def\csname barc#1\endcsname{\ensuremath{\bar{\mathcal{#1}}}}}
\ddefloop ABCDEFGHIJKLMNOPQRSTUVWXYZ\ddefloop
\def\ddef#1{\expandafter\def\csname barb#1\endcsname{\ensuremath{\bar{\mathbf{#1}}}}}
\ddefloop ABCDEFGHIJKLMNOPQRSTUVWXYZ\ddefloop
\def\ddef#1{\expandafter\def\csname barb#1\endcsname{\ensuremath{\bar{\boldsymbol{#1}}}}}
\ddefloop abcdefghijklmnopqrstuvwxyz\ddefloop %

\def\ddef#1{\expandafter\def\csname war#1\endcsname{\ensuremath{\overline{#1}}}}
\ddefloop ABCDEFGHIJKLMNOPQRSTUVWXYZabcdefghijklmnopqrtsuvwxyz\ddefloop
\def\ddef#1{\expandafter\def\csname warc#1\endcsname{\ensuremath{\overline{\mathcal{#1}}}}}
\ddefloop ABCDEFGHIJKLMNOPQRSTUVWXYZ\ddefloop
\def\ddef#1{\expandafter\def\csname warb#1\endcsname{\ensuremath{\overline{\mathbf{#1}}}}}
\ddefloop ABCDEFGHIJKLMNOPQRSTUVWXYZ\ddefloop
\def\ddef#1{\expandafter\def\csname warb#1\endcsname{\ensuremath{\overline{\boldsymbol{#1}}}}}
\ddefloop abcdefghijklmnopqrstuvwxyz\ddefloop %

\def\eps{\varepsilon}
\def\epsilon{\varepsilon}

\usepackage{pgffor}
\def\greeksymbols{alpha,beta,gamma,gam,delta,dt,eps,epsilon,zeta,eta,theta,th,iota,kappa,kap,lambda,lam,mu,nu,xi,pi,rho,sigma,sig,tau,phi,chi,psi,omega,om,Gamma,Gam,Delta,Dt,Theta,Th,Lambda,Lam,Pi,Sigma,Sig,Phi,Psi,Omega,Om}
\def\greeksymbolsnoeta{alpha,beta,gamma,gam,delta,dt,eps,epsilon,zeta,theta,th,iota,kappa,kap,lambda,lam,mu,nu,xi,pi,rho,sigma,sig,tau,phi,chi,psi,omega,om,Gamma,Gam,Delta,Dt,Theta,Th,Lambda,Lam,Pi,Sigma,Sig,Phi,Psi,Omega,Om} 

\foreach \x in \greeksymbolsnoeta{\expandafter\xdef\csname b\x\endcsname{\noexpand\ensuremath{\noexpand\boldsymbol{\csname \x\endcsname}}}}

\foreach \x in \greeksymbols{\expandafter\xdef\csname h\x\endcsname{\noexpand\ensuremath{\noexpand\hat{\csname \x\endcsname}}}}
\foreach \x in \greeksymbolsnoeta{\expandafter\xdef\csname hb\x\endcsname{\noexpand\ensuremath{\noexpand\hat{\noexpand\boldsymbol{ \csname \x\endcsname}}}}}

\foreach \x in \greeksymbols{\expandafter\xdef\csname bar\x\endcsname{\noexpand\ensuremath{\noexpand\bar{\csname \x\endcsname}}}}
\foreach \x in \greeksymbolsnoeta{%
\expandafter\xdef\csname barb\x\endcsname{\noexpand\ensuremath{\noexpand\bar{\noexpand\boldsymbol{ \csname \x\endcsname}}}}
}

\foreach \x in \greeksymbols{\expandafter\xdef\csname t\x\endcsname{\noexpand\ensuremath{\noexpand\tilde{\csname \x\endcsname}}}}
\foreach \x in \greeksymbolsnoeta{\expandafter\xdef\csname tb\x\endcsname{\noexpand\ensuremath{\noexpand\tilde{\noexpand\boldsymbol{ \csname \x\endcsname}}}}}

\providecommand{\normz}[2][-1]{
\ensuremath{\mathinner{
\ifthenelse{\equal{#1}{-1}}{ 
\!\left\|#2\right\|}{}
\ifthenelse{\equal{#1}{0}}{ 
\|#2\|}{}
\ifthenelse{\equal{#1}{1}}{ 
\bigl\|#2\bigr\|}{}
\ifthenelse{\equal{#1}{2}}{ 
\Bigl\|#2\Bigr\|}{}
\ifthenelse{\equal{#1}{3}}{ 
\biggl\|#2\biggr\|}{}
\ifthenelse{\equal{#1}{4}}{ 
\Biggl\|#2\Biggr\|}{}
}} 
}  

\providecommand{\floor}[2][-1]{
\ensuremath{\mathinner{
\ifthenelse{\equal{#1}{-1}}{ 
\!\left\lfloor#2\right\rfloor}{}
\ifthenelse{\equal{#1}{0}}{ 
\lfloor#2\rfloor}{}
\ifthenelse{\equal{#1}{1}}{ 
\!\bigl\lfloor#2\bigr\rfloor}{}
\ifthenelse{\equal{#1}{2}}{ 
\!\Bigl\lfloor#2\Bigr\rfloor}{}
\ifthenelse{\equal{#1}{3}}{ 
\!\biggl\lfloor#2\biggr\rfloor}{}
\ifthenelse{\equal{#1}{4}}{ 
\!\Biggl\lfloor#2\Biggr\rfloor}{}
}} 
}

\providecommand{\ceil}[2][-1]{
\ensuremath{\mathinner{
\ifthenelse{\equal{#1}{-1}}{ 
\!\left\lceil#2\right\rceil}{}
\ifthenelse{\equal{#1}{0}}{ 
\lceil#2\rceil}{}
\ifthenelse{\equal{#1}{1}}{ 
\!\bigl\lceil#2\bigr\rceil}{}
\ifthenelse{\equal{#1}{2}}{ 
\!\Bigl\lceil#2\Bigr\rceil}{}
\ifthenelse{\equal{#1}{3}}{ 
\!\biggl\lceil#2\biggr\rceil}{}
\ifthenelse{\equal{#1}{4}}{ 
\!\Biggl\lceil#2\Biggr\rceil}{}
}} 
}

\definecolor{mygrn}{rgb}{0,.8,0}
\definecolor{myred}{rgb}{.8,0,0}

\DeclareMathOperator{\supp}{{\mathrm{supp}}}

\def\1{\mathds{1}}

\DeclarePairedDelimiterX{\inp}[2]{\langle}{\rangle}{#1, #2}

\newcommand\declareop[3]{%
  \newcommand#1{%
    \mskip\muexpr\medmuskip*#2\relax
    {#3}%
    \mskip\muexpr\medmuskip*#2\relax
}}
\declareop\capprox{1}{{\sr{\const}{\approx}}} 
\declareop\logapprox{1}{{\sr{\mathrm{log}}{\approx}}} 

\def\Bin{\mathrm{Bin}}

\def\const{\mathsf{const}}

\usepackage{pifont}
\newcommand{\sr}{\stackrel}

\makeatletter
\newcommand{\vast}{\bBigg@{3}}
\newcommand{\Vast}{\bBigg@{4}}
\makeatother

\def\calE{{\mathcal{E}}}

\newcommand{\DTV}{D_{\mathrm{TV}}}

\newenvironment{talign*}
 {\csname align*\endcsname}
 {\endalign}

\def\chrulefill{\leavevmode\leaders\hrule height 0.7ex depth \dimexpr0.4pt-0.7ex\hfill\kern0pt}

\renewcommand{\cite}{\citep}

\usepackage{hyperref}
\usepackage{url}
\usepackage{booktabs}
\usepackage{flafter}
\usepackage{graphicx}
\usepackage{subcaption}
\usepackage{svg}
\usepackage{nicefrac}
\usepackage{microtype}
\usepackage{multirow}
\usepackage{placeins}

\newif\ifFINAL
\FINALtrue 

\ifFINAL
   
  \def\introguide#1{}

  \renewcommand{\kj}[1]{{}}
  \renewcommand{\seiyun}[1]{{}}
\else
  \usepackage{transparent}
  \usepackage[inline]{showlabels}
  \usepackage{rotating}
  \renewcommand{\showlabelfont}%
  {\transparent{0.8}\scriptsize\bf\slshape\color{Lavender}}
\fi

\title{Uniform Race: Parameter-Free Approximate \\ Rejection Sampling}

\author{Seiyun Shin \\
Graduate School of Artificial Intelligence \\
Pohang University of Science and Technology \\
Pohang, 37673, South Korea \\
\texttt{seiyun923@gmail.com}
\And
Juhyeong Pang \\
Department of Computer Science \\
University of Wisconsin--Madison \\
Madison, WI 53706, USA \\
\texttt{pang35@wisc.edu} \\
\AND
Kwang-Sung Jun \\
Graduate School of Artificial Intelligence \\
Department of Computer Science and Engineering \\
Pohang University of Science and Technology \\
Pohang, 37673, South Korea \\
\texttt{kwangsungjun@postech.ac.kr}
}

\iclrpreprint

\begin{document}

\maketitle

\begin{abstract}
We study approximate sampling:
given $N$ independent samples from a proposal distribution $\mu$, the goal is to select one whose distribution is close to a target $\pi$ specified only up to a normalizing constant.
The seminal work of~\citet{block2023} provides finite budget error bounds for approximate rejection sampling (RS) as a function of the algorithm's acceptance threshold $M$.
The threshold $M$ giving the smallest bound, however, depends on properties of $(\pi, \mu)$ that are typically unavailable from the observed sample.
This raises a natural question:
\emph{Can one attain the best RS guarantee without taking $M$ as input?}
We answer this question affirmatively by proposing a parameter-free sampling algorithm called \emph{uniform race} (UR), based on importance weights, which are ratios of target to proposal probabilities (or densities).
It divides each observed weight by an independent uniform random variable to form a score and returns the candidate with the largest score.
For every budget $N$, its total variation error satisfies the RS upper bound for every fixed threshold $M$ simultaneously, thereby achieving the best such bound in hindsight.
We also characterize its output distribution conditional on the largest score, identifying when it is exactly the target $\pi$.
Uniform race has no larger total variation error than
a natural budget-calibrated RS derived from~\citet{rohatgi2025} and sampling importance resampling (SIR).
In particular, we exhibit instances where UR's error is exponentially smaller in $N$ than that of either baseline.
Furthermore, we establish conditions under which attaining this RS guarantee for every $(\pi, \mu)$ uniquely determines the selection probabilities as those of UR.
Finally, test-time scaling experiments on LLM math-reasoning tasks corroborate the theoretical comparisons and demonstrate that UR remains competitive in ground-truth accuracy without requiring threshold selection.
\end{abstract}

\section{Introduction}
\label{sec:intro}


Inference-time computation often involves generating a finite number of samples from a proposal distribution and selecting one to approximate a target distribution.
Inspired by this, we study the following approximate sampling problem:
given $N$ independent and identically distributed (i.i.d.) samples $Y_1, \ldots, Y_N$ drawn from a proposal distribution $\mu$, select one whose marginal distribution is close to a target $\pi$ that is specified only up to a normalizing constant.
To describe how the target differs from the proposal, define the importance weight as $w := \frac{\dd\pi}{\dd\mu}$.
For discrete distributions, this is simply $w(y) = \pi(y)/\mu(y)$, so $w(y)$ measures how much the proposal probability at $y$ must be reweighted to obtain the target.
We assume that the sampler does not evaluate this weight directly.
Instead, at each sampled point $Y_i$ for $i = 1, \ldots, N$, the sampler only has access to the \emph{unnormalized} weight $w_u(Y_i) := Zw(Y_i)$, where $Z > 0$ is an \emph{unknown} normalizing constant.
Equivalently, $w(Y_i) = w_u(Y_i)/Z$, but neither $Z$ nor $w(Y_i)$ is available to the sampler.
We assume that the proposal covers the target, written $\pi \ll \mu$: every set with zero proposal probability also has zero target probability.
Throughout the main text, $0 < w(Y) < \infty$ almost surely under $\mu$;
we consider zero weights separately in Appendix~\ref{app:zero-weights}.

We refer to the sampled points as \emph{candidates}.
Let $\widehat{Y}$ be the returned candidate and $P_N := \mathcal{L}(\widehat{Y})$ its marginal distribution, where $\mathcal{L}(X)$ denotes the law of $X$.
Our goal is to make $P_N$ close to $\pi$, measured by total variation (TV) distance,
\begin{equation}
    \DTV(P_N, \pi) := \sup_D |P_N(D) - \pi(D)|,
    \label{eq:objective}
\end{equation}
where $D$ ranges over measurable sets.
We note that the distribution $P_N$ averages over both the candidates and the sampler's additional randomness.

A useful baseline for this objective is classical rejection sampling (RS), which produces an exact sample from $\pi$ if a valid acceptance threshold $M$ is available.
For the threshold to be valid, it must satisfy $w(y) \leq M$ over all possible proposals.
Given such a threshold, RS repeatedly proposes $Y \sim \mu$ and accepts with probability $w(Y)/M$ until the first acceptance.
The returned sample follows $\pi$ since the probability of proposing $y$ and accepting it is proportional to $\mu(y)w(y) = \pi(y)$ in the discrete case; the same calculation holds for densities.
Notably, this does not require prior knowledge of the normalizing constant $Z$.
If only the unnormalized weights $w_u = Zw$ are available, the same acceptance test can be implemented using the corresponding upper bound $ZM$:
\(
    w_u(Y)/ZM = w(Y)/M.
\)
Hence RS need not compute $Z$, but it still relies on a valid upper bound, called an \emph{envelope}, on the importance weight, and may continue sampling until an acceptance occurs.

This exposes two desiderata in our setting:
the sampler should respect a prescribed proposal budget $N$, and it should not require prior knowledge of a problem-dependent envelope.
One way to accommodate both is to choose an arbitrary threshold $M > 0$, accept each proposal with probability (in normalized units)
\(
    \min\{w(Y)/M, 1\},
\)
and return an observed candidate if all $N$ proposals are rejected.
With the observed unnormalized weights, this still requires supplying the corresponding threshold $ZM$ on the unnormalized scale.

The threshold now controls two sources of error.
First, if $M$ falls below some importance weights, the cap at one changes the distribution of accepted samples:
weights above $M$ are effectively replaced by $M$.
This is the \emph{clipping} effect.
To quantify it, for $M > 0$, define
\begin{equation}
    \calE_M(\pi, \mu) := \E_\mu[(w(Y) - M)_+], \quad
    A_M(\pi, \mu) := 1 - \calE_M(\pi, \mu) = \E_\mu[\min\{w(Y), M\}].
    \label{eq:truncation}
\end{equation}
Here $(x)_+ := \max\{x, 0\}$;
$\calE_M$ is the excess weight removed by clipping at $M$, while $A_M$ is the retained clipped mass.
Denote $\calE_M(\pi,\mu)$ by $\calE_M$ when $(\pi, \mu)$ is fixed.
For $M \geq 1$, $\calE_M$ is the standard $E_M$-divergence~\citep{liu2016e_};
see also~\citet[Example~4]{block2023}.
We use the same clipping definition for $0 < M < 1$.
Notice that $\calE_M$ bounds the TV error of the accepted distribution; it is nonincreasing in $M$ and tends to zero as $M \to \infty$ for every fixed $(\pi, \mu)$.
Second, increasing $M$ lowers the acceptance probability and makes it more likely that all $N$ proposals are rejected.

This tradeoff is central to the approximate RS using finite budget.
Building on the analysis from~\citet{block2023},
\citet[Lemma~D.4]{huang2025} analyze the clipped-threshold rule above and show that, for a fixed threshold $M$,
\begin{equation}
    \DTV(P_{N, M}^{\mathrm{RS}}, \pi)
    \lesssim \underbrace{2\calE_M(\pi, \mu)}_{\text{clipping}} + \underbrace{\exp(-N/M)}_{\text{budget exhaustion}},
    \label{eq:block23}
\end{equation}
where $P_{N, M}^{\mathrm{RS}}$ denotes the marginal output distribution of this finite budget procedure.
The two terms favor opposite choices of $M$:
a larger threshold reduces clipping, but makes exhausting the proposal budget more likely.
Consequently, the threshold giving the best guarantee hinges on the unknown target--proposal pair, including importance weights that may never appear among the $N$ observed candidates.
This motivates the central question of this paper:
\begin{center}
    \emph{Can one attain the best fixed threshold RS guarantee without taking $M$ as input?}
\end{center}
\begin{table}[t] 
    \centering
    \small
    \setlength{\tabcolsep}{3pt}
    \renewcommand{\arraystretch}{1.17}
    \begin{tabular*}{\linewidth}{@{\extracolsep{\fill}}lccc@{}}
        \toprule
        Sampling rule
        & \shortstack{Sharp RS\\guarantee}
        & \shortstack{Extra\\input}
        & \shortstack{TV error\\as $N$ grows} \\
        \midrule
        SIR
        & $\times$
        & ---
        & $\Theta(N^{-1})$ \\
        RS with $M=C_\infty^\pi$
        & \checkmark
        & Exact envelope
        & $\Theta(3^{-N})$ \\
        Budget-calibrated RS
        & $\times$
        & Calibration parameter $\delta$
        & $\Omega(2^{-N/2})$ \\
        \textbf{UR}
        & \checkmark
        & ---
        & $\boldsymbol{\Theta(3^{-N})}$ \\
        \bottomrule
    \end{tabular*}
    \caption{Comparison of sampling rules.
    TV orders are for $\mu = (1/2, 1/2)$, $\pi =(1/4, 3/4)$ and odd total budgets $N \geq 3$.
    The calibrated RS entry is a lower bound after optimizing $\delta$ before sampling.
    }
    \label{tab:comparison}
\end{table}

\paragraph{Uniform race.}
To answer the question, we propose a parameter-free sampling algorithm, called \emph{uniform race} (UR).
It divides each observed weight by an independent uniform random variable
on $(0,1)$ and returns the candidate with the largest resulting score:
\begin{equation}
    \widehat{I}_U := \argmax_{i \in [N]} \frac{w_u(Y_i)}{U_i}, \qquad
    \widehat{Y}_U := Y_{\widehat{I}_U}, \qquad
    U_i \overset{\mathrm{i.i.d.}}{\sim} \Unif(0,1),
    \label{eq:algorithm}
\end{equation}
where $[N] := \{1, \ldots, N\}$ and the uniforms are independent of the proposals.
Since $Z$ is common to all candidates, it cancels from the ranking.
By \emph{parameter-free}, we mean that the sampler does not take an acceptance threshold as input, yet its guarantee adapts automatically to the best threshold in hindsight.
It also does not require an estimate of the normalization constant $Z$; the proposal budget and any parameters that define the target remain inputs.
Specifically, our main contributions are:

\textbf{Main result (informal; Theorem~\ref{thm:main}).}
Let $P_N^U := \mathcal{L}(\widehat{Y}_U)$ be UR's marginal output distribution.
For every fixed $N \geq 1$,
\begin{equation}
    \DTV(P_N^U, \pi) \leq \inf_{M > 0}\left\{2\calE_M(\pi, \mu) + \exp(-N/M)\right\}.
    \label{eq:oracle-informal}
\end{equation}
The key difference from~\eqref{eq:block23} is the \emph{infimum over $M$}:
UR satisfies the bound for every threshold simultaneously, so choosing a threshold is unnecessary.
Theorem~\ref{thm:main} gives a sharper version of this guarantee.
Beyond this guarantee, we establish the following structural and comparative results.
\begin{itemize}[leftmargin=1em, nosep]
    \item \textbf{Sharp guarantees under bounded weights.}
    Let $C_\infty^\pi := \operatorname{ess\,sup}_{\mu} w$ denote the smallest almost-sure upper bound on the importance weight.
    When it is finite, we show that UR's TV error is at most $(1 - 1/C_\infty^\pi)^N$, even though the sampler is not given this bound.
    This guarantee matches the smallest possible worst-case error over each bounded weight class;
    see Section~\ref{sec:oracle} and Corollary~\ref{cor:rates}.

    \item \textbf{Rationale behind the guarantee.}
    Using the normalized scores $S_i := w(Y_i)/U_i$ only for analysis, we characterize the output conditional on their maximum.
    In particular, when the weights are bounded, conditioning on $\max_i S_i > C_\infty^\pi$ gives exactly the target distribution $\pi$, although UR need not know this bound.
    Section~\ref{sec:oracle} establishes the general identity and uses it to prove Theorem~\ref{thm:main}.

    \item \textbf{Comparison with budget-calibrated RS.}
    We next ask whether or not estimating the normalizer and choosing a threshold from the available budget can recover UR's accuracy.
    In particular, a direct budget-calibrated version of~\citet[Algorithm~5]{rohatgi2025} uses pilot proposals for this estimate, sets a calibration parameter $\delta \in (0,1)$, and then applies RS to fresh candidates.
    Theorem~\ref{thm:pilot-rs-fdom} shows that UR has no greater TV distance from $\pi$ at the same budget, and even when UR uses only the fresh sampling portion of that budget.
    Moreover, on a fixed binary pair, the baseline-to-UR TV error ratio grows exponentially with $N$, even after optimizing prior to sampling.
    See Section~\ref{sec:rohatgi}.
    
    \item \textbf{Comparison with SIR.}
    Rather than setting an acceptance threshold, sampling importance resampling (SIR)~\citep{rubin1987comment, smith1992bayesian} selects candidate $i$ with probability $w_u(Y_i)/\sum_j w_u(Y_j)$.
    We show that UR has no greater TV error for every $(\pi, \mu)$ and fixed $N$, and this also extends to every convex $f$-divergence.
    The gap can be substantial: on a fixed binary pair, UR's error decreases exponentially in $N$, whereas SIR's decreases as $1/N$.
    See Section~\ref{sec:comparison}.

    \item \textbf{Uniqueness of selection probabilities.}
    We also ask whether the universal RS guarantee allows other selection probabilities.
    Among rules that exploit only relative observed weights, apply the same rule to every target--proposal pair, and treat candidate order symmetrically, the bounded weight guarantee forces each candidate's selection probability to equal UR's.
    See Section~\ref{sec:uniqueness}.
\end{itemize}
Table~\ref{tab:comparison} highlights the main distinction:
UR alone combines the sharp RS guarantee with no envelope or additional calibration input, while matching the TV error of RS supplied with the exact envelope on the displayed binary pair.
Overall, UR removes threshold calibration while ensuring the best fixed threshold RS guarantee in hindsight.
Numerical experiments on LLM tasks corroborate these theoretical comparisons.
\section{RS guarantees without a threshold}
\label{sec:oracle}
We now state the sharper guarantee underlying~\eqref{eq:oracle-informal}.
With $\calE_M$ and $A_M$ defined in~\eqref{eq:truncation}, the quantity $A_M/M$ is the acceptance probability of one proposal under the fixed-threshold RS rule.
Thus $(1 - A_M/M)^N$ is the probability that all $N$ proposals are rejected.

\begin{theorem}[RS without threshold selection]
\label{thm:main}
Under the assumptions of Section~\ref{sec:intro}, for every fixed $N \geq 1$,
\vspace{-0.6em}
\begin{equation}
    \DTV(P_N^U, \pi)
    \leq \inf_{M > 0}
    \left\{\calE_M + \left(1 - \frac{A_M}{M}\right)^N\right\}.
    \label{eq:oracle}
\vspace{-0.4em}
\end{equation}
\end{theorem}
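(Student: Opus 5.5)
The plan is to work with the normalized scores $S_i := w(Y_i)/U_i$, which give the same ranking as UR since $Z$ cancels, and to fix an arbitrary $M>0$. Since $\DTV(P_N^U,\pi)=\sup_D\{\pi(D)-P_N^U(D)\}$, it suffices to lower bound $P_N^U(D)$. I would lower bound it by the restricted measure $Q_M(D):=\Pr(\widehat Y_U\in D,\ \max_i S_i>M)$. The target inequality is
\begin{equation*}
Q_M(D)\;\ge\;\bigl(1-F(M)^N\bigr)\,\E_\mu\bigl[\1_D(Y)\min\{w(Y),M\}\bigr],
\qquad F(s):=1-\frac{A_s}{s}.
\end{equation*}
Here $F(s)=\Pr(S_1\le s)$, because $\Pr(S_1>s)=\E_\mu[\min\{w/s,1\}]=A_s/s$. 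Once this holds, split $\pi(D)=\E_\mu[\1_D\min\{w,M\}]+\E_\mu[\1_D(w-M)_+]$. This gives $\pi(D)-P_N^U(D)\le \calE_M+F(M)^N\E_\mu[\1_D\min\{w,M\}]\le \calE_M+F(M)^N$, using $A_M\le1$. Taking the supremum over $D$ and then the infimum over $M$ yields \eqref{eq:oracle}.

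First I would compute $Q_M$ explicitly. The joint law of $(Y_i,S_i)$ satisfies $\Pr(Y_i\in D,S_i>s)=\E_\mu[\1_D\min\{w/s,1\}]$, so $S_i$ has sub-density $s^{-2}\E_\mu[\1_D w\1\{w<s\}]$ in $s$. Because $\widehat Y_U$ is the argmax and the scores are i.i.d. (ties have probability zero), Fubini gives
\begin{equation*}
Q_M(D)=\E_\mu\Bigl[\1_D(Y)\,w(Y)\,G\bigl(\max\{M,w(Y)\}\bigr)\Bigr],\qquad
G(t):=\int_t^\infty N s^{-2}F(s)^{N-1}\,\dd s .
\end{equation*}
The target inequality then reduces to a pointwise inequality in the value $w$: we need $w\,G(\max\{M,w\})\ge \min\{w,M\}(1-F(M)^N)$.

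Next I would bound $G$ from below. We have $F'(s)=\E_\mu[w\1\{w<s\}]/s^2\le s^{-2}$, using $\E_\mu[w]=1$. Hence $G(t)\ge\int_t^\infty NF^{N-1}F'\,\dd s=1-F(t)^N$, since $F(\infty)=1$. This settles the case $w\le M$ directly. For $w>M$ we need $w(1-F(w)^N)\ge M(1-F(M)^N)$, i.e.\ that $t\mapsto t(1-F(t)^N)$ is nondecreasing. Writing $x=A_t/t$, we get $t(1-(1-x)^N)=A_t\cdot\frac{1-(1-x)^N}{x}$. Here $A_t$ is nondecreasing, $x=\Pr(S_1>t)$ is nonincreasing in $t$, and $x\mapsto(1-(1-x)^N)/x$ is nonincreasing, so both factors are nondecreasing.

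I expect the main obstacle to be the lower bound on $Q_M$. The naive idea is to couple UR with fixed-threshold RS: the RS rule accepts exactly when $S_i>M$, and the accepted law is the clipped distribution. However, UR's argmax selection among exceedances biases the output toward large $w$, so there is no pointwise dominance given the score value. That is why I pass to the explicit integral formula and this monotonicity argument rather than a coupling. Measure-theoretic details (ties, continuity of the law of $S$, Fubini) should be routine under $0<w<\infty$ $\mu$-a.s.
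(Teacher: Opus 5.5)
Your argument is correct, and it takes a genuinely different route from the paper.

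\textbf{The paper's route.} It conditions on $H_M$ and writes $P_N^U$ as a mixture. It then proves $\DTV(Q_{N,M}^U,\pi)\le\DTV(Q_M^{\mathrm{RS}},\pi)$ (Lemma~\ref{lem:accepted-selection}) by a monotone-density argument. Specifically, $\dd Q_{N,M}^U/\dd\pi$ is nonincreasing in $w$ and $\dd Q_{N,M}^U/\dd Q_M^{\mathrm{RS}}$ is nondecreasing in $w$, and Lemma~\ref{lem:refinement} turns these into the TV comparison.

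\textbf{Your route.} You never compare with RS. Dividing your key inequality by $\Pr(H_M)=1-F(M)^N$ says $Q_{N,M}^U\ge\min\{w,M\}\,\mu$ setwise. So UR's conditional law on $H_M$ contains the same sub-measure of $\pi$, of mass $A_M$, that Lemma~\ref{lem:rs-components} finds inside $Q_M^{\mathrm{RS}}$, and $\DTV(Q_{N,M}^U,\pi)\le 1-A_M=\calE_M$ follows by overlap. Two elementary facts replace the comparison lemma. The first is $F'(s)\le s^{-2}$, which is exactly where $\E_\mu[w]=1$ enters. The second is that $t\mapsto t(1-F(t)^N)=A_t\sum_{k=0}^{N-1}(1-A_t/t)^k$ is nondecreasing. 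Your resulting bound is $\calE_M+A_M\delta_M=(1-\delta_M)\calE_M+\delta_M$, with $\delta_M=(1-A_M/M)^N$. This is exactly what the paper's assembly yields from the $\calE_M$ estimate of Lemma~\ref{lem:rs-components}, before the exact clipping identity.

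\textbf{What each buys.} The paper's route gives a structural fact: UR conditioned on $H_M$ is at least as close to $\pi$ as RS's accepted law. That yields the sharper $(1-\delta_M)\calE_{M/A_M}+\delta_M$ via \eqref{eq:app-clipped-error}, and the same monotone-likelihood-ratio machinery is reused for the SIR and budget-calibrated RS comparisons. Your minorization is more self-contained but must give up the factor $A_M$. The measures $\Pr(\widehat{Y}_U\in\cdot,H_M)$ and $\Pr(H_M)Q_M^{\mathrm{RS}}$ have the same total mass. Setwise domination of the latter would therefore force equal conditional laws, which fails in general. This matches your remark that a direct setwise coupling with RS cannot work.
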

To prove Theorem~\ref{thm:main}, we compare uniform race (UR) with rejection sampling (RS) at a fixed threshold and then optimize that threshold only in the analysis.
To this end, fix $N\geq1$ under the assumptions of Section~\ref{sec:intro}, and use the same candidates and uniforms for both procedures.
Recall the normalized scores $S_i := \tfrac{w(Y_i)}{U_i}$ and their maximum $S_{(N)} := \max_{i \in [N]} S_i$, and define the threshold-crossing event $H_M := \{S_{(N)} \geq M\}$.
By construction, $S_i \geq M$ exactly when $U_i \leq \min\{w(Y_i)/M, 1\}$.
Hence $H_M$ is exactly the event that RS accepts at least one proposal when the same uniforms are used; on $H_M$, RS returns the first accepted candidate.
UR, by contrast, always returns the candidate with the largest score.
We denote their conditional distributions on $H_M$ by
\begin{equation}
    Q_M^{\mathrm{RS}}(\dd y) := \frac{\min\{w(y), M\}}{A_M}\mu(\dd y), \qquad
    Q_{N, M}^U := \mathcal{L}(\widehat{Y}_U \mid H_M).
    \label{eq:clipped-law-definition}
\vspace{-0.5em} 
\end{equation}
\begin{proof}[Proof of Theorem~\ref{thm:main}]
The proof has two key ingredients.
First, fix $M > 0$ and let $\delta_M := \Pr(H_M^c)$.
Lemma~\ref{lem:rs-components} yields $\delta_M = (1 - A_M/M)^N$ and $\DTV(Q_M^{\mathrm{RS}}, \pi) \leq \calE_M$.
Second, Lemma~\ref{lem:accepted-selection} then shows that selecting the largest accepted score has no greater conditional error than returning the first accepted candidate.
When $\delta_M > 0$, write
\(
    F_{N, M}^U := \mathcal{L}(\widehat{Y}_U\mid H_M^c).
\)
By the law of total probability,
\(
    P_N^U = (1 - \delta_M)Q_{N, M}^U + \delta_M F_{N, M}^U.
\)
With this decomposition, we obtain:
\begin{align}
    \DTV(P_N^U, \pi) &\stackrel{\mathrm{(i)}}{\leq}
    (1 - \delta_M)\DTV(Q_{N, M}^U, \pi) + \delta_M\DTV(F_{N, M}^U, \pi) \notag \\
    &\stackrel{\mathrm{(ii)}}{\leq}
    (1 - \delta_M)\DTV(Q_M^{\mathrm{RS}}, \pi) + \delta_M \notag \\
    &\stackrel{\mathrm{(iii)}}{\leq} \calE_M + \left(1 - \frac{A_M}{M}\right)^N,
    \label{eq:oracle-assembly}
\end{align}
where (i) follows from the convexity of total variation;
(ii) follows from Lemma~\ref{lem:accepted-selection} and $\DTV\leq 1$;
and (iii) follows from Lemma~\ref{lem:rs-components} and the definition of $\delta_M$.
If $\delta_M = 0$, then $P_N^U = Q_{N, M}^U$, and the same bound follows directly from Lemmas~\ref{lem:rs-components} and~\ref{lem:accepted-selection}.
Since $P_N^U$ does not depend on the analysis threshold $M$, taking the infimum over $M>0$ completes the proof.
We note that the lemmas are stated in Section~\ref{subsec:two-lemmas}, with full proofs in Appendix~\ref{app:oracle}.
\end{proof}

\subsection{Output distribution at a fixed largest score}
We first characterize the output conditional on the largest score.

\begin{proposition}[Output distribution at a fixed largest score]
\label{prop:score-law}
For every fixed $N \geq 1$, except on a set of score values having probability zero under the distribution of $S_{(N)}$,
\begin{equation}
    \mathcal{L}(\widehat{Y}_U \mid S_{(N)} = s) = \pi(\cdot \mid w < s),
    \label{eq:score-target-restriction}
\end{equation}
whenever $\pi(w < s) > 0$.
Here $\{w < s\} := \{y : w(y) < s\}$.
In particular, if $C_\infty^\pi < \infty$, then
\begin{equation}
    \mathcal{L}(\widehat{Y}_U \mid S_{(N)} = s) = \pi
    \label{eq:winning-score-target}
\end{equation}
for every $s > C_\infty^\pi$, except possibly on the same probability zero set.
\end{proposition}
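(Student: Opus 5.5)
We will compute the joint law of $(\widehat{Y}_U, S_{(N)})$ directly from the independence structure of the pairs $(Y_i, U_i)$. The basic object is the single-candidate score law. For one pair $(Y,U)$ with $Y\sim\mu$ and $U\sim\Unif(0,1)$ independent, and for $t>0$, we have $\Pr(S\leq t\mid Y=y)=\Pr(U\geq w(y)/t)=(1-w(y)/t)_+$. Hence
\[
G(t):=\Pr(S\leq t)=\E_\mu[(1-w(Y)/t)_+].
\]
For a measurable set $D$ we also need the sub-density in the score variable. Conditional on $Y=y$, the score $S=w(y)/U$ has density $w(y)/s^2$ on $s>w(y)$. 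Therefore
\[
\Pr(Y\in D,\ S\in \dd s)=\frac{1}{s^2}\int_D w(y)\1\{w(y)<s\}\,\mu(\dd y)\,\dd s=\frac{\pi(D\cap\{w<s\})}{s^2}\,\dd s.
\]
This identity is the crux: the factor $w\,\dd\mu$ turns into $\dd\pi$, and the restriction $\{w<s\}$ appears.

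Next we use exchangeability and independence across candidates. Scores are a.s. distinct because each $S_i$ has a density, so there are no ties. We can then write
\[
\Pr(\widehat{Y}_U\in D,\ S_{(N)}\in \dd s)=N\,\Pr(Y_1\in D,\ S_1\in\dd s)\,G(s)^{N-1}=N\,G(s)^{N-1}\frac{\pi(D\cap\{w<s\})}{s^2}\,\dd s.
\]
Taking $D$ equal to the whole space gives the density of $S_{(N)}$, namely $N G(s)^{N-1}\pi(w<s)/s^2$. Dividing yields a regular conditional distribution: for every $s$ with $G(s)>0$ and $\pi(w<s)>0$,
\[
\Pr(\widehat{Y}_U\in D\mid S_{(N)}=s)=\frac{\pi(D\cap\{w<s\})}{\pi(w<s)},
\]
which is \eqref{eq:score-target-restriction}. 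The exceptional set is where the density of $S_{(N)}$ vanishes. It consists of the $s$ with $G(s)=0$ or $\pi(w<s)=0$, together with the usual null-set ambiguity of conditional laws. This set has probability zero under the law of $S_{(N)}$, and this is exactly the caveat stated in the proposition.

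For the second claim, suppose $s>C_\infty^\pi$. Then $w<s$ holds $\mu$-a.s., hence $\pi$-a.s. because $\pi\ll\mu$. So $\pi(\cdot\cap\{w<s\})=\pi$, and $\pi(w<s)=1>0$, which gives \eqref{eq:winning-score-target}.

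The main obstacle is rigor rather than computation. First, we must justify the joint density formula via Fubini, which is fine since all integrands are nonnegative. Second, we must ensure that the formula with $G(s)^{N-1}$ correctly handles the event that the other $N-1$ scores are $<s$ as opposed to $\leq s$. This is harmless because the law of $S$ is atomless when $w>0$ a.s.: $\Pr(S=t)=\Pr(U=w(Y)/t)=0$. To make "conditional on $S_{(N)}=s$" precise, we will verify the defining property $\Pr(\widehat{Y}_U\in D, S_{(N)}\in B)=\int_B \kappa(s,D)\,\Pr(S_{(N)}\in\dd s)$ for Borel sets $B$, using the kernel $\kappa(s,\cdot)=\pi(\cdot\mid w<s)$ obtained above.
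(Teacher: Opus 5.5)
Your proposal is correct and follows the paper's proof essentially step for step. Both arguments use the single-candidate joint law $\1\{w(y)<s\}s^{-2}\pi(\dd y)\,\dd s$ obtained from $w\,\dd\mu=\dd\pi$, the $y$-independent factor $N G(s)^{N-1}$ from independence and atomless scores, the resulting score density $N G(s)^{N-1}\pi(w<s)/s^2$, and division on the set where that density is positive. Your extra requirement $G(s)>0$ is redundant, since $\pi(w<s)>0$ already forces $\mu(w<s)>0$ and hence $G(s)>0$.
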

\begin{proof}
See Appendix~\ref{app:conditional_law}.
The proposition has a simple interpretation.
For a fixed score $s$, states with $w(y) \geq s$ are excluded, while the remaining states follow $\pi$ after renormalization.
If $C_\infty^\pi < \infty$, then conditional on $S_{(N)} > C_\infty^\pi$, the returned candidate has distribution $\pi$.
Finally, we note that the budget affects only the distribution of the largest score, not this conditional target rule.
\end{proof}

\subsection{Acceptance probability and conditional error}
\label{subsec:two-lemmas}
We now return to the two lemmas used to prove Theorem~\ref{thm:main}.
The first describes RS at a fixed threshold, and the second compares its conditional output distribution with that of UR.
\begin{lemma}
\label{lem:rs-components}
For every $M > 0$, $0 < A_M/M \leq 1$ and
\begin{equation}
    \Pr(H_M^c) = \left(1 - \frac{A_M}{M}\right)^N, \qquad
    \DTV(Q_M^{\mathrm{RS}}, \pi) \leq \calE_M.
    \label{eq:clipped}
\end{equation}
\vspace{-1.7em}
\end{lemma}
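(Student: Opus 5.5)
The plan is to verify the three claims of Lemma~\ref{lem:rs-components} in turn. Each reduces to an elementary computation with the i.i.d.\ pairs $(Y_i,U_i)$ and the definitions in~\eqref{eq:truncation}.

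\textbf{The bounds on $A_M/M$.} Since $0<w<\infty$ holds $\mu$-a.s., we have $\min\{w(Y),M\}>0$ a.s. Hence $A_M=\E_\mu[\min\{w(Y),M\}]>0$. Also $\min\{w,M\}\le M$ pointwise, which gives $A_M\le M$. Therefore $0<A_M/M\le 1$.

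\textbf{The formula for $\Pr(H_M^c)$.} We have $S_i\ge M$ iff $U_i\le w(Y_i)/M$. Up to the null event $\{U_i=w(Y_i)/M\}$, this is the same as $U_i\le \min\{w(Y_i)/M,1\}$, because $U_i<1$ a.s. Conditioning on $Y_i$ and using independence of $U_i$ from $Y_i$ gives
\[
\Pr(S_i\ge M)=\E_\mu\bigl[\min\{w(Y)/M,1\}\bigr]=A_M/M .
\]
The pairs $(Y_i,U_i)$ are i.i.d., so the events $\{S_i<M\}$ are independent. Their intersection $H_M^c$ therefore has probability $(1-A_M/M)^N$.

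\textbf{The TV bound.} Both $Q_M^{\mathrm{RS}}$ and $\pi$ are absolutely continuous with respect to $\mu$, with densities $g:=\min\{w,M\}/A_M$ and $w$. I would use the representation
\[
\DTV(Q_M^{\mathrm{RS}},\pi)=\E_\mu[(w-g)_+].
\]
On $\{w\le M\}$ we have $w-g=w(1-1/A_M)\le 0$, because $A_M\le \E_\mu[w]=1$. On $\{w>M\}$ we have $w-g=w-M/A_M\le w-M$, again using $A_M\le 1$. Combining the two cases, $(w-g)_+\le (w-M)_+$ pointwise. Taking expectations gives $\DTV\le \E_\mu[(w-M)_+]=\calE_M$.

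The only step needing care is this last one, specifically handling the renormalization by $A_M$. The key observation is that $A_M\le 1$, so renormalizing only shrinks $w$ on the unclipped region and never pushes the positive part of $w-g$ above the clipping excess. The same reasoning works for $M<1$, since it uses only $A_M\le 1$ and never $M\ge 1$.
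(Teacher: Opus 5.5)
Your proof is correct. The acceptance-probability part matches the paper's argument. Your caveat about the null event $\{U_i = w(Y_i)/M\}$ is unnecessary: since $U_i<1$, the events $\{S_i\ge M\}$ and $\{U_i\le\min\{w(Y_i)/M,1\}\}$ coincide exactly.

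For the TV bound you take a slightly different route. The paper argues by domination: $A_M Q_M^{\mathrm{RS}}(\dd y)=\min\{w(y),M\}\mu(\dd y)\le\pi(\dd y)$. Hence $\pi=A_M Q_M^{\mathrm{RS}}+(1-A_M)R$ for some probability measure $R$, and so $\DTV(Q_M^{\mathrm{RS}},\pi)\le 1-A_M=\calE_M$ with no case analysis. You instead compare the two $\mu$-densities pointwise on $\{w\le M\}$ and $\{w>M\}$. This costs a case split, but if you do not relax $M/A_M$ to $M$, it shows $(w-g)_+=(w-M/A_M)_+$ everywhere. That gives the exact value $\DTV(Q_M^{\mathrm{RS}},\pi)=\calE_{M/A_M}$, which is precisely the sharpened identity the paper records separately in \eqref{eq:app-clipped-error}; the mixture argument yields only the bound. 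Both routes rest on the same fact, $\min\{w,M\}\le w$, equivalently $A_M\le\E_\mu[w]=1$.

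You also rightly treat $Q_M^{\mathrm{RS}}$ as the explicitly defined measure. The paper's remark that RS's first accepted candidate has this law is not part of the lemma as stated.
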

\begin{proof}
One proposal is accepted with probability $\E_\mu[\min\{w(Y)/M, 1\}] = A_M/M$, so independence gives the probability of no acceptance.
Conditioning a proposal on acceptance yields $Q_M^{\mathrm{RS}}$; Appendix~\ref{app:oracle} verifies that the first accepted candidate has this same distribution.
To bound its error, observe that
\(
    A_M Q_M^{\mathrm{RS}}(\dd y) = \min\{w(y), M\}\mu(\dd y) \leq \pi(\dd y).
\)
The residual measure has mass $1 - A_M$, so $\pi$ is a mixture containing $Q_M^{\mathrm{RS}}$ with weight $A_M$.
Consequently $\DTV(Q_M^{\mathrm{RS}}, \pi) \leq 1 - A_M = \calE_M$; when $A_M = 1$, the two distributions agree.
\end{proof}

\begin{lemma}
\label{lem:accepted-selection}
For every $N \geq 1$ and $M > 0$,
\(
    \DTV(Q_{N, M}^U, \pi) \leq \DTV(Q_M^{\mathrm{RS}}, \pi).
\vspace{-1em}
\)
\end{lemma}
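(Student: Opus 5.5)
The plan is to compare the two conditional laws through their densities with respect to $\pi$ and show that $Q_{N,M}^U$ dominates $Q_M^{\mathrm{RS}}$ pointwise on the clipped region $\{w \geq M\}$. Both laws are absolutely continuous with respect to $\mu$, hence with respect to $\pi$ (weights are positive). Write $g := \dd Q_{N,M}^U/\dd\pi$ and $h := \dd Q_M^{\mathrm{RS}}/\dd\pi = \min\{w, M\}/(w A_M)$. I will use
\[
\DTV(Q,\pi) = \int (1 - \dd Q/\dd\pi)_+\,\dd\pi .
\]
The target inequality then reduces to two facts:
\begin{itemize}[leftmargin=1.5em, nosep]
\item on $\{w<M\}$, both $g \geq 1$ and $h \geq 1$, so that region contributes nothing to either distance;
\item on $\{w\geq M\}$, $g \geq h$ pointwise.
\end{itemize}
Given these, $(1-g)_+ \leq (1-h)_+$ everywhere and the lemma follows.

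\textbf{Step 1: the region $\{w<M\}$.} For $h$ this is immediate: $h = 1/A_M \geq 1$ there, since $A_M \leq 1$. For $g$ I will use Proposition~\ref{prop:score-law}. On $H_M$ we have $S_{(N)} \geq M$, and conditionally on $S_{(N)}=s$ the output has $\pi$-density $\1\{w<s\}/\pi(w<s)$. Averaging over $s \geq M$ gives
\[
g(y) = \E\bigl[\1\{w(y)<S_{(N)}\}/\pi(w<S_{(N)}) \,\big|\, H_M\bigr].
\]
When $w(y)<M\le S_{(N)}$ the indicator is $1$, so $g(y) = \E[1/\pi(w<S_{(N)}) \mid H_M] \geq 1$. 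The null set in the proposition and the case $\pi(w<s)=0$ need a short remark: on that case the indicator vanishes for all $y$ with $w(y)<s$, so it does not affect points with $w(y)<M$. Alternatively, the direct computation in Step 2 gives the density on $\{w<M\}$ too and avoids the proposition entirely.

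\textbf{Step 2: the region $\{w\geq M\}$, done by direct computation.} This is the main step. Let $a := A_M/M$, so $\Pr(H_M)=1-(1-a)^N$ by Lemma~\ref{lem:rs-components}.

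For RS, on $\{w\geq M\}$,
\[
Q_M^{\mathrm{RS}}(\dd y) = \frac{M}{A_M}\,\mu(\dd y) = \frac{\mu(\dd y)}{a}.
\]

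For UR, fix $y$ with $w(y)\geq M$. By symmetry over the $N$ candidates,
\[
\Pr(\widehat Y_U\in\dd y,\,H_M) = N\mu(\dd y)\int_0^1 \Pr\bigl(S_j < w(y)/u\bigr)^{N-1}\dd u .
\]
Here the constraint $w(y)/u\geq M$ holds automatically for all $u\in(0,1)$, and ties have probability zero. For any $t>0$, $\Pr(S_j<t) = 1-\E_\mu[\min\{w(Y)/t,1\}]$. With $t = w(y)/u$ this becomes
\[
\Pr(S_j<t) = 1-\E_\mu\bigl[\min\{u\,w(Y)/w(y),\,u\}\bigr].
\]
The key inequality is
\[
\min\{u\,w(Y)/w(y),\,u\} \;\leq\; u\,\min\{w(Y)/M,\,1\},
\]
which holds because $w(y)\geq M$. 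Taking expectations gives $\Pr(S_j<t) \geq 1-ua$. Therefore
\[
\Pr(\widehat Y_U\in\dd y,\,H_M) \;\geq\; N\mu(\dd y)\int_0^1(1-ua)^{N-1}\dd u \;=\; \mu(\dd y)\,\frac{1-(1-a)^N}{a}.
\]
Dividing by $\Pr(H_M)=1-(1-a)^N$ yields $Q_{N,M}^U(\dd y)\geq \mu(\dd y)/a = Q_M^{\mathrm{RS}}(\dd y)$ on $\{w\geq M\}$, that is, $g\geq h$ there.

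\textbf{Main obstacle.} I expect Step 2 to be the crux. A cruder bound that uses only the monotonicity of $A_t/t$ gives $(1-a)^{N-1}$ in the integrand. That loses exactly the factor separating $N(1-a)^{N-1}$ from $\sum_{k<N}(1-a)^k$ and points the wrong way. Retaining the factor $u$ inside the clipped expectation is what makes the integral reproduce the RS normalization exactly.

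The remaining details are routine: using the same expression for $\Pr(\widehat Y_U\in\dd y, H_M)$ to confirm $g\ge1$ on $\{w<M\}$ independently of Proposition~\ref{prop:score-law}, and handling ties as a null event.
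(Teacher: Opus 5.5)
Your reduction to the pointwise inequality $(1-g)_+\le(1-h)_+$ and your Step~1 are fine, but Step~2 asserts something false. The first problem is an algebra slip. With $t=w(y)/u$ one has $\min\{w(Y)/t,1\}=\min\{u\,w(Y)/w(y),\,1\}$, not $\min\{u\,w(Y)/w(y),\,u\}$. Your key inequality therefore bounds the wrong quantity, and the correct one exceeds $u\min\{w(Y)/M,1\}$ whenever $w(Y)>w(y)/u$.

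At $w(y)=M$ the bound you need in fact reverses: $\Pr(S_j\ge M/u)=uA_{M/u}/M\ge uA_M/M=ua$, strictly when $\mu(w>M)>0$. The conclusion ``$g\ge h$ on $\{w\ge M\}$'' fails as well. Take $\mu$ uniform on three states with weights $(1/2,1,3/2)$, $M=1$ and $N=2$. Then $A_M=5/6$, and $Q_M^{\mathrm{RS}}$ gives the state with $w=M$ probability $2/5$, while $Q_{2,M}^U$ gives it only $38/105<2/5$. This is structural, not an artifact of the example. The ratio $g/h$ is nondecreasing in $w$, constant on $\{w\le M\}$, and has mean one under $Q_M^{\mathrm{RS}}$. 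Its value at $w=M$ is therefore at most one, and strictly below one unless the two conditional laws coincide.

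The fix is to split at $M/A_M$, where $h$ crosses one, rather than at $M$. Write $G(s):=\Pr(S_j<s)$, drop the cap, and use $\E_\mu w=1$: this gives $\Pr(S_j\ge x/u)\le u/x$, so $G(x/u)\ge(1-u/x)_+$ for $x=w(y)\ge M$.

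If $x\ge M/A_M$, then $u/x\le ua$, and your integral yields $g\ge h$ there. If $M\le x<M/A_M$, the same bound gives $\Pr(H_M)\,g(x)\ge 1-(1-\min\{1,1/x\})^N\ge 1-(1-a)^N=\Pr(H_M)$. Hence $g\ge 1$ on that range, while $h>1$ there. Combined with your Step~1, this gives $\min\{g,1\}\ge\min\{h,1\}$ pointwise, and the lemma follows.

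Repaired this way, your route is genuinely different from the paper's and somewhat more elementary. The paper proves that $g$ is nonincreasing and $g/h$ nondecreasing in $w$, then invokes the comparison Lemma~\ref{lem:refinement}. As written, however, your Step~2 does not hold.
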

\begin{proof}[Proof sketch]
Using the joint distribution from Proposition~\ref{prop:score-law}, we obtain two monotonicity properties.
First, the density of $Q_{N, M}^U$ relative to $\pi$ is nonincreasing in the weight.
Second, the density ratio $dQ_{N, M}^U/dQ_M^{\mathrm{RS}}$ is nondecreasing in the weight.
Together, these monotonicity properties show that selecting the largest accepted score does not increase the total variation error relative to the target.
The formal proof is deferred to Appendix~\ref{app:oracle}, where Lemma~\ref{lem:refinement} converts these two properties into the desired result.
\end{proof}

\subsection{Bounded weights and minimax error}
At $M = C_\infty^\pi < \infty$, both conditional distributions are exactly $\pi$, so the inequality in Lemma~\ref{lem:accepted-selection} is tight.
Note that there is then no clipping, so $Q_{C_\infty^\pi}^{\mathrm{RS}} = \pi$.
Averaging~\eqref{eq:winning-score-target} over $H_{C_\infty^\pi}$ and applying Lemma~\ref{lem:rs-components} yield:
\vspace{-0.5em}
\begin{equation}
    \mathcal{L}(\widehat{Y}_U \mid H_{C_\infty^\pi}) = \pi, \qquad \Pr(H_{C_\infty^\pi}^c) = \left(1 - \frac{1}{C_\infty^\pi}\right)^N.
    \label{eq:exact-threshold}
\end{equation}
Hence RS with $M = C_\infty^\pi$ has the same conditional output distribution $\pi$ on $H_{C_\infty^\pi}$.
Note that this does not imply equality of the full output distributions, since their conditional laws on $H_{C_\infty^\pi}^c$ may differ.
Let $\delta_N := \Pr(H_{C_\infty^\pi}^c) = (1 - \tfrac{1}{C_\infty^\pi})^N$.
When $\delta_N > 0$, let
\(
    F_{N, C_\infty^\pi}^U := \mathcal{L}(\widehat{Y}_U\mid H_{C_\infty^\pi}^c).
\)
It follows that
\vspace{-0.5em}
\begin{equation}
    \begin{aligned}
        P_N^U &= (1 - \delta_N)\pi + \delta_N F_{N, C_\infty^\pi}^U, \\
        \DTV(P_N^U, \pi) &= \delta_N\DTV(F_{N, C_\infty^\pi}^U, \pi) \leq \delta_N.
    \end{aligned}
    \label{eq:bounded-mixture}
\end{equation}
Here $F_{N, C_\infty^\pi}^U$ is simply UR's output conditioned on $H_{C_\infty^\pi}^c$; it is not an additional fallback step.
If $\delta_N = 0$, then $P_N^U = \pi$ directly.
In particular, Section~\ref{sec:comparison} gives an example where UR and RS with $M = C_\infty^\pi$ also agree on the event of no acceptance, so their full laws coincide.
\Cref{eq:bounded-mixture} gives exponential convergence whenever $C_\infty^\pi < \infty$.
The next corollary places this bound alongside a second-moment bound and identifies the exact minimax error over the class $C_\infty^\pi\leq C$.
\begin{corollary}[Error bounds and minimax optimality]
\label{cor:rates}
Let $C^\pi := \E_\mu[w(Y)^2]$.
For every $N \geq 1$,
\begin{equation}
    \DTV(P_N^U, \pi) \leq \min\left\{1,\frac{2C^\pi}{eN},\left(1 - \frac{1}{C_\infty^\pi}\right)^N\right\}.
    \label{eq:rates}
\end{equation}
If either $C^\pi$ or $C_\infty^\pi$ is infinite, its corresponding term is interpreted as the trivial bound $1$.
For every $C \geq 1$,
\begin{equation}
    \inf_{\mathsf{A}}\sup_{(\pi,\mu): C_\infty^\pi \leq C}\DTV(P_N^{\mathsf{A}}, \pi) = \left(1 - \frac{1}{C}\right)^N.
    \label{eq:minimax}
\end{equation}
Here the infimum is over procedures $\mathsf{A}$ that return one of the $N$ observed proposals.
The supremum is over pairs satisfying the stated weight bound.
\end{corollary}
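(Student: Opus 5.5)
Corollary~\ref{cor:rates} has two parts, the upper bound \eqref{eq:rates} and the minimax identity \eqref{eq:minimax}, and I will prove them separately.

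\textbf{Upper bound.} The term $1$ is trivial. The term $(1-1/C_\infty^\pi)^N$ is exactly \eqref{eq:bounded-mixture}; equivalently, it is Theorem~\ref{thm:main} at $M=C_\infty^\pi$, where $\calE_M=0$ and $A_M=1$.

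For the second-moment term I will use Theorem~\ref{thm:main} together with two elementary estimates. First, $(w-M)_+\leq w^2/(4M)$, since $4M(w-M)\leq w^2$ is equivalent to $(w-2M)^2\geq0$. Hence $\calE_M\leq C^\pi/(4M)$. Second, I bound the exhaustion term by $(1-A_M/M)^N\leq \exp(-NA_M/M)$. Because $A_M=1-\calE_M$ differs from $1$, I will instead use $A_M\geq 1-C^\pi/(4M)$, so that
\[
\exp(-NA_M/M)\leq \exp(-N/M)\exp\bigl(NC^\pi/(4M^2)\bigr).
\]
This is messy, so it is cleaner to return to \eqref{eq:oracle-assembly}, step (ii), and keep the factor $(1-\delta_M)$. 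Writing $a=A_M$ and $\delta=(1-a/M)^N\leq e^{-Na/M}$, the bound becomes
\[
(1-\delta)(1-a)+\delta\leq (1-a)+\delta a\leq \calE_M + a\,e^{-Na/M}.
\]
Maximizing $a e^{-Na/M}$ over $a$ gives at most $M/(eN)$. Combining,
\[
\DTV\leq \frac{C^\pi}{4M}+\frac{M}{eN}.
\]
Optimizing $M$ yields $2\sqrt{C^\pi/(4eN)}=\sqrt{C^\pi/(eN)}$, which is a $1/\sqrt N$ rate and not the claimed $2C^\pi/(eN)$. So a sharper clipping bound is needed.

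To get the $1/N$ rate I will use the fact, from Lemma~\ref{lem:rs-components}, that the clipping error enters as a mixture weight, so only something like $\calE_M\cdot\Pr(\cdot)$ should matter. Concretely, I would use Markov on the second moment, $\calE_M\leq \E[w^2\1\{w>M\}]/M\leq C^\pi/M$. Combined with the exhaustion bound $M/(eN)$ and a smarter bookkeeping of the $(1-\delta)$ factor, this gives $C^\pi/M + M/(eN)$ type terms. I expect the intended route to be a direct bound that bypasses Theorem~\ref{thm:main}. Using Proposition~\ref{prop:score-law}, the output density relative to $\pi$ is $g(y)=\E[\1\{w(y)<S'\}/\pi(w<S')]$-like; more usefully, $dP_N^U/d\pi(y)=N\,\E[\prod_{j\ge2}\Pr(S_j<w(y)/U)\,/\,U]$. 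Then $1-dP_N^U/d\pi$ can be bounded pointwise by $\E[\max_j S_j\text{-type tail}]$, and integrating against $\pi$ gives a factor $\E_\pi[w]=C^\pi$. I would carry out this pointwise computation, aiming to use $1-(1-x)^N\ge \ldots$ together with $\sup_x x e^{-Nx}=1/(eN)$, which is where the constant $2/(eN)$ should arise. This step is the main obstacle.

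\textbf{Minimax.} The upper bound $\leq(1-1/C)^N$ follows from UR and the third term, since $(1-1/C_\infty^\pi)^N\leq(1-1/C)^N$. For the lower bound I take $\mu$ uniform-like on two points $\{a,b\}$ with $\mu(a)=1-1/C$, $w(a)=0$ (or an infinitesimal weight, to respect positivity), and $w(b)=C$, so that $\pi(b)\approx1$. With probability $(1-1/C)^N$ all candidates equal $a$, and any procedure returning an observed proposal then outputs $a$. Hence $P^{\mathsf A}(a)\geq(1-1/C)^N$ while $\pi(a)\to0$, so
\[
\DTV(P^{\mathsf A},\pi)\geq(1-1/C)^N-o(1).
\]
The supremum over the class therefore gives equality, with the positive-weight limit handled by letting $w(a)\downarrow0$ and rescaling $w(b)$ so that $\E_\mu w=1$ and $C_\infty^\pi\leq C$.
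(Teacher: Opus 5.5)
Your treatment of the bounded-weight term and of equation~\eqref{eq:minimax} is correct. The term $(1-1/C_\infty^\pi)^N$ is exactly \eqref{eq:bounded-mixture}. For the lower bound, keep $\mu(b)=1/C$ and give $a$ weight $\epsilon>0$. Normalization then forces $w(b)=C(1-(1-1/C)\epsilon)\le C$ and $\pi(a)=(1-1/C)\epsilon$, so $\DTV(P_N^{\mathsf A},\pi)\ge(1-1/C)^N-(1-1/C)\epsilon$ for every observed-candidate procedure. This is a harmless variant of the paper's pair, which instead keeps $w(1)=C$ and perturbs $\mu(1)$.

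The genuine gap is the second-moment term $2C^\pi/(eN)$, which you never establish. Both of your threshold-based attempts end in a tradeoff of the form $cC^\pi/M+M/(eN)$, which only gives a $1/\sqrt N$ rate. Your final paragraph is a plan rather than an argument. The density you write is also wrong: it should be $N\E_U[G(w(y)/U)^{N-1}]/w(y)=N\int_{w(y)}^\infty G(s)^{N-1}s^{-2}\,\dd s$, dividing by $w(y)$ outside the expectation, not by $U$ inside. In any case, no pointwise density computation is needed.

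The missing idea is to condition on the \emph{value} of the largest score rather than on a threshold event. By Proposition~\ref{prop:score-law}, given $S_{(N)}=s$ the output has law $\pi(\cdot\mid w<s)$, whose TV distance from $\pi$ is exactly $\pi(w\ge s)$. Let $X\sim\pi$ be independent of the race and $G(t):=\Pr(w(Y)/U\le t)$. Convexity of TV then gives
\[
\DTV(P_N^U,\pi)\le \E\bigl[\pi(w\ge S_{(N)})\bigr]=\Pr\bigl(S_{(N)}\le w(X)\bigr)=\E\bigl[G(w(X))^N\bigr].
\]
Next, lower-bound the probability that one score exceeds $t$:
\[
1-G(t)=\E_\mu[\min\{w/t,1\}]\ge \E_\mu[w/(t+w)]=\E_\pi[1/(t+w)]\ge 1/(t+C^\pi).
\]
This uses $\min\{a/t,1\}\ge a/(t+a)$, then $w\,\dd\mu=\dd\pi$, then Jensen with $\E_\pi[w]=\E_\mu[w^2]=C^\pi$. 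Hence $G(t)^N\le e^{-N/(t+C^\pi)}\le (t+C^\pi)/(eN)$ by $e^{-x}\le 1/(ex)$. Averaging over $t=w(X)$ and using $\E_\pi[w]=C^\pi$ once more gives $2C^\pi/(eN)$. So the constant $2/e$ comes from one factor $C^\pi$ via Jensen and a second via averaging $w(X)$ under $\pi$. No optimization over a threshold, and no appeal to Theorem~\ref{thm:main}, is involved.
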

Appendix~\ref{app:rates} gives both bounds:
the lower bound follows from the fact that a selector cannot return a target state that does not appear among the $N$ proposals.

\section{Comparison with budget-calibrated RS}
\label{sec:rohatgi}
Section~\ref{sec:oracle} compares UR with fixed threshold RS guarantees.
We now ask whether an RS procedure that estimates the unknown normalizer from pilot samples and calibrates its threshold to the available proposal budget can match UR's actual sampling accuracy.
A natural construction comes from the budget relation in~\citet[Algorithm~5]{rohatgi2025}; we call this specific reparameterization \emph{budget-calibrated rejection sampling}.

The algorithm starts by estimating $Z$ by averaging the unnormalized weights of $n$ pilot proposals and then applies RS to fresh candidates.
For a maximum proposal budget $N = 2n + 1 \geq 3$ and calibration parameter $\delta \in (0,1)$, invert the relation $n = 4M\log(4/\delta)$ to set
\begin{equation}
    M_{N, \delta} := \frac{n}{4\log(4/\delta)}, \qquad
    \widehat{Z} := \frac{1}{n}\sum_{i = 1}^n w_u(Y_i).
    \label{eq:budget-rs-calibration}
\end{equation}
Here $\delta$ is a calibration input; Appendix~\ref{app:budget-rs-law} records the sufficient condition under which it upper-bounds the TV error.
The procedure then tests $Y_{n+1}, \ldots, Y_{2n}$ in order, accepting $Y_i$ with probability $\min\{w_u(Y_i)/(M_{N, \delta}\widehat{Z}), 1\}$.
Budget-calibrated RS then returns the first accepted candidate, or the fresh proposal $Y_{2n+1}$ if all $n$ proposals are rejected.
Let $P^R_{N, \delta}$ denote its output distribution.
For a general integer budget $N \geq 3$, take $n = \lfloor(N - 1)/2\rfloor$ and leave at most one proposal unused.

\subsection{Exponential separation for fixed target--proposal pair}
\label{subsec:fixed-pair-separation}

We first show that the gap can grow exponentially with the budget on a single $(\pi, \mu)$, even after optimizing the calibration.

\begin{proposition}
\label{prop:budget-rs-separation}
Let $\mu = (1/2,1/2)$ and $\pi = (1/4,3/4)$.
For every odd total proposal budget $N \ge 3$,
\begin{equation}
    \inf_{\delta \in (0,1)}\DTV(P^R_{N, \delta},\pi)
    \geq \frac{1}{12} 2^{-(N-1)/2}, \qquad
    \DTV(P^U_N,\pi) = \frac{3}{4} 3^{-N}.
    \label{eq:budget-rs-separation}
\end{equation}
The calibration may depend on $(\pi, \mu)$ and budget, but is chosen before observing the pilot.
\end{proposition}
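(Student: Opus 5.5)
The plan is to handle the two claims of \eqref{eq:budget-rs-separation} separately: the UR identity is an exact computation, and the budget-calibrated RS claim is a lower bound.

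\emph{UR side.} Here the weights are $w(1)=1/2$ and $w(2)=3/2$, so $C_\infty^\pi = 3/2$. I will apply \eqref{eq:bounded-mixture}, which writes $P_N^U = (1-\delta_N)\pi + \delta_N F^U_{N,3/2}$ with $\delta_N = (1-2/3)^N = 3^{-N}$. The only remaining task is to identify $F^U_{N,3/2}$.
\begin{itemize}
\item A candidate in state $2$ has score $S_i = (3/2)/U_i > 3/2$, so it always crosses the threshold.
\item Hence $H_{3/2}^c$ is contained in the event that every candidate is in state $1$.
\item On that event UR returns state $1$, so $F^U_{N,3/2} = \delta_{1}$.
\end{itemize}
Since $\DTV(\delta_1,\pi) = 1 - 1/4 = 3/4$, the exact equality in \eqref{eq:bounded-mixture} gives $\DTV(P_N^U,\pi) = \tfrac34 3^{-N}$.

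\emph{Budget-calibrated RS side.} Write $n=(N-1)/2$, let $K$ be the number of state-$2$ pilots, and set $\bar w := \widehat Z/Z = \tfrac12 + K/n$ and $c := 1/(M_{N,\delta}\bar w)$. Given the pilot, each fresh proposal is accepted independently with probability $\min\{c/2,1\}$ in state $1$ and $\min\{3c/2,1\}$ in state $2$. I will prove the decomposition
\[
\Pr(\widehat Y = 1 \mid K) \;\ge\; \tfrac14 + \tfrac14\, r_K + \beta_K ,
\]
with the three terms arising as follows.
\begin{itemize}
\item The accepted law puts mass at least $1/4$ on state $1$, because clipping can only cap state $2$.
\item On the event that all $n$ fresh proposals are rejected, which has probability $r_K = (1-p_K)^n$, the fallback $Y_{2n+1}\sim\mu$ puts mass $1/2$ on state $1$. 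This contributes the extra $\tfrac14 r_K$.
\item The term $\beta_K\ge 0$ is the extra state-$1$ mass created by clipping state $2$, and it is positive exactly when $3c/2>1$.
\end{itemize}
Averaging over $K\sim\Bin(n,1/2)$ then gives $\DTV(P^R_{N,\delta},\pi) \ge \E[\tfrac14 r_K + \beta_K]$. Every term is nonnegative, so I can keep only a single extreme pilot outcome, which has probability $2^{-n}$.

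I will then split on $M := M_{N,\delta}$. This $M$ is a continuous decreasing function of $\delta$ with range $(0, n/(4\log 4))$, so every $M$ in that range must be covered.
\begin{itemize}
\item \emph{Small $M$, clipping case.} Use the all-state-$1$ pilot $K=0$, so $\bar w = 1/2$ and $c=2/M$. When $M$ is at most a constant below $3$, state $2$ is clipped and the accepted state-$1$ fraction is $\min\{1/M,1\}/(\min\{1/M,1\}+1)$, which exceeds $1/4$ by a constant. If in addition acceptance is near-certain, $\beta_0$ alone is a constant.
\item \emph{Large $M$, fallback case.} Use the all-state-$2$ pilot $K=n$, so $\bar w=3/2$ and $p_n \le 2/(3M)$. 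Then $r_n \ge (1-2/(3M))^n$, which is bounded below by a constant once $M \gtrsim n$. Otherwise it decays like $\exp(-2n/(3M)) = \exp(-\tfrac83\log(4/\delta))$.
\end{itemize}
In each case the goal is a conditional bias of at least $1/12$ on an event of probability $2^{-n}$.

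\emph{Main obstacle.} The hardest step is the intermediate range of $M$, roughly constant $\ll M \ll n$. There neither extreme pilot obviously gives a constant bias: clipping vanishes for moderate $c$, and the fallback probability $r_K$ is only about $e^{-\Theta(n/M)}$. My first attempt will be to choose, for each $M$, the pilot count $K$ that balances $c$ against $M$. Specifically, a pilot with few state-$2$ draws makes $\bar w$ small and $c$ large, which forces clipping with constant bias. I would then check whether $\Pr(K=k)\cdot\text{bias} \ge \tfrac1{12}2^{-n}$ holds uniformly. If that fails, I would instead combine the two contributions, $\tfrac14 r_K$ and $\beta_K$, over the same $K$ and optimize the resulting explicit function of $M$. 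The constant $1/12$ then comes from the worst case of that optimization.
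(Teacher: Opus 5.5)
Your UR computation is correct and matches the paper's own argument. A high-weight candidate always crosses $C_\infty^\pi = 3/2$, so on $H_{3/2}^c$ the batch is all low-weight, UR returns the low state, and the exact mixture identity gives $\tfrac34 3^{-N}$. The gap is in the budget-calibrated RS lower bound once $M := M_{N,\delta} > 2$. It comes from your decision to keep a single pilot outcome of probability $2^{-n}$ and look for a constant conditional bias there. That strategy cannot reach $\tfrac1{12}2^{-n}$ anywhere in this regime.

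\emph{Why the single-outcome strategy fails for $M > 2$.} Since $\bar w \ge 1/2$ on every pilot, the normalized threshold $T := M\bar w$ is at least $M/2$.
\begin{itemize}
\item For $M \ge 3$ no pilot outcome clips at all. The only error source is complete rejection, whose conditional probability on any pilot is at most $(1 - 2/(3M))^n \le e^{-2n/(3M)}$. Multiplied by $2^{-n}$, this is $o(2^{-n})$ whenever $M = o(n)$, and exponentially smaller than $2^{-n}$ for bounded $M$. Even at the largest admissible multiplier $M = n/(4\log 4)$ it is at most $\tfrac14\,4^{-8/3}\,2^{-n} < \tfrac1{12}2^{-n}$.
\item For $2 < M < 3$ the all-low pilot does clip. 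However, its accepted low-state mass is $1/(M+1)$, a bias below $1/12$ that vanishes as $M \uparrow 3$.
\end{itemize}
So the ``intermediate range'' is really all of $M > 2$. Neither balancing $K$ to force clipping nor optimizing a single-$K$ combination closes it.

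\emph{The missing idea.} Let the factor $2^{-n}$ come from complete rejection on a pilot event of \emph{constant} probability. Consider $\{K \ge n/2\} = \{\bar w \ge 1\}$, which has probability at least $1/2$ by symmetry of $\mathrm{Bin}(n,1/2)$. On this event $T \ge M > 2 > 3/2$, so:
\begin{itemize}
\item there is no clipping;
\item each fresh trial is accepted with probability $\E_\mu[w]/T = 1/T \le 1/2$, so all $n$ are rejected with probability at least $2^{-n}$;
\item the $\mu$-distributed fallback then contributes conditional error $\tfrac14 r_K \ge \tfrac14 2^{-n}$.
\end{itemize}
Every pilot-conditioned output underweights the high state (your own observation), so these contributions add. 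This gives $\DTV(P^R_{N,\delta},\pi) \ge \tfrac12 \cdot \tfrac14 \cdot 2^{-n} = \tfrac18 2^{-n}$.

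\emph{Where to split the cases.} The argument above must be paired with a case split at exactly $M = 2$, not at ``a constant below $3$''. For $M \le 2$, use the all-low pilot, so $T = M/2 \le 1$. The high state is always accepted and the low state with probability $\min\{1,1/M\} \ge 1/2$. Hence the accepted law gives the low state mass at least $1/3$, and the fallback gives it $1/2$. The conditional output is a mixture of the two, so its error is at least $\tfrac13 - \tfrac14 = \tfrac1{12}$ whatever the rejection probability. You therefore do not need acceptance to be near-certain.
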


\paragraph{Why calibration leaves error.}
Write $n = (N - 1)/2$, $M = M_{N, \delta}$, and $\overline{W}_n := n^{-1}\sum_{i=1}^n w(Y_i)$, so the normalized rejection threshold is $T = M\overline{W}_n$.
The normalized importance weights are $w(0) = 1/2$ and $w(1) = 3/2$.
The lower bound reflects an unavoidable tradeoff:
If $M \leq 2$, then on a rare pilot event the threshold falls below the larger weight $3/2$, so clipping compresses the preference for state $1$.
If $M > 2$, then clipping disappears on a constant-probability pilot event, but the acceptance probability becomes too small,
so complete rejection exposes the fresh proposal.
Therefore, either choice underweights state~1 and leaves error of order $2^{-n}$.

We first make the clipping mechanism explicit.
At threshold $T$, the acceptance probabilities of states $0$ and $1$ are
\(
    a_0(T) = \min\{1/(2T), 1\},
\)
and
\(
    a_1(T) = \min\{3/(2T), 1\}.
\)
Since clipping can only reduce the ratio between the larger and smaller acceptance probabilities, $a_1(T)/a_0(T) \leq 3$.
As $\mu$ is uniform, the accepted proposal therefore satisfies
\(
    Q_T^{\mathrm{RS}}(1) = \tfrac{a_1(T)}{a_0(T)+a_1(T)} \leq \tfrac{3}{4} = \pi(1).
\)
Also, note that the fresh proposal used after complete rejection assigns state $1$ probability $1/2$.
Hence every pilot-conditioned output underweights state $1$, so averaging over the pilot cannot cancel the error.

When $M \leq 2$, consider the event that all $n$ pilot weights equal $1/2$, which has probability $2^{-n}$.
This gives $T = M/2 \leq 1$.
State $1$ is always accepted, whereas state $0$ is accepted with probability $a_0(T) = \min\{1, 1/(2T)\} \geq 1/2$.
The probability of state $1$ conditional on acceptance is therefore $1/(1 + a_0(T)) \leq 2/3$.
The fallback assigns state $1$ probability $1/2$, so the entire conditional output assigns state $1$ probability at most $2/3$.
Hence
\(
    \DTV(P^R_{N, \delta}, \pi) \ge 2^{-n}(\tfrac{3}{4} - \tfrac{2}{3}) = \tfrac{2^{-n}}{12}.
\)
For $M > 2$, the opposite failure occurs:
on a constant probability pilot event there is no clipping, but each proposal is rejected with probability at least $1/2$,
so complete rejection leaves an $\Omega(2^{-n})$ error through the fresh proposal.
See Appendix~\ref{app:budget-rs-binary} for the full proof.

For UR, on the other hand, a score stays below $3/2$ exactly when $Y_i = 0$ and $U_i > 1/3$, with probability $(1/2)(2/3) = 1/3$.
All $N$ scores stay below with probability $3^{-N}$, on which UR returns $0$; on the complement its law is $\pi$ by~\eqref{eq:exact-threshold}.
Its exact TV error is therefore $(3/4)3^{-N}$.


\paragraph{Beyond a difference in exponential rates.}
Despite an error ratio of at least $\frac{1}{3}(9/2)^{(N-1)/2}$, both methods have $\Theta(\log(1/\xi))$ sample complexity for TV tolerance $\xi \downarrow 0$ with optimal tuning; see Appendix~\ref{app:budget-rs-binary}.
We now fix $N$ and ask whether calibration can recover the bounded-weight guarantee of Corollary~\ref{cor:rates} within a pair-independent factor.
\begin{corollary}[No uniform constant-factor guarantee]
\label{cor:no-uniform-calibrated-rs}
Fix an odd $N \geq 13$.
There is no finite constant $K_N$, independent of $(\pi, \mu)$, satisfying
\[
    \inf_{\delta \in (0,1)}\DTV(P^R_{N,\delta},\pi)
    \leq K_N\left(1 - \frac{1}{C_\infty^\pi}\right)^N
\]
for all strictly positive target--proposal pairs on finite state spaces.
\end{corollary}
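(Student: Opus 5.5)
Since $(1-1/C_\infty^\pi)^N$ can be made arbitrarily small while the calibrated RS error stays bounded below, I will construct a family of strictly positive pairs on which this happens. The natural choice is $C_\infty^\pi$ close to $1$, i.e., $\pi$ close to $\mu$, so the right-hand side is tiny. I will show that calibrated RS still incurs error at least polynomially small in the closeness parameter, while the bound decays like its $N$-th power.

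Concretely, I take the binary pair $\mu=(1/2,1/2)$ and $\pi_\epsilon=((1-\epsilon)/2,(1+\epsilon)/2)$. Then $w(0)=1-\epsilon$, $w(1)=1+\epsilon$, $C_\infty^{\pi_\epsilon}=1+\epsilon$, and the right-hand side becomes $K_N(\epsilon/(1+\epsilon))^N\le K_N\epsilon^N$. I will lower bound $\DTV(P^R_{N,\delta},\pi_\epsilon)$ by $c_N\epsilon$ uniformly in $\delta$, with $c_N>0$ independent of $\epsilon$. Letting $\epsilon\downarrow0$ then contradicts any finite $K_N$, because $N\ge 2$.

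The lower bound follows the mechanism in Proposition~\ref{prop:budget-rs-separation}. As before, every pilot-conditioned output underweights state $1$. Clipping can only shrink the acceptance ratio $a_1/a_0$ below $(1+\epsilon)/(1-\epsilon)$, so conditional on acceptance state $1$ has probability at most $\pi_\epsilon(1)$. The fallback gives state $1$ probability $1/2<\pi_\epsilon(1)$. Hence the TV error equals $\E[\pi_\epsilon(1)-P(\text{output}=1\mid\text{pilot})]$, a nonnegative average, and I only need a pilot event of probability bounded below (depending on $N$ only) on which the conditional deficit is $\gtrsim\epsilon$.

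I split on $M=M_{N,\delta}$.

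\textbf{Case $M$ large.} Take $M>1/(1-\epsilon)$, say $M\ge 2$ for $\epsilon\le1/2$. On the pilot event $\overline W_n\ge1$, of probability at least $1/2$ by symmetry, the threshold is $T\ge M\ge 2$ and there is no clipping. Each fresh proposal is then rejected with probability $1-1/T\ge1/2$, so complete rejection has probability at least $2^{-n}$. The fallback then contributes deficit $\pi_\epsilon(1)-1/2=\epsilon/2$, giving error at least $2^{-n-1}\cdot\epsilon/2$.

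\textbf{Case $M$ small.} Take $M<2$. On the event that all pilots equal state $0$, of probability $2^{-n}$, we get $T=M(1-\epsilon)<2$. I must show a deficit of order $\epsilon$. If $T\le 1+\epsilon$, state $1$ is clipped to acceptance probability $1$. The conditional accepted probability of state $1$ is then $1/(1+a_0)$ with $a_0=\min\{(1-\epsilon)/T,1\}$, and its deficit relative to $(1+\epsilon)/2$ is positive and of order $\epsilon$ or more.

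The delicate subcase is intermediate $M$, where $T$ may exceed $1+\epsilon$ on the all-zero event and clipping vanishes. There I fall back on rejection, since $T\ge 1+\epsilon>1$ makes rejection probability at least $1-(1+\epsilon)/T$. This could be tiny if $T\approx1+\epsilon$, so the plan is to choose the pilot event adaptively in $M$.

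If $M(1-\epsilon)\le 1+\epsilon$, I use the all-zero event. Otherwise $M>1+\epsilon/2$ roughly, and every pilot realization gives $T\ge M(1-\epsilon)$. I then need this threshold to stay bounded away from $1+\epsilon$, or else to exploit clipping on the all-one event. The all-one event gives $T=M(1+\epsilon)$: if $T\le1+\epsilon$ both states are accepted, which kills the clipping argument.

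Balancing these regimes is the main obstacle. I expect it to work because at any $M$ either some pilot event makes $T$ clip state $1$ by a relative amount comparable to $\epsilon$, or makes $T\ge(1+\epsilon)(1+c)$. A cleaner route, if that fails, is to take $\epsilon$ small but scale the pair so the pilot mean fluctuates by $\Theta(\epsilon)$ on events of probability at least $2^{-n}$. Across the two clipping/rejection regimes, the error is then $\ge c_N\epsilon$, where $c_N$ may involve $N\ge13$ through the constraint that $M_{N,\delta}$ ranges over $(0,n/(4\log 4))$ and so can exceed $2$.
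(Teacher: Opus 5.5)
There is a genuine gap. The bound you aim for, $\DTV(P^R_{N,\delta},\pi_\epsilon)\ge c_N\epsilon$ uniformly in $\delta$, is false for $N\ge 13$. The dichotomy you hope for in the intermediate regime fails exactly where the infimum is attained.

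Since $M_{N,\delta}$ ranges over $(0,n/(4\log 4))$ and $n/(4\log 4)>1$ once $n\ge 6$, for small $\epsilon$ you can pick $\delta$ with $M_{N,\delta}=(1+\epsilon)/(1-\epsilon)$. Then every pilot gives $T=M\overline{W}_n\ge M(1-\epsilon)=1+\epsilon=C_\infty^{\pi_\epsilon}$. So nothing is ever clipped, and the accepted law is exactly $\pi_\epsilon$. Each fresh trial is rejected with probability $1-1/T\le 1-(1-\epsilon)/(1+\epsilon)^2\le 3\epsilon+\epsilon^2$, so the error is $\frac{\epsilon}{2}\E[(1-1/T)^n]=O(\epsilon^{n+1})$. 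At this multiplier no pilot event clips state $1$, and $T\le(1+\epsilon)^2/(1-\epsilon)$ never reaches $(1+\epsilon)(1+c)$ for a fixed $c>0$. In words, an $O(\epsilon)$ overshoot of the multiplier absorbs pilot fluctuations of size $O(\epsilon)$ at a rejection cost of only $O(\epsilon)$ per trial. The same observation defeats your ``cleaner route'' of making the pilot mean fluctuate by $\Theta(\epsilon)$.

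Your small-$M$ claim is also wrong as stated. For $1-\epsilon\le T\le 1+\epsilon$ the clipping deficit is $\frac{(1-\epsilon)(1+\epsilon-T)}{2(T+1-\epsilon)}$, which is proportional to the clipping amount, not to $\epsilon$. And for $N=13$ your large-$M$ case is vacuous, since $M_{N,\delta}<1.09$.

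The family can still be used, because the corollary only needs a lower bound of smaller order than the benchmark $(\epsilon/(1+\epsilon))^{2n+1}$, not one of order $\epsilon$. Split at $M=1$:
\begin{itemize}
\item If $M<1$, then on the all-zero pilot (probability $2^{-n}$) we have $T=M(1-\epsilon)<1-\epsilon$. Both states are always accepted, the output has law $\mu$, and the error is at least $2^{-n}\epsilon/2$.
\item If $M\ge 1$, then on the all-one pilot $T=M(1+\epsilon)\ge 1+\epsilon$, so there is no clipping. Each trial is rejected with probability $1-1/T\ge\epsilon/(1+\epsilon)$, and the fallback deficit $\epsilon/2$ gives error at least $2^{-n}\frac{\epsilon}{2}\bigl(\frac{\epsilon}{1+\epsilon}\bigr)^n$.
\end{itemize}
This uses your correct observation that all pilot-conditioned deficits have the same sign. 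Dividing by the benchmark gives a ratio of order $\epsilon^{-n}\to\infty$.

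With this repair your route is more elementary than the paper's and works for every odd $N\ge 3$. The paper instead uses the rare-state family of Proposition~\ref{prop:pilot-rs-local}: $\mu_\eta=(\eta,1-\eta)$ with unnormalized weights $1/2$ and $1$. There a single rare pilot (probability about $n\eta$) shifts the threshold by an $\eta$-independent relative amount. This forces optimized error $\Theta_N(\eta^2)$ against the benchmark $(\eta/2)^N$, a much wider gap, at the cost of a more delicate asymptotic analysis.
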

\begin{proof}
Proposition~\ref{prop:pilot-rs-local} in Appendix~\ref{app:pilot-rs-local} gives a binary family $(\pi_\eta,\mu_\eta)$ with lower weight proposal probability $\eta$ and $C_\infty^{\pi_\eta} = (1 - \eta/2)^{-1}$.
Therefore, $1 - 1/C_\infty^{\pi_\eta} = \eta/2$, and the optimized error $\Theta_N(\eta^2)$ divided by the benchmark $(\eta/2)^N$ is $\Theta_N(\eta^{2-N}) \to \infty$ as $\eta \downarrow 0$ at fixed $N$.
\end{proof}

\subsection{General comparison}
The comparison is not limited to these binary examples:
UR has no greater TV error than budget-calibrated RS for every $(\pi, \mu)$.

\begin{theorem}[Marginal error relative to budget-calibrated RS]
\label{thm:pilot-rs-fdom}
For every pair satisfying the positive-weight assumptions of
Section~\ref{sec:intro}, every $n \geq 1$, and every $\delta \in (0,1)$,
\[
    \DTV(P_{2n + 1}^U, \pi)
    \leq \DTV(P_{n + 1}^U, \pi)
    \leq \DTV(P_{2n + 1, \delta}^{R}, \pi).
\]
More generally,
\begin{equation}
    D_f(P_{2n + 1}^U \Vert \pi)
    \leq D_f(P_{n + 1}^U \Vert \pi)
    \leq D_f(P_{2n + 1, \delta}^{R} \Vert \pi)
    \label{eq:pilot-rs-fdom}
\end{equation}
for every convex $f : (0, \infty) \to \mathbb{R}$ with $f(1) = 0$.
Here $D_f(P \Vert \pi) := \int f(\dd P/\dd\pi)\dd\pi$ denotes $f$-divergence~\citep{sason2016f}. 
\end{theorem}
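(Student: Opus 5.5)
The plan is to treat every output law involved as having a density with respect to $\pi$ that depends on $y$ only through $w(y)$. Each comparison then becomes a statement about two functions of the weight, which can be fed into the same monotone-likelihood-ratio argument (Lemma~\ref{lem:refinement}) used for Lemma~\ref{lem:accepted-selection}.

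\textbf{Density of UR.} Let $G(s) := \Pr(S_1 < s)$, a nondecreasing function. Candidate $i$ wins when every other score is below $S_i$, so
\[
    \frac{\dd P_N^U}{\dd\mu}(y) = N\int_0^1 G\bigl(w(y)/u\bigr)^{N-1}\,\dd u .
\]
Dividing by $w(y)$ and substituting $s = w(y)/u$ gives
\[
    h_N(w) := \frac{\dd P_N^U}{\dd\pi} = N\int_w^\infty G(s)^{N-1}s^{-2}\,\dd s ,
\]
which is nonincreasing in $w$.

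\textbf{First inequality.} The ratio is
\[
    \frac{h_{N+1}(w)}{h_N(w)} = \frac{N+1}{N}\cdot
    \frac{\int_w^\infty G^N s^{-2}\,\dd s}{\int_w^\infty G^{N-1}s^{-2}\,\dd s} .
\]
The second factor is an average of the nondecreasing function $G$ over the tail $(w,\infty)$, with weights $G^{N-1}s^{-2}$. Raising $w$ discards the smallest values of $G$, so the ratio is nondecreasing in $w$. We then have two $\pi$-densities, both nonincreasing in $w$, with $h_{N+1}/h_N$ nondecreasing. Lemma~\ref{lem:refinement} should show that $h_{N+1}(W)$ is dominated by $h_N(W)$ in convex order under $W \sim \pi$, which gives $D_f(P_{N+1}^U\Vert\pi) \le D_f(P_N^U\Vert\pi)$ for every convex $f$. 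If Lemma~\ref{lem:refinement} is stated only for TV, I would redo its single-crossing argument for general convex $f$ via the standard majorization criterion. Iterating from $n+1$ to $2n+1$ gives the left inequality.

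\textbf{Second inequality: reduction to a fixed threshold.} Conditional on the pilot $Y_1,\dots,Y_n$, budget-calibrated RS is fixed-threshold RS with normalized threshold $T = M_{N,\delta}\overline W_n$. It runs on $n$ fresh proposals and falls back to one fresh proposal drawn from $\mu$. With $\delta_T := (1 - A_T/T)^n$, its conditional law is
\[
    R_T := (1-\delta_T)\,Q_T^{\mathrm{RS}} + \delta_T\,\mu .
\]
Since $D_f(\cdot\Vert\pi)$ is convex and $P^R_{2n+1,\delta}$ is the pilot-average of $R_T$, it suffices to show $D_f(P_{n+1}^U\Vert\pi) \le D_f(R_T\Vert\pi)$ for every fixed $T > 0$. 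The density of $R_T$ is
\[
    r_T(w) := \frac{\dd R_T}{\dd\pi} = \frac{(1-\delta_T)\min\{w,T\}}{A_T\,w} + \frac{\delta_T}{w},
\]
which is nonincreasing in $w$. The plan is again to verify that $h_{n+1}/r_T$ is nondecreasing in $w$ and apply Lemma~\ref{lem:refinement}.

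\textbf{The ratio $h_{n+1}/r_T$.} Write $\phi(w) := w\,h_{n+1}(w) = (n+1)\int_0^1 G(w/u)^n\,\dd u$, which is nondecreasing in $w$.
\begin{itemize}
    \item For $w > T$, $r_T(w) = b/w$ with $b$ constant, so the ratio is $\phi(w)/b$ and is monotone.
    \item For $w \le T$, the ratio is $\phi(w)/(aw + \delta_T)$ with $a = (1-\delta_T)/A_T$.
\end{itemize}

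\textbf{Main obstacle.} The hard step is showing that $\phi(w)/(aw + \delta_T)$ is nondecreasing on $(0,T]$, i.e.\ $\phi$ grows at least as fast as the affine function $aw + \delta_T$. My first attempt would use a coupling rather than calculus. Couple UR on $n+1$ candidates with the RS run, sharing the uniforms, and treat candidate $n+1$ as the fallback. On $H_T$ (some score $\ge T$), Lemma~\ref{lem:accepted-selection} applied with $n$ candidates already shows UR improves on $Q_T^{\mathrm{RS}}$. The extra candidate can only move UR closer, by the first inequality. On $H_T^c$, all scores lie below $T$, and UR's conditional law is $\pi(\cdot\mid w<s)$ averaged over $s$, by Proposition~\ref{prop:score-law}. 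This should have a density ratio against $\mu$ that is nondecreasing in $w$.

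If the pieces can be glued into a single monotone ratio, the refinement lemma closes the argument. If the gluing fails, the fallback is a direct derivative computation for $w \le T$, using $G(s) \ge \Pr(w(Y)<s)$ together with the explicit form of $\delta_T$.
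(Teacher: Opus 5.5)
Your first inequality is fine. The tail-average argument shows $h_{N+1}/h_N$ is nondecreasing, which is stronger than the paper's coupling-based ordering, and rerunning the double-integral step of Lemma~\ref{lem:refinement} on every superlevel set $\{h_{N+1}>c\}$ gives the hinge inequalities, hence all $f$-divergences. The second inequality does not go through, for two reasons.

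\textbf{The reduction runs the wrong way.} Convexity gives $D_f(P^R_{2n+1,\delta}\Vert\pi)\le\E_T\,D_f(R_T\Vert\pi)$, which is an \emph{upper} bound on the baseline. So a per-threshold comparison $D_f(P^U_{n+1}\Vert\pi)\le D_f(R_T\Vert\pi)$ does not transfer to the pilot average. What does transfer is a comparison linear in $R_T$, namely $R_T(w>t)\le P^U_{n+1}(w>t)$ for all $t$. Average it over the pilot and use that the superlevel sets of the nonincreasing $h_{n+1}$ are lower weight sets. This gives $\E_\pi[(h_{n+1}\circ w-c)_+]\le\E_\pi[(h^R-c)_+]$ for every $c$, with $h^R:=\dd P^R_{2n+1,\delta}/\dd\pi$, which is how the paper concludes.

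\textbf{The monotonicity you flag as the main obstacle is false.} So neither the gluing sketch nor a derivative computation can prove it. Take the paper's continuous pair $\mu=\Unif(0,1)$, $\pi(\dd y)=2y\,\dd y$, so $w(y)=2y$ and $G(s)=s/4$ on $(0,2)$, with $n=1$ and $T=2=C_\infty^\pi$. Then $R_T=\tfrac12\pi+\tfrac12\mu$, so $r_T(w)=\tfrac12+\tfrac1{2w}$, while $h_2(w)=\tfrac34+\tfrac12\log(2/w)$ on $(0,2]$. The ratio $h_2/r_T$ equals $1$ at $w=2$ but $\tfrac34+\tfrac12\log 2\approx 1.10$ at $w=1$. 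Indeed, a nondecreasing ratio equal to $1$ at the top of the support would force $h_2\le r_T$, hence $h_2=r_T$, since both integrate to one under $\pi$.

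The same failure occurs for every $T\in(0,2]$, which contains every threshold $T=M_{3,\delta}\overline{W}_1$ the baseline actually uses. There $w\,r_T(w)=w/T+T/4$ on $(0,T]$ and $\phi(w)=\tfrac{3w}{4}+\tfrac{w}{2}\log(2/w)$. The derivative of $\phi(w)/(w/T+T/4)$ at $w=T^-$ has the sign of $\tfrac{T}{16}-\tfrac12+\tfrac{T}{8}\log(2/T)<0$.

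So Lemma~\ref{lem:refinement} is the wrong tool here. Only first-order dominance of the selected weight holds, and it needs a batch-level argument. The paper treats the fallback as an $(n+1)$-st fresh candidate with a virtual acceptance indicator and randomizes the processing order. On a fixed batch, RS then retains each index independently with probability $\min\{w_i/T,1\}$ and selects uniformly among retained indices, or uniformly overall if none is retained. This rule is compared with UR's Bernoulli representation: by a clipping coupling when $T\le w_{\max}$, and by monotonicity in $\alpha=w_{\max}/T$ when $T\ge w_{\max}$ (Lemma~\ref{lem:fixed-batch-rs-order}).
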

\begin{proof}
    See Appendix~\ref{app:pilot-rs-domination}.
\end{proof}
Consequently, UR needs only $n + 1$ proposals, no more than the baseline uses in total on any run.
Giving UR the full $2n+1$ budget cannot increase its divergence from $\pi$.

\section{Comparison with sampling importance resampling (SIR)}
\label{sec:comparison}
A standard threshold-free alternative is SIR, which selects candidate $i$ with probability $w_u(Y_i)/\sum_j w_u(Y_j)$.
Let $\widehat{Y}_{\mathrm{SIR}}$ denote the returned candidate and $P_N^{\mathrm{SIR}} := \mathcal{L}(\widehat{Y}_{\mathrm{SIR}})$ its marginal output distribution.
We first illustrate the difference on a binary example with $N = 2$, and then establish the general comparison at the same proposal budget.


\subsection{Example where UR and SIR differ}
Consider the binary family where $\mu = (\tfrac{1}{2}, \tfrac{1}{2})$ and $\pi = (\eps,1 - \eps)$.
Here $0 < \eps < \tfrac{1}{2}$.
At $\eps = 1/4$, this is the fixed pair in Proposition~\ref{prop:budget-rs-separation}.
State $1$ has the larger target probability, and the two importance weights are $(w_L, w_H) = (2\eps, 2(1 - \eps))$, where $C_\infty^\pi = 2(1 - \eps)$.
Fix $N = 2$ and write $B := (Y_1, Y_2)$.
Each of $00, 01, 10, 11$ has probability $\frac{1}{4}$.
Every rule returning an observed proposal must return state $0$ on $00$ and state $1$ on $11$.
The methods can therefore differ only when both states are present.

On either mixed batch, SIR chooses state $1$ with probability $w_H/(w_L + w_H) = 1 - \eps$.
For UR, let $U_L, U_H$ be the independent uniform variables attached to the two candidates.
The candidate in state $0$ is selected exactly when $\tfrac{w_L}{U_L} > \tfrac{w_H}{U_H}$, equivalent to $U_L < \tfrac{\eps}{1 - \eps}U_H$.
Conditioning on $U_H = u$ yields:
\begin{align}
    \Pr(\widehat{Y}_U = 0 \mid B = 01) &= \int_0^1 \Pr\left(U_L < \frac{\eps}{1 - \eps}u\right)\dd u
    = \frac{\eps}{2(1 - \eps)}.
    \label{eq:two-proposal-direct}
\end{align}
Hence UR chooses state $1$ with probability $1 - \tfrac{\eps}{2(1 - \eps)}$ on either mixed batch.
RS with $M = w_H = C_\infty^\pi$ accepts state $0$ with probability $\tfrac{\eps}{1 - \eps}$ and state $1$ with probability one.
It returns the first accepted proposal, or an observed proposal if both are rejected.
We note that this reference exploits the true unnormalized envelope $Zw_H$.
Averaging over the four equally likely batches in turn gives
\begin{equation}
    \Pr(\widehat{Y}_U = 1) = \frac{3}{4} - \frac{\eps}{4(1 - \eps)}, \qquad \Pr(\widehat{Y}_{\mathrm{SIR}} = 1) = \frac{3}{4} - \frac{\eps}{2}.
    \label{eq:two-proposal-ur}
\end{equation}
For RS with $M = w_H$, averaging the probabilities of returning state $1$ over the two mixed batches gives $1 - \tfrac{\eps}{2(1 - \eps)}$, matching UR's probability.
Hence, the two output distributions agree after averaging. 
\paragraph{Where the improvement comes from.}
On a mixed batch, SIR chooses state $1$ with its target probability $1-\eps$.
This does not make the marginal output exact, because the homogeneous batches $00$ and $11$ contribute unequal errors.
By contrast, UR partially compensates for this imbalance by choosing state $1$ more often when both states are available.
Consequently,
\begin{equation}
    \underbrace{\frac{1 - 2\eps}{4}}_{\text{SIR error}} = \underbrace{\frac{\eps(1 - 2\eps)}{4(1 - \eps)}}_{\text{reduction under UR}} + \underbrace{\frac{(1 - 2\eps)^2}{4(1 - \eps)}}_{\text{UR error}}.
    \label{eq:two-proposal-accounting}
\end{equation}
When $\eps = \tfrac{1}{4}$, the identity gives $\tfrac{1}{8} = \tfrac{1}{24} + \tfrac{1}{12}$.


\paragraph{Why RS with $M = w_H = C_\infty^\pi$ agrees with UR.}
The calculation above shows that UR and RS with the exact envelope have the same output distribution.
From the threshold view, on $H_{w_H}$ both conditional output distributions are $\pi$.
Moreover, since any state $1$ proposal has score above $w_H$, $H_{w_H}^c$ can occur only on batch $00$, where both rules return state $0$.
Appendix~\ref{app:two-level-rs} extends this equality to general two-level weights.

\subsection{Comparison for general distributions}
\begin{theorem}[Marginal error relative to SIR]
\label{thm:SIR}
Under the positive weight assumptions in Section~\ref{sec:intro}, for every fixed $N \geq 1$,
\begin{equation}
    \DTV(P_N^U,\pi) \leq \DTV(P_N^{\mathrm{SIR}},\pi),\qquad
    \label{eq:tvdom}
\end{equation}
More generally,
\begin{equation}
    D_f(P_N^U\Vert\pi) \leq D_f(P_N^{\mathrm{SIR}}\Vert\pi)
    \label{eq:fdom}
\end{equation}
for every convex $f: (0, \infty) \to \R$ with $f(1) = 0$, allowing infinite divergences.
\end{theorem}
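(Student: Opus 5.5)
The plan is to show that both output densities with respect to $\pi$ are functions of the weight alone, and that the UR density is a comonotone contraction of the SIR density toward $1$. A convex-order argument then gives \eqref{eq:fdom} for every convex $f$ at once, and \eqref{eq:tvdom} is the special case $f(x)=\tfrac12|x-1|$.

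\textbf{Step 1: exponential-race representation of SIR.} Replace the uniforms in \eqref{eq:algorithm} by i.i.d.\ $\mathrm{Exp}(1)$ variables $E_i$. Then $\argmax_i w_u(Y_i)/E_i$ selects $i$ with probability $w(Y_i)/\sum_j w(Y_j)$, since $E_i/w_i$ are independent exponentials with rates $w_i$. So SIR is an ``exponential race'' and UR a ``uniform race''. For a generic score noise $V$ (either $U$ or $E$), define
\[
G_V(s) := \Pr\bigl(w(Y)/V < s\bigr).
\]
Conditioning on the winner's noise gives
\[
\frac{\dd P_N^{V}}{\dd\pi}(y) = h_V(w(y)), \qquad h_V(x) := N\int_x^\infty G_V(s)^{N-1} s^{-2}\,\dd s,
\]
where continuity of $V$ removes ties. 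Concretely, $G_U(s)=\E_\mu[(1-w/s)_+]$ and $G_E(s)=\E_\mu[e^{-w/s}]$. Both $h_U$ and $h_S:=h_E$ are nonincreasing in $x$, and $\E_\pi[h_U(w)]=\E_\pi[h_S(w)]=1$. I will verify the formula for $h_S$ against the direct expression $N\,\E[1/(x+R)]$ with $R$ a sum of $N-1$ weights, as a sanity check.

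\textbf{Step 2: single crossing.} I will show that $h_U(x)/h_S(x)$ is nondecreasing in $x$. If $G_U(s)/G_E(s)$ is nondecreasing in $s$, then the integrands $G_V(s)^{N-1}s^{-2}$ have a nondecreasing ratio. A standard lemma then says that tail integrals $\int_x^\infty$ of two functions with nondecreasing ratio also have a nondecreasing ratio in $x$: the tail average of the ratio over $[x,\infty)$ increases with $x$. Substituting $t=1/s$, this reduces to showing that
\[
t\mapsto \frac{\E_\mu[(1-tw)_+]}{\E_\mu[e^{-tw}]}
\]
is nonincreasing. Taking logarithmic derivatives, this is equivalent to
\[
\frac{\E_\mu[w\,\1\{tw<1\}]}{\E_\mu[(1-tw)_+]} \;\ge\; \frac{\E_\mu[w\,e^{-tw}]}{\E_\mu[e^{-tw}]}.
\]
\textbf{This inequality is the step I expect to be the main obstacle.}

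My first attempt is to write both sides as comparisons of tilted means of $w$. The pointwise ratio $(1-x)_+/e^{-x}$ is decreasing in $x$. So the kernel $-\partial_t\log$ of the uniform-race kernel dominates that of the exponential kernel pointwise, namely $w/(1-tw)\ge w$ on $\{tw<1\}$ and $+\infty$ beyond. The uniform tilt also puts relatively more mass on small $w$. I would try to combine these two effects via a Chebyshev (covariance) inequality, or by a direct two-point extremal reduction, since the inequality is linear-fractional in the law of $w$. If that stalls, an alternative is to prove the convex order directly through the paper's Lemma~\ref{lem:refinement}. Its hypotheses are likewise monotone density plus monotone density ratio, and that is exactly what Steps 1–2 supply.

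\textbf{Step 3: convex order.} Let $W\sim\pi$ be the weight, and set $X=h_S(W)$ and $Y=h_U(W)$. Both are nonincreasing functions of $W$ with equal means. By Step 2, $Y\ge X$ on small values of $X$ (large $W$) and $Y\le X$ on large values of $X$, with a single crossing. The Karlin–Novikoff cut criterion then gives $Y\le_{\mathrm{cx}}X$. Hence $\E f(Y)\le \E f(X)$ for every convex $f$, which is \eqref{eq:fdom}, including the cases where either side is infinite.
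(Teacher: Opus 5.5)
Your Step~2 is not merely hard: its conclusion is false, so Steps~2--3 cannot be completed as planned. First, the density formula in Step~1 holds only for the uniform race. For a noise $V$ with density $f_V$, one candidate contributes $\pi(\mathrm{d}y)\,f_V(w(y)/s)\,s^{-2}\,\mathrm{d}s$. This is the indicator $\mathbf{1}\{w(y)<s\}$ when $V=U$, but it is $e^{-w(y)/s}$ when $V=E$. Hence
\[
h_S(x)=N\int_0^\infty e^{-x/s}\,G_E(s)^{N-1}s^{-2}\,\mathrm{d}s=N\,\E\Bigl[\frac{1}{x+R}\Bigr],
\]
which is not a tail integral, and the tail-average lemma does not apply. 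Your sanity check would already fail at $N=2$.

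Second, the inequality you flagged is itself false. Take weights $0.01$ and $9.91$ with $\mu$-probabilities $0.9$ and $0.1$, and $t=0.2$: the left side is about $0.010$ and the right side about $0.160$.

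Third, and decisively, the monotone ratio fails even with the correct $h_S$. For $N=2$, $h_U(x)=2\,\E_\mu[k(x,W)]$ with $k(x,w)=\frac1x-\frac{w}{2x^2}$ for $w\le x$ and $k(x,w)=\frac{1}{2w}$ for $w>x$, while $h_S(x)=2\,\E_\mu[1/(x+W)]$. Take weights $(0.5,1,2,10)$ with $\mu$-probabilities $(0.02,0.978,0.001,0.001)$. Then $h_U(2)/h_S(2)\approx 1.124>1.045\approx h_U(10)/h_S(10)$. So $h_U/h_S$ is not monotone on the range of $w$, and the Lemma~\ref{lem:refinement} fallback fails for the same reason.

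Even the weaker single-crossing input that Step~3 feeds into Karlin--Novikoff can fail. Take $N=2$ and $\mu$ uniform on two states with unnormalized weights $1$ and $1000$; signs are unchanged under a common rescaling of weights and evaluation points. The sign of $h_U-h_S$ at $x=1,\,2.4,\,100,\,999,\,1000$ is then $-,+,-,+,+$. Adding tiny atoms at the three interior points makes this a sign pattern on the support.

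The missing idea, which is the paper's route, is to compare selection probabilities inside each fixed batch, where a monotone likelihood ratio does hold. You then pass to the marginals only through first-order dominance, which, unlike a monotone ratio, survives averaging over batches. With $z_j=w_j/\max_k w_k$, UR selects $i$ with probability $p_i^U=z_i\int_0^1\prod_{j\ne i}(1-z_ju)\,\mathrm{d}u$. Hence $p_i^U/z_i-p_k^U/z_k=(z_i-z_k)\int_0^1 u\prod_{j\notin\{i,k\}}(1-z_ju)\,\mathrm{d}u\ge 0$, while $p_i^{\mathrm{SIR}}/z_i=1/\sum_j z_j$ does not depend on $i$. Summing over $\{i:w_i>t\}$ against its complement and averaging over batches gives $\Pr(w(\widehat{Y}_{\mathrm{SIR}})>t)\le\Pr(w(\widehat{Y}_U)>t)$.

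The only marginal fact then needed is that $h_U$ is nonincreasing; no property of $h_S$ is used. The set $D_c:=\{y:h_U(w(y))>c\}$ is a lower weight set, so
\[
\E_\pi[(h_U(w)-c)_+]=P_N^U(D_c)-c\,\pi(D_c)\le P_N^{\mathrm{SIR}}(D_c)-c\,\pi(D_c)\le\E_\pi[(h_S(w)-c)_+]
\]
for every $c>0$. With equal means this is exactly the convex order, via piecewise-linear approximation of $f$ and monotone convergence. So your Step~3 goes through once its single-crossing input is replaced by these hinge inequalities. Your exponential-race representation of SIR is correct but plays no role.
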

\paragraph{Why the comparison holds.}
Appendix~\ref{app:comparison} proves that, for every $t > 0$,
\(
    \Pr(w(\widehat{Y}_{\mathrm{SIR}}) > t) \leq \Pr(w(\widehat{Y}_U) > t) \leq \pi(w > t).
\)
Hence, relative to SIR, UR shifts probability toward larger importance weights without overshooting the target tail probabilities.
We highlight that this alone would not imply a TV comparison.
A second property is that the density of the UR output relative to $\pi$ is nonincreasing in the weight.
Its excess probability is therefore at smaller weights and its shortfall at larger weights, so the threshold comparison controls the set attaining its TV error.
Appendix~\ref{app:comparison} gives the formal argument and extends it to every convex $f$-divergence.
\paragraph{Beyond two proposals.}
We now return to the binary example and let the proposal budget grow.
\begin{proposition}
\label{prop:binary-sir-separation}
Fix $\eps \in (0, 1/2)$, and let $\mu = (1/2, 1/2)$ and $\pi = (\eps, 1 - \eps)$.
For every $N \ge 1$,
\begin{align}
    \DTV(P_N^U,\pi) &= (1 - \eps)\left(\frac{1 - 2\eps}{2(1 - \eps)}\right)^N,\qquad
    \label{eq:binary-ur-numeric-rate} \\
    \DTV(P_N^{\mathrm{SIR}},\pi) &= \frac{2\eps(1 - \eps)(1 - 2\eps)}{N} + O_{\eps}(N^{-3/2}).
    \label{eq:binary-sir-numeric-rate}
\end{align}
\end{proposition}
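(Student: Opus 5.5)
For the UR identity I will use the bounded-weight decomposition \eqref{eq:bounded-mixture} with $C_\infty^\pi = w_H = 2(1-\eps)$.

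\textbf{UR part.} A score stays below $w_H$ exactly when $Y_i = 0$ and $U_i > w_L/w_H = \eps/(1-\eps)$. This has probability
\[
\tfrac12\cdot\frac{1-2\eps}{1-\eps} = \frac{1-2\eps}{2(1-\eps)},
\]
which equals $1 - 1/C_\infty^\pi$. Hence $\delta_N = \bigl(\tfrac{1-2\eps}{2(1-\eps)}\bigr)^N$. On $H_{w_H}^c$ all candidates are in state $0$, so $F^U_{N,w_H}$ is the point mass at $0$. Its TV distance from $\pi$ is $1-\eps$. By \eqref{eq:bounded-mixture}, $\DTV(P_N^U,\pi) = (1-\eps)\delta_N$ exactly. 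The only small check is the boundary case $S_{(N)} = w_H$: it requires $U_i = 1$ and has probability zero, so using $H$ with $\geq$ is harmless.

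\textbf{SIR part.} Let $K \sim \Bin(N,1/2)$ be the number of state-$1$ candidates. Given $K = k$ with $1 \le k \le N-1$, SIR returns state $1$ with probability
\[
g(k) := \frac{k w_H}{k w_H + (N-k) w_L} = \frac{k(1-\eps)}{k(1-\eps) + (N-k)\eps}.
\]
The formula also gives $g(0)=0$ and $g(N)=1$. Thus $\Pr(\widehat{Y}_{\mathrm{SIR}} = 1) = \E[g(K)]$. On a binary space the TV error is $|(1-\eps) - \E g(K)|$. Write $K = N/2 + \sqrt{N}Z/2$, so that $k/N = 1/2 + x$ with $x = O_P(N^{-1/2})$. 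Then
\[
g = \phi(x) := \frac{(1/2+x)(1-\eps)}{1/2 + x(1-2\eps)},
\]
with $\phi(0) = 1-\eps$. Taylor expansion gives
\[
\E\phi(x) = \phi(0) + \phi'(0)\,\E x + \tfrac12\phi''(0)\,\E x^2 + \E R .
\]
Here $\E x = 0$ and $\E x^2 = 1/(4N)$. A direct computation gives $\phi''(0) = -16\eps(1-\eps)(1-2\eps)$. Writing $\phi(x) = (1-\eps)(1/2+x)/(1/2 + cx)$ with $c = 1-2\eps$ gives $\phi'(0) = 2(1-\eps)(1-c) = 4\eps(1-\eps)$ and $\phi''(0) = -4c\,\phi'(0)$. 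Hence $\tfrac12\phi''(0)\cdot\tfrac{1}{4N} = -2\eps(1-\eps)(1-2\eps)/N$, and the deficit $(1-\eps) - \E g(K)$ has the claimed leading term with a positive sign.

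\textbf{Main obstacle: the remainder.} I need $|\E R| = O_\eps(N^{-3/2})$. The function $\phi$ is a smooth rational function on $x \in [-1/2,1/2]$. Its denominator $1/2 + cx \ge \eps > 0$ there, so $\phi'''$ is bounded by a constant depending on $\eps$. This gives $|R| \le C_\eps |x|^3$, and $\E|x|^3 = O(N^{-3/2})$ by standard binomial moment bounds. The endpoints $k=0$ and $k=N$ need no special treatment, since $g(0) = \phi(-1/2)$ and $g(N) = \phi(1/2)$ match the rule. So the same Taylor bound covers all $k$, and no exponentially small correction is needed.
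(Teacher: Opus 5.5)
Your proof is correct. The SIR half is essentially the paper's argument: the same rational function $g$, a Taylor expansion about $1/2$ with a third-derivative bound coming from the denominator being at least $\eps$, and $\E|X-1/2|^3 = O(N^{-3/2})$. The paper also records the exact concavity gap $4\eps(1-\eps)(1-2\eps)\E[(X-1/2)^2/d(X)]$, which gives the sign and explicit two-sided $\Theta(1/N)$ bounds for every $N$. You rightly do not need it, since $\bigl||a|-b\bigr|\le|a-b|$ for $b \geq 0$ transfers the expansion of the signed deficit to the TV error.

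The UR half takes a different route from the paper's proof of this proposition. The paper works directly from the selection rule. It computes $\Pr(\widehat{Y}_U = 1 \mid K = k) = k\int_0^1 (1-u)^{k-1}\bigl(1 - \tfrac{\eps}{1-\eps}u\bigr)^{N-k}\,\dd u$ and sums over $K\sim\Bin(N,1/2)$ using $k\binom{N}{k} = N\binom{N-1}{k-1}$ and the binomial theorem. You instead invoke the threshold decomposition \eqref{eq:bounded-mixture} at $C_\infty^\pi = w_H$. Non-crossing forces an all-zero batch, so the conditional law on $H^c$ is the point mass at $0$, and the error is exactly $(1-\eps)\delta_N$. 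The paper itself uses this argument for $\eps = 1/4$ in Section~\ref{sec:rohatgi} and for the $\eta$-family in Appendix~\ref{app:pilot-rs-local}.

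Your version is shorter and explains why the answer is ``no-crossing probability times $\pi(1)$''. It does, however, rest on the exact-threshold identity \eqref{eq:exact-threshold}. The paper's direct integral is self-contained and doubles as an independent check of that identity.

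Your boundary remark is harmless but unnecessary, because you characterized $\{S_i < w_H\}$ exactly. Also, contrary to the remark, a state-$0$ candidate can tie with $w_H$ at $U_i = \eps/(1-\eps)$, though this is likewise a null event.
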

\begin{proof}
See Appendix~\ref{app:binary-general}.
\end{proof}
The separation persists even when both states appear in nearly every batch.
SIR retains the leading bias of $O(1/N)$ caused by random candidate counts and normalization, whereas UR's remaining error is exponentially small.
Appendix~\ref{app:sir-bias} shows that this phenomenon extends to every fixed nontrivial pair with bounded positive weights.

\section{Uniqueness of the selection probabilities}
\label{sec:uniqueness}
Sections~\ref{sec:rohatgi} and~\ref{sec:comparison} compare UR's output law with budget-calibrated RS and SIR, respectively.
These comparisons show advantages over particular alternatives, but leave a structural question:
\emph{can different selection probabilities satisfy the RS guarantee for every target--proposal pair?}
Within the class of sampling algorithms restricted as below, we prove that the answer is no: the guarantee determines the probabilities on every fixed positive weight vector, and they must coincide with UR's.

\paragraph{What can the sampler use?}
Fix $N \geq 2$ and let the algorithm $\mathsf{A}$ return one of the $N$ candidates.
We restrict our attention to a class of selection rules satisfying two structural properties:
First, its selection probabilities depend only on the observed weights up to a common positive factor, not on candidate states or other information about $(\pi, \mu)$.
The same rule must be used for every pair.
Second, changing the order of the candidates must only reorder their selection probabilities.
For instance, weights $(1,2)$ and $(10,20)$ must give the same probabilities, while $(2,1)$ must exchange them.
We note that both UR and SIR satisfy these restrictions.
For a fixed weight vector $w_{1:N} := (w_1, \ldots, w_N)$, let $p_i^{\mathsf{A}}(w_{1:N})$ be the probability of selecting candidate $i$ over the sampler's random choices.
The restrictions require, for every $c > 0$ and permutation $\sigma$,
\begin{equation}
    p_i^{\mathsf{A}}(cw_{1:N}) = p_i^{\mathsf{A}}(w_{1:N}), \qquad
    p_{\sigma(i)}^{\mathsf{A}}(\sigma w_{1:N}) = p_i^{\mathsf{A}}(w_{1:N}),
    \label{eq:invariances}
\end{equation}
where $(\sigma w_{1:N})_{\sigma(i)} = w_i$.
Write $p_i^U(w_{1:N})$ for UR's probability on the same vector and $P_N^{\mathsf{A}}$ for the output distribution after averaging over the proposal draws.
The following theorem requires the bounded weight guarantee from Corollary~\ref{cor:rates}.

\begin{theorem}
\label{thm:uniqueness}
Fix $N \geq 2$ and suppose $\mathsf{A}$ satisfies the information restriction above and~\eqref{eq:invariances}.
If a finite constant $L_N$, independent of $(\pi,\mu)$, satisfies
\begin{equation}
    \DTV(P_N^{\mathsf{A}},\pi) \leq L_N\left(1 - \frac{1}{C_\infty^\pi}\right)^N
    \label{eq:global-characterization}
\end{equation}
for every target and proposal on a finite state space with strictly positive probabilities, then
\[
    p_i^{\mathsf{A}}(w_{1:N}) = p_i^U(w_{1:N}) \qquad \text{for every } w_{1:N} \in (0,\infty)^N \text{ and } i \in [N].
\]
Conversely, UR satisfies the bound with $L_N = 1$.
\end{theorem}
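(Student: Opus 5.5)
The converse is immediate from Corollary~\ref{cor:rates}, since $\DTV(P_N^U,\pi)\le(1-1/C_\infty^\pi)^N$. For the forward direction the plan is to use the constant $L_N$ through a limiting regime in which the right-hand side of~\eqref{eq:global-characterization} vanishes faster than any mismatch in selection probabilities can be compensated. The key observation is that if $C_\infty^\pi$ is close to $1$, the benchmark $(1-1/C_\infty^\pi)^N$ is of order $\eta^N$. Meanwhile, a deviation of $p^{\mathsf{A}}$ from $p^U$ on a given weight vector produces an error of lower order in $\eta$.

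Fix a weight vector $w_{1:N}\in(0,\infty)^N$. By scale invariance we may assume its entries are distinct values $v_1<\dots<v_k$ with multiplicities; ties and general vectors are handled by the same construction. I will build a finite state space with states $1,\dots,k$ together with one extra state $0$. State $0$ carries proposal mass $1-\eta$ and importance weight close to $1$, and states $j\ge1$ carry small proposal masses $\eta q_j$ with weights proportional to $v_j$. The parameters are adjusted so that $\pi$ and $\mu$ are both probability vectors and $C_\infty^\pi-1=O(\eta)$.

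Under the information restriction and~\eqref{eq:invariances}, the law $P_N^{\mathsf{A}}$ is a polynomial in $\eta$ and the $q_j$. Its coefficients are the selection probabilities $p^{\mathsf{A}}$ evaluated on weight vectors formed from the values $\{1\}\cup\{v_j\}$. Expanding $P_N^{\mathsf{A}}(j)-\pi(j)$ in powers of $\eta$, the bound $L_N\eta^N$ forces every coefficient of order below $N$ to vanish. The same expansion holds for UR with the bound constant $1$, so the low-order coefficients of $P_N^{\mathsf{A}}$ and $P_N^U$ must coincide. Letting the $q_j$ vary freely and using polynomial identity in the $q_j$, I will read off equalities of the form $\sum(\text{multinomial weights})\,p^{\mathsf{A}}_i=\sum(\ldots)\,p^U_i$ over batches with prescribed counts of the rare states.

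I will then proceed by induction on the number of non-baseline entries in the batch. Batches containing $m<N$ rare candidates contribute at order $\eta^m$, so the order-$m$ coefficient isolates them. Selection probabilities for batches with fewer rare entries are already determined by the induction hypothesis, which leaves $p^{\mathsf{A}}$ equal to $p^U$ on batches of the form $(1,\dots,1,v_{j_1},\dots,v_{j_m})$. The base case $m=1$ is a direct two-level computation in the spirit of Appendix~\ref{app:two-level-rs}.

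\textbf{Main obstacle.} I expect the hardest step to be reaching batches with \emph{no} baseline entry, i.e.\ $m=N$. These appear at order $\eta^N$, exactly the order the bound permits, so the argument above does not determine them. To handle them I would try moving the baseline: take the smallest weight $v_1$ of $w_{1:N}$ itself as the common state of mass $1-\eta$. Then every vector is reached as a batch with at least one baseline entry, except a vector consisting of all-equal entries, where symmetry gives $1/N$ directly. Carrying this out requires the construction to keep $C_\infty^\pi-1=O(\eta)$ even though the baseline weight now sits below the others. I would then verify carefully, using~\eqref{eq:invariances} to reduce many coefficient equations to a triangular system, that the number of linear equations suffices to pin down each $p_i^{\mathsf{A}}$ individually rather than only certain symmetric sums.
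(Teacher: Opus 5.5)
Up to its last step your plan matches the paper's argument. That argument uses a family with fixed relative weights and rare lower states, writes output probabilities as polynomials in the proposal masses, and lets the bound force the low-order coefficients to match UR's. Distinct monomials then separate the batch compositions, followed by induction on the number $m$ of rare candidates.

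The gap is your treatment of the $m=N$ obstacle. Placing the baseline state (mass $1-\eta$) at the \emph{smallest} weight $v_1$ destroys the mechanism. The normalizer tends to $v_1$, so $C_\infty^\pi \to v_k/v_1>1$ and $(1-1/C_\infty^\pi)^N$ stays bounded away from zero. Since $L_N$ is arbitrary, \eqref{eq:global-characterization} then gives only an $O(1)$ bound and constrains no coefficient. The condition you say must be arranged, $C_\infty^\pi-1=O(\eta)$ with the baseline below another state of positive mass, is impossible on a finite space with strictly positive probabilities. Letting the larger weights approach $v_1$ instead would make the relative weights vary with $\eta$. The polynomial argument cannot absorb that, because no continuity of $p^{\mathsf{A}}$ is assumed.

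The missing observation is that the obstacle never has to be faced. The baseline must carry the \emph{largest} weight, and your first construction already requires this. With baseline weight $1$ one has $1-1/C_\infty^\pi=\eta\sum_j q_j(1-v_j)$, which is $O(\eta)$ only when every $v_j\le 1$.

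For an arbitrary $w_{1:N}$, use scale invariance to divide by $\max_i w_i$, and by nothing larger. Its maximal entries then sit at the baseline state, so $w_{1:N}$ is a batch with at most $N-1$ rare candidates and is determined at some order $m\le N-1<N$. All-rare batches do occur in every family at order $\eta^N$, but they never need to be identified there. Each such vector is covered by the family whose baseline is its own maximum.

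With this normalization the induction can start at $m=0$: the rare state is absent, so both rules give it probability zero. The all-equal vector is simply the batch with no rare candidates. The coefficient identities determine, for each batch composition, the total probability of returning each rare state. Exchangeability of equal-weight candidates under \eqref{eq:invariances} then splits each such total evenly among its candidates, so no further counting of linear equations is needed.
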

\begin{proof}
    See Appendix~\ref{app:uniqueness}.
    We note that the requirement ranges over all $(\pi, \mu)$ at fixed $N$, so a different rule may have smaller error on a particular pair without satisfying this requirement.
\end{proof}


\begin{figure}[t] 
    \centering
    \includegraphics[width=\linewidth, page=1]{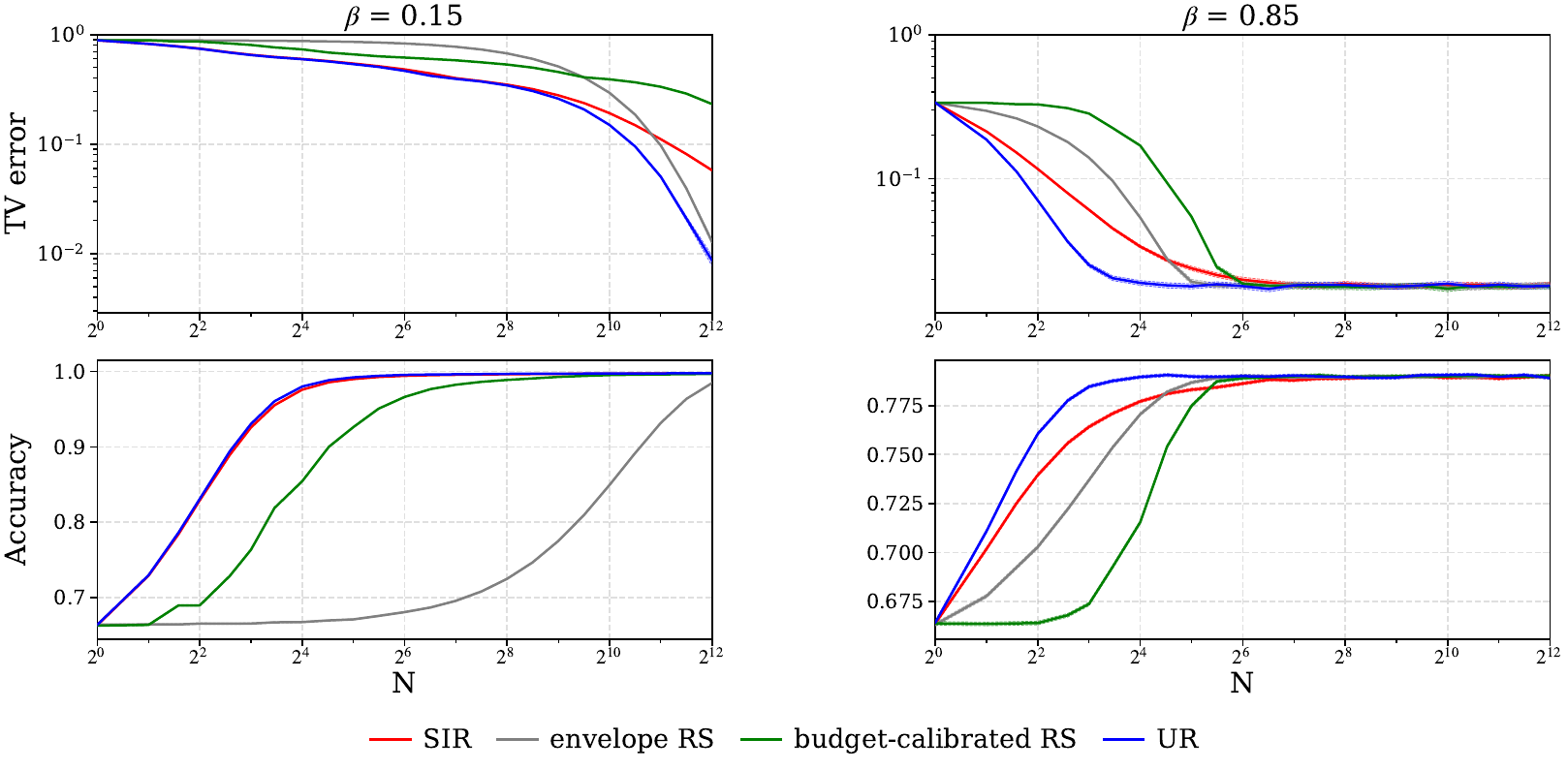}
    \caption{TV error (top) and ground-truth accuracy (bottom) w.r.t.\ sampling budget $N$ on GSM8K Problem~\#258, for $\beta = 0.15$ (left) and $\beta = 0.85$ (right).
    The bands indicate 95\% confidence intervals.
    }
    \label{fig:tv_error_and_acc}
\end{figure}

\section{Test-time scaling experiments}
\label{sec:experiments}
We complement our theoretical findings with controlled LLM experiments on math reasoning tasks where the samplers approximate a reward-tilted target policy, also known as soft BoN~\citep{aminian26bestofn,verdun25soft}.
We evaluate both sampling fidelity, measured by TV distance, and ground-truth answer accuracy.

\paragraph{Experimental setup and evaluation.}
For each prompt $x$, we pre-generate a fixed response pool $\mathcal{A}_x^\star$ of $N^\star$ responses from \path{Llama-3.2-3B-Instruct}~\citep{grattafiori2024llama}.
Let $\widehat{\mu}(y \mid x)$ denote the empirical distribution on $\mathcal{A}_x^\star$.
For each sampling budget $N$, we draw $N$ candidates i.i.d.\ from $\widehat{\mu}(\cdot \mid x)$ and apply each sampling procedure.
Each response $y$ is scored by the OASST reward model~\citep{kopf2023openassistant}, yielding $\widehat{r}(x,y)$.
Motivated by the standard KL-regularized RLHF objective~\citep{jaques2017sequence, jaques2020human, rafailov2023direct}, we define the reward-tilted target policy
\(
    \pi_\beta(y \mid x) \propto \widehat{\mu}(y \mid x)\exp(\tfrac{\widehat{r}(x,y)}{\beta}),\ y \in \mathcal{A}_x^\star,
\)
so that the unnormalized importance weight for each sampler is $w_u(x,y) = \exp(\tfrac{\widehat{r}(x,y)}{\beta})$.
We compare four methods:
\textsc{UR (ours)}, \textsc{envelope RS} with threshold equal to the maximum weight in $\mathcal{A}_x^\star$, \textsc{budget-calibrated RS}, adapted from~\citet[Algorithm~5]{rohatgi2025}, and \textsc{SIR}.
We evaluate on GSM8K~\citep{cobbe2021training}, varying the sampling budget $N$ and temperature parameter $\beta$.
Estimates of the TV error $\DTV(P_N, \pi_\beta)$ and ground-truth accuracy are based on $10^6$ Monte Carlo trials, with ground-truth accuracy defined as the probability that the selected response has the correct final answer.
We also evaluate on MATH500~\citep{hendrycks2021measuring, lightman2024let}, where we observe the same trends; additional results and implementation details are given in Appendix~\ref{app:exp_details}.

\paragraph{Results and analysis.}
Figure~\ref{fig:tv_error_and_acc} reports TV error (top) and ground-truth accuracy (bottom) for the same GSM8K problem under two values of $\beta$.
Across both settings, UR achieves lower TV error than budget-calibrated RS and SIR, consistent with Theorems~\ref{thm:pilot-rs-fdom} and~\ref{thm:SIR},
while remaining comparable to envelope RS without requiring threshold tuning.
Before reaching the Monte Carlo floor, UR exhibits substantially faster decay in TV error than budget-calibrated RS and SIR.
The accuracy curves show how these distributional differences are reflected in ground-truth accuracy.
For $\beta = 0.15$, the methods exhibit visible differences in TV error at larger sampling budgets, while their accuracies are already close to the common target accuracy.
This indicates that the TV upper bound on the accuracy gap is loose in this regime.
Specifically, accuracy probes only the probability of the correct answer event, so
\(
    \left|\operatorname{Acc}(P_N; x) - \operatorname{Acc}(\pi_\beta; x)\right|
    \leq \DTV(P_N, \pi_\beta).
\)
For $\beta = 0.85$, the accuracy gaps track the TV errors more closely, with UR approaching the common target accuracy at a smaller budget.
Additional results across prompts and values of $\beta$, including MATH500, are given in Appendix~\ref{app:exp_results_analysis}.

\clearpage

\bibliography{iclr2027_conference}
\bibliographystyle{iclr2027_conference}

\appendix

\section{Related work}
\label{app:related_work}

\paragraph{Approximate rejection sampling.}
Our problem is closely related to the approximate rejection sampling (RS) framework of
\citet{block2023}: given $N$ independent samples from a proposal distribution $\mu$, the goal is to select one whose marginal distribution is close in total variation to a target distribution $\pi$.
They characterize the minimax sample complexity of this problem under $f$-divergence constraints between target and proposal.
Their upper bounds are obtained by truncating the target according to a likelihood ratio threshold and applying RS to the resulting truncated distribution.
Our focus is complementary.
Rather than optimizing sample complexity over a divergence class, we study a single parameter-free selection rule that uses only the observed unnormalized importance weights.
In particular, UR requires neither the normalizing constant of the target nor an envelope bound or clipping threshold as an algorithmic input.
The clipped RS formulation has also been used in recent inference-time sampling work~\citep{huang2025}.
\vspace{-0.5em}
\paragraph{Priority sampling.}
UR uses the same selection rule as a special case of classical priority sampling~\citep{duffield2007}.
Given a fixed collection of weighted items, priority sampling assigns each item an independent priority $w_i/U_i$ and retains the items with the largest priorities.
When only one item is retained, this reduces exactly to the UR selection rule, $I = \arg\max_i \frac{w_i}{U_i}$.
Classical priority sampling, however, is designed for a different statistical
task:
retain the top-$k$ items with the largest priorities.
Together with the $(k+1)$st largest priority as a threshold, the retained items are assigned adjusted weights that yield unbiased estimators of arbitrary subset sums (i.e., $\sum_{i\in A} w_i$ for subsets $A$ of the population).
Consequently, the classical analysis is concerned with inclusion, unbiased
weight estimation, and the variance of subset-sum estimators.
In our setting, the items themselves are random,
$Y_i \stackrel{\mathrm{iid}}{\sim} \mu$, and we instead study the marginal distribution of the selected candidate $Y_I$ as an approximation to the target distribution $\pi$.
Therefore, while UR is a special case of the classical priority selection rule, our
objective and analysis concern the induced sampling distribution rather than
subset-sum estimation.
\vspace{-0.5em}
\paragraph{Sampling importance resampling and importance sampling.}
Sampling importance resampling (SIR) also generates a candidate pool from $\mu$ and selects a candidate using importance weights.
Conditional on $Y_{1:N}$, SIR selects index $i$ with probability
\(
    \frac{w_u(Y_i)}{\sum_{j=1}^N w_u(Y_j)},
\)
and therefore coincides with sampling from the self-normalized empirical importance distribution.
For finite $N$, the resulting marginal distribution is generally different
from $\pi$, although it converges to the target as the candidate pool grows.
\citet{skare2003improved} propose adjusted resampling probabilities to reduce
this finite sample bias and compare the resulting method with independent Metropolis--Hastings.
Related work on self-normalized importance sampling studies the bias and
stability of estimators based on the same normalized importance weights;
see, e.g.,~\citet{agapiou2017} and~\citet{deligiannidis2026}.
These works primarily concern estimation of expectations under $\pi$, whereas our primary object is the marginal distribution of a single selected candidate and its total variation distance from $\pi$.
\vspace{-0.5em}
\paragraph{Independent Metropolis--Hastings.}
Independent Metropolis--Hastings (IMH) also combines proposals from a fixed
distribution $\mu$ with the importance ratio between $\pi$ and $\mu$.
Unlike the one-shot selection rules considered here, IMH constructs a Markov
chain with stationary distribution $\pi$.
Its convergence properties and relationship with rejection and importance sampling have been studied extensively.
\citet{liu1996} compares metropolized independent sampling with rejection
sampling and importance sampling, while~\citet{mengersen1996} establish classical convergence results for independence samplers.
More recent work gives sharp convergence analyses~\citep{wang2022} and studies IMH under unbounded weight functions~\citep{deligiannidis2026}.
The independence of its proposals also permits parallel computation;
\citet{jacob2011} exploit batches of independent proposals to improve IMH-based estimation.
Our setting differs in that no Markov chain is run:
given a fixed budget of $N$ independently generated and scored candidates,
the algorithm returns one of them directly.
This one-shot formulation is particularly natural for inference-time sampling, where candidate generation and scoring can be parallelized and the desired output is a single response rather than a sequence from a
stationary Markov chain.
\section{Proof of the RS bound}
\label{app:oracle}
Section~\ref{sec:oracle} assembles Theorem~\ref{thm:main} from the acceptance calculation and the comparison of conditional errors.
We first prove the comparison in Lemma~\ref{lem:accepted-selection}.
The remaining subsections justify the conditioning, give the exact clipping error, and extend the results to zero weights.
Write $W := w(Y)$ for $Y \sim \mu$, with $0 < W < \infty$ almost surely until Appendix~\ref{app:zero-weights}.

\subsection{Proof of Lemma~\ref{lem:accepted-selection}}
Fix $N \geq 1$ and $M > 0$.
Let $r_M$ and $q_M$ be functions of the scalar weight such that
\[
    \frac{\dd Q_{N, M}^U}{\dd \pi}(y) = r_M(w(y)), \qquad \frac{\dd Q_M^{\mathrm{RS}}}{\dd \pi}(y) = q_M(w(y)).
\]
We suppress the fixed $N$ in $r_M$.
Integrating the joint law~\eqref{eq:app-winning-score} over $s \geq M$ gives, for $x > 0$,
\[
    r_M(x) = \frac{N}{\Pr(H_M)} \int_{\max\{x, M\}}^\infty \frac{G(s)^{N - 1}}{s^2} \dd s, \quad
    q_M(x) = \frac{\min\{x, M\}}{A_M x},
\]
where $G(s) := \Pr(W/U \leq s)$ for an independent $U \sim \Unif(0, 1)$.
Both functions are positive and finite, and their compositions with $w$ integrate to one under $\pi$.
The comparison lemma below requires two properties: $r_M$ is nonincreasing, and $r_M/q_M$ is nondecreasing.
The first follows because the lower integration limit cannot decrease with $x$.
For the second, both functions are constant on $x \leq M$.
For $x > M$, their ratio is a positive constant times
\begin{equation}
    x \int_x^\infty \frac{G(s)^{N - 1}}{s^2} \dd s = \int_0^1 G(x/u)^{N - 1} \dd u,
    \label{eq:app-change-variable-monotonicity}
\end{equation}
using $u = x/s$.
The right side is nondecreasing in $x$ because $G$ is nondecreasing.
The two expressions for the ratio agree at $x = M$, proving the second property.
Lemma~\ref{lem:refinement}, with $P = \pi$, $Q = Q_M^{\mathrm{RS}}$, and $R = Q_{N, M}^U$, now proves Lemma~\ref{lem:accepted-selection}.

\begin{lemma}[A comparison of densities]
\label{lem:refinement}
Let $R \ll Q \ll P$ be probability measures whose densities have the form $\dd Q/\dd P = q(W) > 0$ and $\dd R/\dd P = r(W)$ for a measurable real function $W$.
If $r$ is nonincreasing and $r/q$ is nondecreasing on the range of $W$, then
\[
    \DTV(P, R) \leq \DTV(P, Q).
\]
\end{lemma}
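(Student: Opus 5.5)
We want to show $\DTV(P,R)\le \DTV(P,Q)$. Write total variation as $\DTV(P,R)=\int (1-r(W))_+\,\dd P$, i.e.\ the $P$-mass of the shortfall of $R$. Because $r$ is nonincreasing, the set $\{r<1\}$ is an upper set in $W$. So this integral equals $\int_{\{W> t\}}(1-r(W))\,\dd P$, or the same integral over $\{W\ge t\}$, for a threshold $t$. Hence
\[
\DTV(P,R)=\sup_{t}\bigl(P(W> t)-R(W> t)\bigr),
\]
where the supremum runs over upper tail sets of $W$ (open or closed), and the supremum is attained at the crossing point of $r$ with $1$. For $Q$ we have in general $\DTV(P,Q)\ge P(D)-Q(D)$ for every $D$. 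It therefore suffices to prove the tail domination
\[
R(W>t)\ \ge\ Q(W>t)\quad\text{for every } t
\]
(and likewise for the closed tails $\{W\ge t\}$). Then
\[
P(W>t)-R(W>t)\le P(W>t)-Q(W>t)\le \DTV(P,Q),
\]
and taking the supremum gives the claim.

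The tail domination comes from monotonicity of the likelihood ratio $h:=r/q=\dd R/\dd Q$. This function is nondecreasing in $W$, and $\int h\,\dd Q=1$. Define the conditional mean of $h$ given the tail,
\[
m(t)=\frac{\int_{\{W>t\}} h\,\dd Q}{Q(W>t)} .
\]
On the complement, $h(W)\le h$-values on the tail, so the tail average is at least the complement average. Since the total average is $1$, this gives $m(t)\ge 1$, which is $R(W>t)\ge Q(W>t)$. Equivalently, this is the Chebyshev/Harris association inequality
\[
\int h\,\mathbf 1\{W>t\}\,\dd Q\ \ge\ \int h\,\dd Q\cdot\int \mathbf 1\{W>t\}\,\dd Q ,
\]
which holds for two nondecreasing functions of the single real variable $W$. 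It can be proved by the standard double-integration trick:
\[
\iint \bigl(h(W)-h(W')\bigr)\bigl(\mathbf 1\{W>t\}-\mathbf 1\{W'>t\}\bigr)\,\dd Q\,\dd Q\ \ge 0,
\]
because the integrand is pointwise nonnegative when $h$ and the indicator are both monotone in the same direction.

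The main obstacle is the first step: making the identification of $\DTV(P,R)$ with a tail difference rigorous when $r$ can have ties or jumps at the crossing level. I would handle this by noting that $\{r<1\}$ and $\{r\le 1\}$ both have the form $\{W>t\}$ or $\{W\ge t\}$ (up to $P$-null sets outside the range of $W$). I would therefore prove the tail domination for both open and closed tails, using the same association argument. Either choice of set yields $\DTV(P,R)$, since points with $r=1$ contribute nothing. Measurability and integrability are immediate, because $h\ge0$ and $Q(h)=1$.
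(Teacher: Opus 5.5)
Your proposal is correct and follows essentially the paper's route: reduce to $R(D)\ge Q(D)$ for $D=\{r(W)<1\}$, then show it with the double integral of $v(W(y))-v(W(z))$ against $Q\otimes Q$, where $v=r/q$. The paper skips your open/closed tail-set bookkeeping by using the pointwise implication $y\in D,\ z\notin D \Rightarrow W(y)>W(z)$ directly on $D\times D^c$.
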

\begin{proof}
Let $D := \{y : r(W(y)) < 1\}$, the set where the density ratio of $R$ to $P$ is below one.
Hence $\DTV(P, R) = P(D) - R(D)$, and it is enough to show $R(D) \geq Q(D)$.
Put $v := r/q$, so $\dd R/\dd Q = v(W)$ and $\int v(W) \dd Q = 1$.
If $y \in D$ and $z \notin D$, then $r(W(y)) < 1 \leq r(W(z))$.
Since $r$ is nonincreasing, $W(y) > W(z)$, and the monotonicity of $v$ gives $v(W(y)) \geq v(W(z))$.
Thus, when $Q$ is multiplied by the density ratio $v(W)$ to obtain $R$, the factors on $D$ are at least as large as those outside $D$.
Because both measures have total mass one, this implies
\begin{align*}
    R(D) - Q(D)
    &\stackrel{\mathrm{(i)}}{=} R(D) Q(D^c) - Q(D) R(D^c) \\
    &\stackrel{\mathrm{(ii)}}{=} \int_D \int_{D^c} [v(W(y)) - v(W(z))] Q(\dd z) Q(\dd y) \\
    &\stackrel{\mathrm{(iii)}}{\geq} 0.
\end{align*}
Here (i) uses total mass one, (ii) substitutes $\dd R = v(W) \dd Q$, and (iii) uses the pointwise comparison above.
The absolute integrand is bounded by $v(W(y)) + v(W(z))$, so it is integrable.
It follows that
\[
    \DTV(P, R) = P(D) - R(D) \leq P(D) - Q(D) \leq \DTV(P, Q),
\]
where the last inequality is the definition of TV.
\end{proof}

\subsection{Proof of Proposition~\ref{prop:score-law}}
\label{app:conditional_law}

We first consider the state and score of a single candidate, then account for the other candidates.
Given $Y_i = y$, the change of variables $u = w(y)/s$ has absolute derivative $w(y)/s^2$ and requires $s > w(y)$.
Since $w(y)\mu(\dd y) = \pi(\dd y)$, the joint distribution is
\begin{equation}
    \Pr(Y_i \in \dd y,S_i \in \dd s) = \frac{\1\{w(y) < s\}}{s^2}\pi(\dd y)\dd s.
    \label{eq:app-single-score}
\end{equation}
To be selected, this candidate must also have a larger score than the other $N - 1$ candidates.
Let $G(s) := \Pr(S_i \leq s)$ be the distribution function of one score.
Independence yields the factor $G(s)^{N - 1}$.
Crucially, this factor depends on $s$ but not on $y$.
The scores are atomless, so ties have probability zero, and summing over the $N$ possible selected indices yields:
\begin{equation}
    \Pr(\widehat{Y}_U \in \dd y,S_{(N)} \in \dd s) = \frac{NG(s)^{N - 1}}{s^2}\1\{w(y) < s\}\pi(\dd y)\dd s.
    \label{eq:app-winning-score}
\end{equation}
Integrating over $y$ gives the density of $S_{(N)}$ as $\tfrac{N G(s)^{N-1}}{s^2}\pi(w < s)$.
Therefore, for every measurable $D$ and almost every such $s$,
\[
    \Pr(\widehat{Y}_U \in D \mid S_{(N)} = s)
    = \frac{\pi(D \cap \{w < s\})}{\pi(w < s)}
    = \pi(D \mid w < s),
\]
proving~\eqref{eq:score-target-restriction};
if $s > C_\infty^\pi$, $\pi(w < s) = 1$, so~\eqref{eq:winning-score-target} holds.
This completes the proof.

\subsection{Conditioning on the largest score}
The following identities are consequences of the joint law in Proposition~\ref{prop:score-law}.
They also make precise the conditioning on a continuously distributed score.
\begin{lemma}[Integrating the joint distribution]
\label{lem:joint-score}
Integrating~\eqref{eq:app-winning-score} over $s \geq M$ and dividing by $\Pr(H_M)$ gives $r_M$ above.
Integrating over all positive scores gives
\[
    \frac{\dd P_N^U}{\dd \pi}(y) = N \int_{w(y)}^\infty \frac{G(s)^{N - 1}}{s^2} \dd s.
\]
If $C_\infty^\pi < \infty$, then for every measurable state set $D$ and Borel score set $B \subset (C_\infty^\pi, \infty)$,
\begin{equation}
    \Pr(\widehat{Y}_U \in D, S_{(N)} \in B) = \pi(D) \Pr(S_{(N)} \in B).
    \label{eq:app-score-factorization}
\end{equation}
\end{lemma}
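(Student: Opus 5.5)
The plan is to derive all three identities of Lemma~\ref{lem:joint-score} by integrating the joint law~\eqref{eq:app-winning-score} from Appendix~\ref{app:conditional_law} over suitable score sets. Its density factorizes as a function of $s$ times $\1\{w(y)<s\}\pi(\dd y)$, so each claim reduces to Fubini--Tonelli for a nonnegative integrand together with identifying the marginal of $S_{(N)}$.

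\textbf{The first claim.} Integrating~\eqref{eq:app-winning-score} over $s \geq M$, with $D$ a measurable state set, gives
\[
    \Pr(\widehat{Y}_U \in D, H_M) = \int_D N\int_{\max\{w(y),M\}}^\infty \frac{G(s)^{N-1}}{s^2}\dd s\,\pi(\dd y).
\]
The lower limit is $\max\{w(y),M\}$ because the indicator forces $s>w(y)$ and the event forces $s\ge M$. Dividing by $\Pr(H_M)$ gives the density of $Q_{N,M}^U$ relative to $\pi$ as exactly $r_M(w(y))$.

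We must check that $\Pr(H_M)>0$. The score $S_i=W/U_i$ is unbounded above, since $U_i$ can be arbitrarily small and $W>0$ almost surely. Hence $\Pr(S_i\ge M)>0$, and so $\Pr(H_M)>0$.

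\textbf{The second claim.} Taking $M\downarrow 0$, i.e.\ integrating over all $s>0$, gives the stated formula for $\dd P_N^U/\dd\pi$. Because the integrand is nonnegative, Tonelli justifies exchanging the $y$ and $s$ integrals with no integrability assumptions.

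\textbf{The factorization~\eqref{eq:app-score-factorization}.} Take $B\subset(C_\infty^\pi,\infty)$. For $\pi$-almost every $y$ we have $w(y)\le C_\infty^\pi<s$ for all $s\in B$, so the indicator equals one $\pi\otimes\dd s$-almost everywhere on $D\times B$. This holds because $\pi\ll\mu$, so the $\mu$-essential bound on $w$ is also a $\pi$-almost sure bound. The joint integral then splits as
\[
    \pi(D)\int_B \frac{NG(s)^{N-1}}{s^2}\dd s.
\]
Setting $D$ equal to the whole space identifies the second factor as $\Pr(S_{(N)}\in B)$, which gives~\eqref{eq:app-score-factorization}.

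\textbf{Main obstacle.} The only delicate point is the rigor of~\eqref{eq:app-winning-score} itself: ties must have probability zero, and the disjoint union over the selected index must hold. Both were established in Appendix~\ref{app:conditional_law}, since scores are atomless given positive finite weights and continuous $U_i$. With that in hand, the proof above consists only of measure-theoretic bookkeeping.
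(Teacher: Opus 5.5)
Your proposal is correct and follows the paper's own proof. You integrate the nonnegative joint density~\eqref{eq:app-winning-score} over $s \geq M$ and over all $s>0$ via Tonelli. For $B \subset (C_\infty^\pi,\infty)$, you use that $\1\{w(y)<s\}=1$ for $\pi$-almost every $y$, so the density factors into $\pi(\dd y)$ times the density of $S_{(N)}$.

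Your extra checks make explicit what the paper leaves implicit: that $\Pr(H_M)>0$, that $\pi \ll \mu$ carries the $\mu$-essential bound over to $\pi$, and that taking $D$ to be the whole space identifies the second factor as $\Pr(S_{(N)}\in B)$.
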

\begin{proof}
The first two identities follow by integrating the nonnegative joint density.
For $s > C_\infty^\pi$, the indicator $\1\{w(y) < s\}$ is one for $\pi$-almost every $y$.
The joint density therefore factors into $\pi(\dd y)$ and the density of $S_{(N)}$, which gives~\eqref{eq:app-score-factorization} after integration over $D \times B$.
\end{proof}
Conditioning on $S_{(N)} = s$ is not division by the probability of this event, which is zero.
Instead, the joint density verifies the conditional probability formula
\[
    \Pr(\widehat{Y}_U \in D \mid S_{(N)} = s) = \frac{\pi(D \cap \{w < s\})}{\pi(w < s)}
\]
where $\pi(w < s) > 0$.
On the remaining score values we may define it to be $\pi(D)$.
Those values carry no score probability, since the score density contains the factor $\pi(w < s)$.
\Cref{eq:app-score-factorization} also shows that, conditional on $S_{(N)} > C_\infty^\pi$, the output state has law $\pi$ independently of the largest score.

\subsection{Accepted samples and the event of no acceptance}
Put $a_M := A_M/M$ and let $J_M$ be the first accepted index on $H_M$.
For $J_M = i$, the first $i - 1$ candidates must be rejected and candidate $i$ must be accepted.
Thus
\begin{align*}
    \Pr(Y_{J_M} \in D, H_M)
    &\stackrel{\mathrm{(i)}}{=} \sum_{i = 1}^N (1 - a_M)^{i - 1} a_M Q_M^{\mathrm{RS}}(D) \\
    &\stackrel{\mathrm{(ii)}}{=} [1 - (1 - a_M)^N] Q_M^{\mathrm{RS}}(D),
\end{align*}
where (i) follows from the independence and the accepted law of one candidate; and (ii) follows from evaluating the geometric sum.
Dividing by $\Pr(H_M) = 1 - (1 - a_M)^N$ proves the claimed conditional law.
This calculation also explains why that law does not depend on $N$.
The clipping bound in Lemma~\ref{lem:rs-components} can be sharpened to an exact identity:
\begin{align}
    \DTV(\pi, Q_M^{\mathrm{RS}})
    &\stackrel{\mathrm{(i)}}{=} \int \left(w(y) - \frac{\min\{w(y), M\}}{A_M}\right)_+ \mu(\dd y) \notag \\
    &\stackrel{\mathrm{(ii)}}{=} \int \left(w(y) - \frac{M}{A_M}\right)_+ \mu(\dd y) \notag \\
    &\stackrel{\mathrm{(iii)}}{=} \calE_{M/A_M} \leq \calE_M,
    \label{eq:app-clipped-error}
\end{align}
where (i) holds since TV is the integrated positive difference of the densities.
For (ii), both positive parts vanish when $w(y) \leq M$, since $A_M \leq 1$, and their contents agree when $w(y) > M$.
Step (iii) follows due to the definition of $\calE_M$ evaluated at the threshold $M/A_M$.
The last inequality uses $M/A_M \geq M$ and monotonicity in the threshold.
Consequently, ensuring the factor $1 - \delta_M$ in~\eqref{eq:oracle-assembly} gives $(1 - \delta_M) \calE_{M/A_M} + \delta_M$, where $\delta_M = (1 - A_M/M)^N$.

\paragraph{A direct acceptance interpretation for bounded weights.}
Suppose $C_\infty^\pi < \infty$.
For candidate $i$, define
\[
    H_i := \left\{U_i \leq \frac{w(Y_i)}{C_\infty^\pi}\right\}, \quad V_i := \frac{C_\infty^\pi U_i}{w(Y_i)} \quad \text{on } H_i.
\]
For every measurable $D$ and $0 \leq v \leq 1$,
\begin{equation}
    \Pr(Y_i \in D, H_i, V_i \leq v) = \int_D \frac{v w(y)}{C_\infty^\pi} \mu(\dd y) = \frac{v \pi(D)}{C_\infty^\pi}.
    \label{eq:accepted-independence}
\end{equation}
After division by $\Pr(H_i) = 1/C_\infty^\pi$, this factors as $\pi(D) v$.
Hence, conditional on acceptance, $Y_i$ has law $\pi$ and is independent of $V_i \sim \Unif(0, 1)$.
Conditioning on which candidates are accepted preserves independence of these pairs, because each acceptance uses only its own candidate and uniform.
Among the accepted candidates, UR selects the smallest $V_i$, equivalently the largest score $C_\infty^\pi/V_i$.
This choice uses variables independent of the accepted states, so the selected state still has law $\pi$.
Together with the independent probability $1/C_\infty^\pi$ of each acceptance, this proves~\eqref{eq:exact-threshold} directly.

\subsection{Zero importance weights}
\label{app:zero-weights}
Now allow $w \geq 0$ with $\E_\mu w = 1$.
If all observed weights are zero, let both UR and SIR choose each of the $N$ candidate indices with probability $1/N$.
Otherwise neither rule selects a zero weight.
Write $p_0 := \mu(w = 0) < 1$.
The all zero event has probability $p_0^N$, and, if $p_0 > 0$, both conditional output laws on that event are $\mu(\cdot \mid w = 0)$.
The joint density formulas still hold for positive scores, with an additional mass $p_0^N$ at $S_{(N)} = 0$.
For every $M > 0$, the conditional laws $Q_{N, M}^U$ and $Q_M^{\mathrm{RS}}$ put no mass on $\{w = 0\}$.
Their density formulas and monotonicity properties on $\{w > 0\}$ are unchanged, so Lemma~\ref{lem:accepted-selection} still applies.
The mixture argument in Theorem~\ref{thm:main} then applies with the same bound on $H_M^c$.
The bounded weight conclusion is unchanged, and Appendix~\ref{app:rates} includes the extra zero score event in its moment argument.
For the SIR comparison, the proof on a fixed batch in Appendix~\ref{app:comparison} applies after zero entries are removed from a batch with a positive weight.
On a batch where all weights are zero, the two rules agree.
On $\{w > 0\}$, the density of the UR law relative to $\pi$ is still $h_N^U(w(y))$ from~\eqref{eq:marginal-density}, but its integral is now $1 - p_0^N$.
All remaining mass is on a set of target probability zero.
Therefore the full probability gap is attained on
\[
    D := \{y : w(y) > 0, h_N^U(w(y)) < 1\}.
\]
This is an upper weight set within $\{w > 0\}$, possibly the whole positive weight set.
The selected-weight comparison, including the endpoint $0$ where the positive masses agree, gives $P_N^U(D) \geq P_N^{\mathrm{SIR}}(D)$.
Consequently,
\[
    \DTV(P_N^U, \pi) = \pi(D) - P_N^U(D) \leq \pi(D) - P_N^{\mathrm{SIR}}(D) \leq \DTV(P_N^{\mathrm{SIR}}, \pi).
\]
We retain the positive weight assumption for the general $f$-divergence statement, so no convention for singular mass is needed there.

\subsection{Exponential relaxation: Derivation of~\eqref{eq:oracle-informal}}
The output law $P_{N, M}^{\mathrm{RS}}$ uses the clipped acceptance rule and returns an observed candidate if all $N$ proposals are rejected.
Conditional on at least one acceptance, its law is $Q_M^{\mathrm{RS}}$; on the complementary event, its TV error is at most one regardless of the specified fallback.
Thus the same mixture argument as in~\eqref{eq:oracle-assembly} gives
\[
    \DTV(P_{N, M}^{\mathrm{RS}}, \pi)
    \leq \calE_M + \left(1 - \frac{A_M}{M}\right)^N.
\]
To obtain the simpler form used in the Introduction, note that $0 < A_M \leq 1$, $0 \leq \calE_M < 1$, and $A_M = 1 - \calE_M$.
Then
\begin{align*}
    \calE_M + \left(1 - \frac{A_M}{M}\right)^N
    & \stackrel{\mathrm{(i)}}{\leq}
    \calE_M + \exp\left(-\frac{N A_M}{M}\right) \\
    & \stackrel{\mathrm{(ii)}}{\leq}
    2\calE_M + A_M\exp(-N/M) \\
    & \stackrel{\mathrm{(iii)}}{\leq}
    2\calE_M + \exp(-N/M).
\end{align*}
Here (i) uses $1 - x \leq \exp(-x)$.
For (ii), convexity of the exponential gives
\[
    \exp(-A_M x) = \exp(\calE_M \cdot 0 + A_M(-x))
    \leq \calE_M + A_M \exp(-x), \qquad
    x := N/M.
\]
(iii) uses $A_M \leq 1$.
This proves~\eqref{eq:block23}; applying it inside Theorem~\ref{thm:main} proves~\eqref{eq:oracle-informal}.
The term $\exp(-N/M)$ is not asserted to bound the rejection probability by itself when clipping occurs.
The relaxation reallocates part of that probability bound to the clipping term.



\section{Comparison with budget-calibrated RS}
\label{app:budget-rs}

We analyze the budget reparameterization in~\eqref{eq:budget-rs-calibration}.
Throughout this appendix, weights are positive and finite almost surely, and $N = 2n + 1 \geq 3$ is the maximum budget counting all pilot, rejection, and fallback proposals.
For other integer budgets $N \geq 3$, use $n = \lfloor(N-1)/2\rfloor$, leaving at most one proposal unused.
Appendix~\ref{app:budget-rs-law} records the conditional output law, and Appendix~\ref{app:pilot-rs-domination} proves Theorem~\ref{thm:pilot-rs-fdom}.
Appendix~\ref{app:budget-rs-binary} proves Proposition~\ref{prop:budget-rs-separation} and verifies the $N$-dependent entries of Table~\ref{tab:comparison}.
Appendix~\ref{app:pilot-rs-local} states and proves the complementary fixed budget local separation.

\subsection{Baseline and its conditional law}
\label{app:budget-rs-law}
Algorithm~5 of~\citet{rohatgi2025} exploits $n = 4M\log(4/\delta)$ pilot proposals, at most $n$ fresh rejection proposals, and one fresh fallback upon complete rejection.
Solving for $M$ gives~\eqref{eq:budget-rs-calibration}.
The effective threshold is therefore $M_{N, \delta}\widehat{Z}$.
Proposition~F.2 of~\citet{rohatgi2025} applies when
\begin{equation}
    M_{N, \delta} \geq 4C_\infty^\pi \quad \Longleftrightarrow \quad
    n \geq 16C_\infty^\pi\log(4/\delta).
    \label{eq:budget-rs-source-condition}
\end{equation}
Under this sufficient condition, $\DTV(P_{N, \delta}^R, \pi) \leq \delta$.
Therefore, the source provides an $O(C_\infty^\pi\log(1/\eps))$ sufficient budget when $\delta = \eps$ is an accuracy input.
Notice that our comparisons concern the actual output law and do not assume this sufficient condition.

\paragraph{Condition on the pilot first.}
Write $M := M_{N, \delta}$ and define
\[
    \overline{W}_n := \frac{1}{n}\sum_{i = 1}^n w(Y_i), \qquad
    T := M\overline{W}_n = \frac{M\widehat{Z}}{Z}, \qquad
    r_t := 1 - \frac{A_t}{t}.
\]
The normalized threshold $T$ is an analysis variable; the algorithm does not know $Z$.
Conditional on the pilot, a fresh trial is accepted with probability $A_T/T$ and its accepted law is $Q_T^{\mathrm{RS}}$.
All $n$ trials fail with probability $r_T^n$, independently of the fresh fallback, whose law is $\mu$.
Therefore
\begin{equation}
    P_{N, \delta}^R
    = \E\left[(1 - r_T^n)Q_T^{\mathrm{RS}} + r_T^n\mu\right].
    \label{eq:budget-rs-output-law}
\end{equation}
We note that the expectation is over the pilot.
If $T \geq C_\infty^\pi$, the accepted law is exactly $\pi$; hence normalizer estimation affects the error through clipping and complete rejection.

\subsection{Proof of Theorem~\ref{thm:pilot-rs-fdom}}
\label{app:pilot-rs-domination}
We first prove the TV comparison using only $k := n + 1$ UR proposals, then extend it to convex $f$-divergences.
A final comparison between two UR budgets gives the same-budget assertion in Theorem~\ref{thm:pilot-rs-fdom}.
None of these arguments uses the source's sufficient condition~\eqref{eq:budget-rs-source-condition}.


\paragraph{Conditioning and randomizing the fresh-candidate order.}
Condition on the pilot and any independent threshold randomization, and denote the resulting fixed normalized threshold by $T$.
Generate all $k=n+1$ fresh candidates in advance, including the fallback, and uniformly permute them.
Uniformly permuting these i.i.d. candidates preserves their joint law.
Assign each candidate an independent acceptance indicator with probability $\min\{w_i/T, 1\}$, including a virtual indicator for the last candidate.
That indicator does not change the output:
the last candidate is returned whenever all preceding candidates are rejected, regardless of its indicator.
If an accepted index exists, the output is consequently the first accepted index in the permuted order; if no index is accepted, it is the last index.
Conditional on the candidate list and indicators, the rule therefore selects uniformly among accepted indices, or uniformly among all indices when none is accepted.
Conditional on $T$, the following lemma applies at the fixed threshold $\tau = T$.

\begin{lemma}[A fixed batch comparison]
\label{lem:fixed-batch-rs-order}
Fix positive weights $w_{1:k}$ and $\tau>0$.
Retain index $i$ independently with probability $\min\{w_i/\tau, 1\}$, select uniformly among retained indices, and use a uniform index if none is retained.
Let $q_i(\tau)$ be its selection probabilities.
For every $t > 0$,
\[
    \sum_{i : w_i > t} q_i(\tau)
    \leq \sum_{i : w_i > t} p_i^U(w_{1:k}).
\]
The same inequality holds with $w_i \geq t$.
\end{lemma}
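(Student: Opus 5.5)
The plan is to couple the two rules through the same uniforms $U_{1:k}$, as in the threshold view of Section~\ref{sec:oracle}. Retain index $i$ exactly when $U_i \leq \min\{w_i/\tau, 1\}$, equivalently when its score $w_i/U_i \geq \tau$. This reproduces the retention mechanism of the baseline, with the correct independent probabilities. UR, run on the same uniforms, returns the index with the largest score. On the event that the retained set $R$ is nonempty, that index lies in $R$, because retained scores are at least $\tau$ and the others are below it. We then condition on $R$. It suffices to show, for every fixed nonempty $R$, that UR's conditional probability of selecting an index in the upper set $S := \{i : w_i > t\}$ is at least $|S\cap R|/|R|$. We also need, for $R=\emptyset$, that UR's conditional probability of selecting from $S$ is at least $|S|/k$. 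Averaging over $R$ then gives the claim.

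Next we describe the conditional law of the scores given $R$. For $i\in R$, the uniform $U_i$ is uniform on $(0,\min\{w_i/\tau,1\})$, independently across $i$. Set $X_i := U_i/\min\{w_i/\tau,1\}$, so that the $X_i$ are i.i.d.\ $\Unif(0,1)$. Maximizing $w_i/U_i$ is the same as minimizing $c_i X_i$ with $c_i := \min\{w_i/\tau,1\}/w_i = \min\{1/\tau, 1/w_i\}$, which is nonincreasing in $w_i$. For $i\notin R$ with $R=\emptyset$, $U_i$ is uniform on $(\min\{w_i/\tau,1\},1)$. This case is easier to handle by another route: when $R=\emptyset$, UR is simply argmax of $w_i/U_i$ conditioned on all scores being below $\tau$. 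Writing the constrained scores as $\tau \cdot (w_i/\tau)/U_i'$, one again gets a rescaled i.i.d.\ family whose scale increases with $w_i$. Alternatively, UR selects the argmin of $U_i/w_i$, and one checks that the conditional probabilities remain ordered by $w_i$.

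The key step is a general ordering fact. Let $X_i$ be i.i.d.\ with a continuous law and let $c_i>0$. Then $\pi_i := \Pr(i = \arg\min_j c_jX_j)$ is nonincreasing in $c_i$. To prove it, take $c_i\le c_j$. Conditioning on the other coordinates, the minimum among the rest is some level $m$. One compares $\Pr(c_iX_i<\min(c_jX_j,m))$ with $\Pr(c_jX_j<\min(c_iX_i,m))$, and the swap symmetry $X_i\leftrightarrow X_j$ gives the inequality. Consequently the selection probabilities within $R$ are ordered by weight: indices with larger $w$ get at least as much probability, and ties in $w$ give equal probability. The set $S\cap R$ consists of the $|S\cap R|$ indices of $R$ with the largest weights. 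The sum of the largest $m$ entries of a probability vector of length $|R|$ is at least $m/|R|$, which yields the required inequality for this $R$. The case $R=\emptyset$ compares UR with the uniform index in the same way, once the conditional selection probabilities are shown to be ordered by weight. The $w_i\geq t$ version is identical, since $\{i: w_i\ge t\}$ is also an upper set in weight order.

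I expect the main obstacle to be the $R=\emptyset$ case. There the conditioned uniforms live on intervals $(w_i/\tau,1)$ rather than $(0,\cdot)$, so the clean i.i.d.\ rescaling breaks. I would first try the swap argument directly on the conditional joint density. For $w_i>w_j$, conditioned on no retention, one should still have $\Pr(U_i/w_i<U_j/w_j\mid\cdots)\ge 1/2$, together with the same conditional-on-the-rest comparison. A fallback is a monotone likelihood ratio argument in $w$ for the law of $U_i/w_i$ on $(1/\tau, 1/w_i)$.
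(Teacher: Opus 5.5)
Your route is correct, and it differs from the paper's.

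\textbf{The paper's proof.} It splits on $\tau$. For $\tau\le w_{\max}$, it identifies the baseline with UR on the clipped vector $\min\{w_i,\tau\}$. It then couples the clipped and unclipped races pathwise through the same uniforms, using that $w_i/\min\{w_i,\tau\}$ is nondecreasing in $w_i$. For $\tau\ge w_{\max}$, it uses the Bernoulli representation to write $q_i$ explicitly as a function of $\alpha=w_{\max}/\tau$. Differentiating shows that $\sum_{i\in I_t}q_i$ increases in $\alpha$ up to $\alpha=1$, where the rule is UR.

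\textbf{Your proof.} You embed the retention step in UR's own race: retain $i$ iff $w_i/U_i\ge\tau$. The baseline is then a uniform choice from the retained set $R$, while UR takes the top score, which lies in $R$ whenever $R\neq\emptyset$. You prove dominance conditionally on every $R$, using a single ordering fact for $\arg\min_j c_jX_j$.

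\textbf{Comparison.} Your version handles both regimes of $\tau$ at once and avoids the derivative computation. It also gives the stronger statement that UR dominates the baseline on each retention pattern separately. In exchange, the paper's calculation yields additional information: on $\tau\ge w_{\max}$, the upper-set mass is monotone in the threshold.

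\textbf{Two small points.}
\begin{itemize}
\item The swap inclusion in your key fact uses $c_iX_i\le c_jX_i$ when $c_i\le c_j$. This requires $X_i\ge 0$, which holds for uniforms but not for an arbitrary continuous law.
\item The case $R=\emptyset$, which you flag as the main obstacle, closes immediately. It has positive probability only if $w_{\max}<\tau$. Conditional on it, $U_i/w_i$ is uniform on $(1/\tau,1/w_i)$, and all these intervals share the left endpoint $1/\tau$. Subtracting $1/\tau$, UR's conditional winner is $\arg\min_i d_iX_i$, with $X_i$ i.i.d.\ uniform on $(0,1)$ and $d_i:=1/w_i-1/\tau$ strictly decreasing in $w_i$. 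Your key fact then applies verbatim with $c_i:=d_i$. The conditional selection probabilities are ordered by weight, so the upper set $\{i:w_i>t\}$ (or $\{i:w_i\ge t\}$) receives at least its uniform share $|S|/k$.
\end{itemize}
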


\begin{proof}
For $k = 1$, the rules coincide.
Otherwise, put $w_{\max} := \max_i w_i$ and $I_t := \{i : w_i > t\}$.
We show that moving the threshold to $w_{\max}$ cannot decrease the probability of selecting an index in $I_t$.
At this threshold, the retained set is nonempty and the rule is UR's Bernoulli representation from Appendix~\ref{app:bernoulli}.

For $k \geq 2$, suppose $\tau \leq w_{\max}$ and put $v_i := \min\{w_i, \tau\}$.
The rule is UR on the clipped vector $v_{1:k}$.
Couple this race with UR on $w_{1:k}$ using the same uniforms.
The factor $w_i/v_i = \max\{1, w_i/\tau\}$ is nondecreasing in $w_i$.
If the clipped race selects $i \in I_t$, then for every $j \notin I_t$,
\[
    \frac{w_i}{U_i}
    = \frac{w_i}{v_i}\frac{v_i}{U_i}
    > \frac{w_j}{v_j}\frac{v_j}{U_j}
    = \frac{w_j}{U_j}.
\]
Ties have probability zero.
Hence UR also selects within $I_t$, proving the comparison in this case.

Now consider the case where $\tau \geq w_{\max}$.
Set $z_i := w_i/w_{\max}$, and write $\alpha := w_{\max}/\tau \in (0, 1]$.
Integrating the reciprocal size of the retained set as in Appendix~\ref{app:bernoulli} yields
\[
    q_i(w_{\max}/\alpha) = z_i\int_0^{\alpha}\prod_{j \neq i}(1 - z_j u)\dd u + \frac{1}{k}\prod_{j = 1}^k(1 - \alpha z_j).
\]
The first term accounts for a nonempty retained set; and the second accounts for the fallback when the retained set is empty.
Consequently,
\begin{align*}
    \frac{\dd}{\dd\alpha}\sum_{i \in I_t}q_i(w_{\max}/\alpha)
    &\stackrel{\mathrm{(i)}}{=}
    \frac{1}{k}\sum_{i \in I_t}\sum_{j \notin I_t}
    \left[z_i\prod_{\ell \neq i}(1 - \alpha z_\ell)
    - z_j\prod_{\ell \neq j}(1 - \alpha z_\ell)\right] \\
    &\stackrel{\mathrm{(ii)}}{=}
    \frac{1}{k}\sum_{i \in I_t}\sum_{j \notin I_t}
    (z_i - z_j)\prod_{\ell \notin \{i, j\}}(1 - \alpha z_\ell) \\
    &\geq 0.
\end{align*}
Here (i) differentiates and groups terms across $I_t$ and its complement.
(ii) cancels the two terms containing $\alpha z_i z_j$.
The last inequality uses $z_i \geq z_j$ for the indicated indices.
Hence the probability increases up to $\alpha = 1$, where the rule is UR.
Empty and full sets cause no difficulty, and the argument is unchanged for $I_t := \{i : w_i \geq t\}$.
\end{proof}

\paragraph{From weight ordering to divergence ordering.}
Average Lemma~\ref{lem:fixed-batch-rs-order} over the fresh candidates and the pilot.
Writing $\widehat{Y}_R$ for the baseline output and using $k$ candidates for $\widehat{Y}_U$ gives
\[
    \Pr(w(\widehat{Y}_R) > t)
    \leq \Pr(w(\widehat{Y}_U) > t),
\]
with the same inequality for weights at least $t$.
This establishes the required selected weight stochastic ordering.
For the extension, notice first that both output laws have positive finite densities relative to $\pi$ almost surely.
For the baseline, positivity follows from the possibility of accepting its first fresh candidate, and finiteness follows from $P_{N, \delta}^{R}(D) \leq k\mu(D)$.
Write $h^R(y) := (\dd P_{N, \delta}^{R}/\dd\pi)(y)$.
By Lemma~\ref{lem:ur-density}, $\dd P_k^U/\dd\pi=h_k^U\circ w$, where $h_k^U$ is nonincreasing.
For $c > 0$, let $D_c := \{y : h_k^U(w(y)) > c\}$.
This is a lower weight set, so $P_k^U(D_c) \leq P_{N, \delta}^{R}(D_c)$.
For $X \sim \pi$,
\begin{align*}
    \E[(h_k^U(w(X)) - c)_+]
    &\stackrel{\mathrm{(i)}}{=} P_k^U(D_c) - c\pi(D_c) \\
    &\stackrel{\mathrm{(ii)}}{\leq} P_{N, \delta}^{R}(D_c) - c\pi(D_c) \\
    &\stackrel{\mathrm{(iii)}}{\leq} \E[(h^R(X) - c)_+].
\end{align*}
Here (i) integrates over the positive set;
(ii) uses the probability ordering;
and (iii) integrates the positive part over the whole space.
At $c = 1$, this also recovers the TV comparison.
Both density ratios have mean one, so affine terms have equal expectations.
Positive sums of these hinge inequalities therefore compare every convex piecewise-linear function with finitely many breakpoints.
For an arbitrary allowed convex $f$, subtract a supporting affine function at one, approximate the resulting nonnegative function increasingly by maxima of finitely many supporting affine functions and zero, and apply monotone convergence as in Appendix~\ref{app:comparison}.
This proves the selection-stage inequality in~\eqref{eq:pilot-rs-fdom}, including infinite divergences.
Nothing here required a sample-mean pilot: any positive finite threshold determined from an independent pilot and held fixed during the fresh stage is covered.
The fresh-proposal fallback is essential to this argument; arbitrary other fallbacks and thresholds adapted to the fresh candidates are not covered.

\paragraph{Comparison under the same budget.}
It remains to compare two UR budgets.
A decreasing upper bound would not establish that the actual error decreases.
\begin{lemma}[UR's marginal divergences decrease with the budget]
\label{lem:ur-budget-monotonicity}
For every positive weight pair and integers $1 \leq k \leq m$,
\[
    \DTV(P_m^U, \pi) \leq \DTV(P_k^U, \pi).
\]
More generally, $D_f(P_m^U \Vert \pi) \leq D_f(P_k^U \Vert \pi)$ for every convex $f$ covered by Theorem~\ref{thm:pilot-rs-fdom}.
\end{lemma}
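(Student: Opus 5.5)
The plan is to reuse the machinery from the budget-calibrated comparison: a fixed-batch stochastic ordering of selected weights, followed by the hinge-function argument that converts weight ordering into divergence ordering. It suffices to treat $m = k+1$ and iterate. The key is a coupling in which UR on $k+1$ candidates is compared with UR on the first $k$ of them.

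\textbf{Step 1 (tail ordering).} Fix positive weights $w_{1:k+1}$. Let $\widehat I_{k+1}$ be UR's selected index on all $k+1$ candidates and $\widehat I_k$ the one on the first $k$, using the same uniforms. If $\widehat I_k \in I_t := \{i : w_i > t\}$, then $\widehat I_{k+1}$ is either $\widehat I_k$ or index $k+1$. The latter need not lie in $I_t$, so the naive coupling does not give $\Pr(w(\widehat Y_{k+1}) > t) \geq \Pr(w(\widehat Y_k) > t)$ directly. Instead I will average over the new candidate $Y_{k+1} \sim \mu$ and use the explicit marginal density from Lemma~\ref{lem:joint-score}:
\[
h_N^U(x) := N\int_x^\infty G(s)^{N-1}s^{-2}\,\dd s .
\]
Then
\[
\Pr\bigl(w(\widehat Y^U_N) > t\bigr) = \int_{\{w>t\}} h_N^U(w)\,\dd\pi .
\]
Equivalently, from the joint law~\eqref{eq:app-winning-score}, $\Pr(w(\widehat Y) > t) = \int \Pr(S_{(N)} \in \dd s)\,\pi(w > t \mid w < s)$. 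The conditional factor $\pi(t < w \mid w < s)$ is nondecreasing in $s$ (for $s > t$ it equals $1 - \pi(w \le t)/\pi(w<s)$, and it is zero for $s \le t$). Moreover, $S_{(m)}$ stochastically dominates $S_{(k)}$, since it is a maximum over more i.i.d.\ scores. Hence
\[
\Pr\bigl(w(\widehat Y^U_m) > t\bigr) \geq \Pr\bigl(w(\widehat Y^U_k) > t\bigr)
\]
for every $t$, and likewise with $\geq t$. This is the clean route: Proposition~\ref{prop:score-law} says the budget enters only through the law of the maximum score. The upper bound $\Pr(w(\widehat Y^U_m) > t) \le \pi(w>t)$ follows the same way, since the conditional factor is at most $\pi(w>t)$.

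\textbf{Step 2 (divergence ordering).} By Lemma~\ref{lem:joint-score}, both $\dd P_m^U/\dd\pi = h_m^U\circ w$ and $\dd P_k^U/\dd\pi = h_k^U\circ w$ are nonincreasing functions of the weight. Fix $c>0$ and set $D_c := \{h_m^U(w) > c\}$. This is a lower weight set, of the form $\{w<a\}$ or $\{w\le a\}$, so Step 1 gives $P_m^U(D_c) \le P_k^U(D_c)$. Exactly as in the proof of Theorem~\ref{thm:pilot-rs-fdom},
\[
\E_\pi\bigl[(h_m^U(w)-c)_+\bigr] = P_m^U(D_c) - c\pi(D_c) \le P_k^U(D_c) - c\pi(D_c) \le \E_\pi\bigl[(h_k^U(w)-c)_+\bigr].
\]
Taking $c=1$ yields the TV claim. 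For general convex $f$, I will subtract the supporting affine function at $1$, which contributes equal expectations because both densities have mean one. I then approximate the remaining function monotonically by finite positive combinations of hinges $(x-c)_+$, together with left hinges $(c-x)_+$ handled via $(c-x)_+ = (x-c)_+ - (x-c)$, and conclude by monotone convergence, allowing infinite values.

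\textbf{Main obstacle.} The delicate part is Step 1's measure-theoretic use of conditioning on $S_{(N)}$. Specifically, I need to verify that $\pi(t<w\mid w<s)$ is monotone in $s$, including at values where $\pi(w<s)=0$; the convention there assigns value $\pi(D)$, but the score density vanishes on that set anyway. Atoms of $w$ must also be handled so that $D_c$ is genuinely a set of the form $\{w<a\}$ or $\{w\le a\}$. Both the strict and non-strict versions of the tail inequality are therefore needed.
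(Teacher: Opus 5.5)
Your proposal is correct and follows essentially the same route as the paper. The paper likewise combines the monotonicity in $s$ of $\pi(w>t\mid w<s)$ with the coupling $S_{(k)}\le S_{(m)}$ (through Proposition~\ref{prop:score-law}) to get the selected-weight tail ordering, and then applies the same hinge argument on the lower weight set $D_c$ together with the convex approximation. Your reduction to $m=k+1$ is unnecessary, since this argument works directly for any $k\le m$.
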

\begin{proof}
We first compare selected weights, then apply the same positive part argument.
For a score value $s$ with $\pi(w < s) > 0$,
\[
    \pi(w > t \mid w < s) =
    \begin{cases}
        0, & s \leq t, \\
        1 - \frac{\pi(w \leq t)}{\pi(w < s)}, & s > t.
    \end{cases}
\]
This probability is nondecreasing in $s$ on the score values that can occur.
Using the first $k$ candidates among the same $m$ candidate--uniform pairs couples their maxima so that $S_{(k)} \leq S_{(m)}$.
Proposition~\ref{prop:score-law} then gives $P_k^U(w > t) \leq P_m^U(w > t)$.
The same reasoning applies to $w \geq t$, replacing $\pi(w \leq t)$ by $\pi(w < t)$ in the second branch.
For $c > 0$, the set $D_c := \{y : h_m^U(w(y)) > c\}$ is a lower weight set, and hence $P_m^U(D_c) \leq P_k^U(D_c)$.
Therefore, for $X \sim \pi$,
\begin{align*}
    \E[(h_m^U(w(X)) - c)_+]
    &= P_m^U(D_c) - c\pi(D_c) \\
    &\leq P_k^U(D_c) - c\pi(D_c) \\
    &\leq \E[(h_k^U(w(X)) - c)_+].
\end{align*}
Setting $c = 1$ proves the TV assertion.
Both density ratios have mean one, so the convex approximation argument above proves the general assertion, including infinite divergences.
\end{proof}
Combining the selection stage comparison with Lemma~\ref{lem:ur-budget-monotonicity} therefore proves Theorem~\ref{thm:pilot-rs-fdom}.
For an even maximum budget, the baseline leaves one proposal unused and the same budget monotonicity applies.

Lastly, notice that the proof does not use the specific form of $M_{N, \delta}\widehat{Z}$: the same argument applies to any positive finite threshold determined from an independent pilot and held fixed during the rejection stage.

\subsection{Proof of Proposition~\ref{prop:budget-rs-separation}: exponential separation on a fixed pair}
\label{app:budget-rs-binary}

We prove Proposition~\ref{prop:budget-rs-separation} for the fixed pair $\mu = (1/2, 1/2)$ and $\pi = (1/4, 3/4)$.
We then compare the budgets needed for a prescribed accuracy and verify the last column of Table~\ref{tab:comparison}.
Throughout the fixed pair proof, $N = 2n + 1$ with $n \geq 1$.
\begin{proof}
First, notice that the normalized weights are $w(0) = \tfrac{\pi(0)}{\mu(0)} = 1/2$ and $w(1) = \tfrac{\pi(1)}{\mu(1)} = 3/2$.
Fix a calibration and set $M := M_{N, \delta}$.
It suffices to show that every multiplier $M > 0$ leaves TV error at least $2^{-n}/12$.
We first record the output law of one accepted proposal at a fixed normalized threshold $\tau$.
The acceptance probabilities of states $0$ and $1$ are
\[
    a_0(\tau) = \min\left\{1, \frac{1}{2\tau}\right\}, \qquad
    a_1(\tau) = \min\left\{1, \frac{3}{2\tau}\right\}.
\]
Since $\mu$ is uniform, conditional on acceptance, the probability of state $1$ is
\[
    h(\tau) := Q_\tau^{\mathrm{RS}}(\{1\})
    =\frac{a_1(\tau)}{a_0(\tau)+a_1(\tau)}
    =
    \begin{cases}
        1/2, & 0 < \tau \leq 1/2, \\
        \tau/(\tau + 1/2), & 1/2 < \tau < 3/2, \\
        3/4, & \tau \geq 3/2.
    \end{cases}
\]
Hence,
\[
    \frac{1}{2} \leq h(\tau) \leq \frac{3}{4} = \pi(1), \quad
    \text{for every}\ \tau > 0.
\]
The fresh fallback is drawn from $\mu$, so it assigns state $1$ probability $1/2$ as well.
Hence, for every realized pilot threshold, both the accepted law and the fallback underweight state $1$ relative to its target probability $3/4$.
In particular, the errors arising from different pilot outcomes all have the same sign and therefore cannot cancel when we average over the pilot.

Let $r_T$ denote the rejection probability of one proposal at the random threshold $T$.
Conditional on $T$, at least one of the $n$ rejection stage proposals is accepted with probability $1 - r_T^n$, while all $n$ proposals are rejected with probability $r_T^n$, in which case the fresh fallback is used.
Therefore
\[
    \Pr(\widehat{Y}_R = 1 \mid T) = (1 - r_T^n)h(T) + \frac{1}{2} r_T^n.
\]
Since this quantity is always at most $3/4$, the binary TV error has no absolute value cancellation, and~\eqref{eq:budget-rs-output-law} gives
\begin{align}
    \DTV(P_{N, \delta}^R, \pi) &= \E\left[\frac{3}{4} - (1 - r_T^n)h(T) -\frac{1}{2} r_T^n\right] \notag\\
    &= \E\left[(1 - r_T^n)\left(\frac{3}{4} - h(T)\right) +\frac14 r_T^n\right].
    \label{eq:budget-rs-binary-error}
\end{align}
We now lower-bound this expression for every $M>0$.
If $M \leq 2$, consider the event that all $n$ pilot weights equal the lower weight $1/2$. This event has probability $2^{-n}$ and gives
\[
    \overline{W}_n = \frac{1}{2}, \qquad
    T = M\overline{W}_n = \frac{M}{2} \leq 1.
\]
Since $h(\tau)$ is nondecreasing,
\[
    h(T)\leq h(1) = \frac{1}{1+1/2} = \frac{2}{3}.
\]
On this event, both the accepted law and the fallback assign state $1$ probability at most $2/3$, so the conditional TV error is at least
\[
    \frac{3}{4} - \frac{2}{3} = \frac{1}{12}.
\]
Since all conditional errors have the same sign, we obtain
\[
    \DTV(P_{N, \delta}^R, \pi) \geq 2^{-n} \cdot \frac{1}{12}.
\]
Now consider the case where $M > 2$.
Since each pilot weight equals $1/2$ or $3/2$ with equal probability, $\overline{W}_n - 1$ is symmetric about zero.
Hence
\(
    \Pr(\overline{W}_n \geq 1) \geq \tfrac{1}{2}.
\)
On this event,
\[
    T = M\overline{W}_n \geq M > 2 > \frac{3}{2},
\]
so there is no clipping and hence $h(T) = 3/4$.
Moreover, since $\E_\mu[w(Y)]=1$, the acceptance probability of one proposal is $1/T$, and therefore
\[
    r_T = 1 - \frac{1}{T} \geq \frac{1}{2}, \qquad
    r_T^n \geq 2^{-n}.
\]
The clipping term in~\eqref{eq:budget-rs-binary-error} vanishes, while the fallback term contributes at least
\(
    \tfrac{1}{4} r_T^n \geq \tfrac{1}{4}2^{-n}.
\)
Since the event $\{\overline{W}_n \geq 1\}$ has probability at least $1/2$,
\[
    \DTV(P_{N, \delta}^R, \pi) \geq \frac{1}{2}\cdot \frac{1}{4}2^{-n}
    = \frac{1}{8}2^{-n}
    \geq \frac{1}{12}2^{-n}.
\]
Hence the same lower bound holds after taking the infimum over $\delta \in (0,1)$.

The UR formula in Proposition~\ref{prop:binary-sir-separation} gives
\(
    \DTV(P_k^U, \pi) = \frac{3}{4}3^{-k},
\)
which yields the claimed equality at $k = N = 2n + 1$.
This completes the proof.
\end{proof}

\paragraph{The gap persists with fewer UR proposals.}
Every baseline run uses at least $n+1$ proposals in total, including its pilot.
Even with only this many proposals, UR satisfies
\begin{equation}
    \frac{\inf_{\delta \in (0, 1)}\DTV(P_{2n + 1, \delta}^R, \pi)}
    {\DTV(P_{n + 1}^U, \pi)}
    \geq \frac{1}{3}\left(\frac{3}{2}\right)^n.
    \label{eq:budget-rs-smaller-ur-budget}
\end{equation}
With the same total budget $N = 2n + 1$ for both methods, the lower bound on the ratio is $\frac{1}{3}(9/2)^n$, as stated in Section~\ref{sec:rohatgi}.
The different bases correspond to different UR budgets, not different target--proposal pairs.

\paragraph{Error ratio versus sample complexity.}
An exponentially growing error ratio does not imply different orders of optimized sample complexity.
Let $\xi \in (0,1)$ be a TV error tolerance, with $\xi \downarrow 0$.
For UR, the exact formula gives a required budget of order $\log(1/\xi)$.
For calibrated RS, the lower bound requires $n$ to be at least of this order.
Conversely, $C_\infty^\pi = 3/2$, and choosing $\delta = \xi$ in~\eqref{eq:budget-rs-source-condition} gives TV error at most $\xi$ whenever the integer $n$ satisfies
\[
    n \geq 24\log(4/\xi).
\]
Hence both methods require $\Theta(\log(1/\xi))$ proposals when tuned for the prescribed accuracy.
This statement allows the calibration to vary with the requested accuracy;
it does not assert convergence for every fixed $\delta$.
Appendix~\ref{app:pilot-rs-local} addresses the separate question of matching the sharp guarantee uniformly over pairs at a fixed budget.

\paragraph{The budget-dependent entries of Table~\ref{tab:comparison}.}
The last column uses the fixed pair of Proposition~\ref{prop:budget-rs-separation}.
UR has exact error $(3/4)3^{-N}$, and known-envelope RS has the same law by Proposition~\ref{prop:two-level-rs}, with a uniform observed-index fallback.
Substituting $\varepsilon = 1/4$ into \eqref{eq:binary-sir-numeric-rate} gives
\[
    \DTV(P_N^{\mathrm{SIR}},\pi) = \frac{3}{16N} + O(N^{-3/2}).
\]
For calibrated RS, \eqref{eq:budget-rs-separation} gives
\[
    \inf_{\delta \in (0,1)}\DTV(P^R_{N,\delta},\pi)
    \geq \frac{1}{12}2^{-(N-1)/2}
    = \frac{\sqrt{2}}{12}2^{-N/2}
\]
for every odd $N \ge 3$.
This proves the claimed $\Omega(2^{-N/2})$ lower bound; no matching $\Theta(2^{-N/2})$ rate is asserted.
Appendix~\ref{app:pilot-rs-local} justifies the sharp guarantee column.

\subsection{Local separation under optimal calibration}
\label{app:pilot-rs-local}

The fixed pair comparison in Section~\ref{sec:rohatgi} distinguishes exponential rates as the budget grows.
Here we fix the budget and vary the pair to ask whether calibrated RS can match the sharp bounded-weight guarantee within a factor independent of the pair.
Keep the unnormalized weights of states $0$ and $1$ equal to $1/2$ and $1$, respectively, and set
\begin{equation}
    \mu_\eta := (\eta,1-\eta), \qquad
    \pi_\eta := \left(\frac{\eta}{2-\eta}, \frac{2(1-\eta)}{2-\eta}\right), \qquad 
    0 < \eta < 1.
    \label{eq:pilot-rs-local-family}
\end{equation}
Hence $\eta$ is the proposal probability of the lower weight state.
The normalizer is $Z_\eta = 1 - \eta/2$, and $C_\infty^{\pi_\eta} = 1/Z_\eta$.
In particular, the bounded-weight benchmark is $(1 - 1/C_\infty^{\pi_\eta})^N = (\eta/2)^N$.

\begin{proposition}[Local separation under optimal calibration]
\label{prop:pilot-rs-local}
Fix an odd $N \ge 13$.
For the family in \eqref{eq:pilot-rs-local-family}, as $\eta \downarrow 0$,
\begin{equation}
    \inf_{\delta \in (0,1)} \DTV(P^R_{N, \delta},\pi_\eta) = \frac{N-1}{4(N-2)} \eta^2 + o_N(\eta^2), 
    \label{eq:pilot-rs-local-asymptotics}
\end{equation}
whereas
\begin{equation}
    \DTV(P^U_N,\pi_\eta) = \frac{1-\eta}{1-\eta/2} \left(\frac{\eta}{2}\right)^N.
    \label{eq:pilot-rs-local-ur}
\end{equation}
The infimum allows pair- and budget-dependent tuning before observing the pilot.
The UR identity holds for every $N \ge 1$ and $0 < \eta < 1$.
\end{proposition}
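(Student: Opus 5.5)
The plan is to handle the two claims separately. The UR identity is a short consequence of \eqref{eq:exact-threshold}. The calibrated-RS asymptotics need an explicit expansion of \eqref{eq:budget-rs-output-law} in $\eta$, followed by an optimization over the multiplier $M=M_{N,\delta}$.

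\textbf{UR identity.} The normalized weights are
\[
w(0)=\frac{1}{2Z_\eta}=\frac{1}{2-\eta},\qquad w(1)=\frac{1}{Z_\eta}=C_\infty^{\pi_\eta}.
\]
\begin{itemize}[leftmargin=1.5em, nosep]
\item A score stays below $C_\infty^{\pi_\eta}$ exactly when $Y_i=0$ and $U_i>w(0)/w(1)=1/2$. This has probability $\eta/2$, so $\delta_N=(\eta/2)^N$, in agreement with \eqref{eq:exact-threshold}.
\item On $H^c_{C_\infty^{\pi_\eta}}$ every candidate is in state $0$, so UR returns state $0$. Hence $F^U_{N,C_\infty^{\pi_\eta}}$ is the point mass at $0$.
\item By \eqref{eq:bounded-mixture}, the error is $\delta_N(1-\pi_\eta(0))=\delta_N\cdot\frac{2(1-\eta)}{2-\eta}$, which is the stated formula.
\end{itemize}
This holds for every $N\ge1$ and every $\eta\in(0,1)$.

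\textbf{Calibrated RS: the error as a function of $M$.} Write $n=(N-1)/2$ and let $K\sim\Bin(n,\eta)$ count the state-$0$ pilot draws. Then
\[
T=M\overline{W}_n=\frac{M}{Z_\eta}\Bigl(1-\frac{K}{2n}\Bigr).
\]
Conditional on $T$, the output lies between $Q_T^{\mathrm{RS}}$ and $\mu$, and both overweight state $0$ relative to $\pi_\eta$. This is the same one-sided sign argument as in Appendix~\ref{app:budget-rs-binary}. So the TV error equals $\E\bigl[\Pr(\widehat Y_R=0\mid T)\bigr]-\pi_\eta(0)$, with no cancellation. I will expand this by the value of $K$.
\begin{itemize}[leftmargin=1.5em, nosep]
\item $K\ge2$ has probability $O(\eta^2)$. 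On this event I expect the clipped accepted law to put mass only $O(\eta)$ on state $0$. Its contribution should then be $O(\eta^3)$, though this needs to be checked and not just asserted.
\item If $K=0$, then $T=M/Z_\eta$. For $M\ge1$ there is no clipping, and the error is $r_T^n(\mu(0)-\pi_\eta(0))\approx r_T^n\,\eta/2$, where $r_T=1-Z_\eta/M$.
\item If $K=1$ (probability $\approx n\eta$), then $T=\tfrac{M}{Z_\eta}\bigl(1-\tfrac{1}{2n}\bigr)$. When $M<\tfrac{2n}{2n-1}$, state $1$ is clipped. The accepted probability of state $0$ is then about $\eta\cdot\tfrac{1}{2n}\cdot\tfrac{1}{M(1-1/(2n))}$ instead of $\eta/2$, giving an excess of order $\eta$.
\end{itemize}

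\textbf{Optimizing $M$.} The two failure modes are complete rejection when $K=0$, which is only small if $M$ is close to $1$, and clipping when $K=1$, which is absent only if $M\ge 2n/(2n-1)$. I will compute the leading error
\[
e(M)\,\eta^2+o(\eta^2)
\]
separately on three ranges: $M<1$, where clipping already occurs at $K=0$ and the error is $\Theta(\eta)$; $M\in[1,\tfrac{2n}{2n-1})$; and $M\ge\tfrac{2n}{2n-1}$, where $r_T^n$ is bounded below by a constant and the error is again $\Theta(\eta)$. On the middle range I expect the $K=0$ term to be $O(\eta^{n+1})$, so $e(M)$ comes only from the $K=1$ clipping excess times $n\eta$. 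I then expect to minimize $e(M)$ at $M=1$ or as $M$ approaches $1$, giving $e=n/(2(2n-1))=\frac{N-1}{4(N-2)}$.

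Two uniformity issues must be settled. First, the $o_N(\eta^2)$ remainder must be uniform in $M$ near $1$, so that the infimum over $\delta$ commutes with the $\eta$-expansion. Second, the value $M\approx1$ must be attainable: $\delta\mapsto M_{N,\delta}=n/(4\log(4/\delta))$ covers exactly $(0,\,n/(4\log4))$, so I need $n/(4\log 4)>1$. This gives $n\ge6$, i.e.\ $N\ge13$, which explains the hypothesis.

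\textbf{Main obstacle.} The main obstacle is the exact bookkeeping of the $K=1$ clipping excess in the middle range, together with showing that no choice of $M$, including values very close to $1$, does better than the claimed constant.
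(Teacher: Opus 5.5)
Your UR computation, the one-sided sign argument, the $M=1$ upper bound, and the reason for $N\ge 13$ (feasibility of $M_{N,\delta}=1$ requires $n>4\log 4$) all match the paper. The gap is in the lower bound on the middle range $1\le M<\tfrac{2n}{2n-1}$.

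\textbf{The $K=0$ term is not negligible on the middle range.} Your claim that it is $O(\eta^{n+1})$ there holds only for $M=1+O(\eta)$. At $K=0$ with $M\ge 1$ there is no clipping, but a fresh trial is rejected with probability $r=1-Z_\eta/M\ge (M-1)/M$. So the $K=0$ contribution $(1-\eta)^n r^n g_\eta$, with $g_\eta=\eta(1-\eta)/(2-\eta)$, is of first order in $\eta$ for every $M$ bounded above $1$. This is the same mechanism you correctly use for $M\ge\tfrac{2n}{2n-1}$, with a constant that degenerates only as $M\downarrow 1$.

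\textbf{Why this breaks the optimization.} If you drop the $K=0$ term, as your plan does, only the $K=1$ clipping coefficient $\tfrac{n}{2}\bigl(\tfrac{1}{Mv_n}-1\bigr)$ remains, where $v_n:=1-1/(2n)$. This coefficient is decreasing in $M$ and vanishes as $M\uparrow\tfrac{2n}{2n-1}$. So the minimization you describe would neither select $M=1$ nor produce $\tfrac{n}{2(2n-1)}$. There is also no second-order expansion uniform in $M$ near $1$ that could be interchanged with the infimum. The error is first order for every fixed $M\neq 1$. Moreover, $M=1+\eta^{1/(n-1)}$ improves on $M=1$ by order $\eta^{2+1/(n-1)}\gg\eta^3$, so $M=1$ is not the minimizer and the remainder is genuinely only $o(\eta^2)$. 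Similarly, $M<1$ gives $\Theta(\eta)$ error only when $M$ is bounded away from $1$; for $M=1-c\eta$ the $K=0$ clipping cost is only $\Theta(\eta^2)$.

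\textbf{The missing dichotomy (the paper's Steps 2--3).} Take $M_\eta$ within $\eta^3$ of the infimum, so its error is $O(\eta^2)$ by the $M=1$ bound. Use two facts: the conditional error is at least $d_\eta(\tau_K)$, and at $K=0$ with $M\ge1$ it equals $r^n g_\eta$.
\begin{itemize}
\item If $M_\eta\le 1-\gamma$ along a subsequence, the error is at least $(1-\eta)^n d_\eta(1-\gamma)\sim\tfrac{\gamma}{2(1-\gamma)}\eta$.
\item If $M_\eta\ge 1+\gamma$ along a subsequence, the error is at least $(1-\eta)^n\bigl(\tfrac{\gamma}{1+\gamma}\bigr)^n g_\eta$.
\end{itemize}
Both are first order, a contradiction, so $M_\eta\to1$. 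Then the $K=1$ raw threshold satisfies $M_\eta v_n\to v_n<1$, and
\[
\inf_M e\ \ge\ \Pr(K=1)\,d_\eta(M_\eta v_n)-\eta^3=\tfrac{n}{2(2n-1)}\eta^2\bigl(1+o(1)\bigr).
\]

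\textbf{An equivalent direct route.} For $M\le 1$, use monotonicity of $d_\eta$, since $Mv_n\le v_n$. For $M=1+\epsilon$, split at $\epsilon=\eta^{1/(2n)}$. Below the split, use the $K=1$ clipping term. Above it, use the $K=0$ rejection term, which is of order at least $\eta^{3/2}$.

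\textbf{Minor slip.} At $K=1$ the accepted probability of state $0$ is approximately $\eta/(2Mv_n)$, not $\eta\cdot\tfrac{1}{2n}\cdot\tfrac{1}{Mv_n}$. Your expression lies below $\eta/2$, i.e.\ a deficit. Your final constant requires the excess $\tfrac{\eta}{2}\bigl(\tfrac{1}{v_n}-1\bigr)=\tfrac{\eta}{2(2n-1)}$ at $M=1$.
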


\begin{proof}
Fix $n \geq 6$ and $N = 2n + 1$, and let $\eta \downarrow 0$.
For the lower bound, allow any deterministic multiplier $M > 0$ chosen before observing the pilot.
Let $J$ count the state $0$ candidates among the $n$ pilot proposals.
Then $J \sim \Bin(n,\eta)$, and
\[
    \widehat{Z} = \frac{J/2 + n - J}{n}
    = 1 - \frac{J}{2n}, \qquad
    \tau_J := M\widehat{Z}
    = M\left(1 - \frac{J}{2n}\right).
\]
Here $\tau_J$ is the raw threshold; its normalized counterpart is $T = \tau_J/Z_\eta$.
Allowing all $M > 0$ enlarges the calibrated family.
The upper-bound choice $M = 1$ is feasible in the original family because $\delta = 4\exp(-n/4) \in (0,1)$ gives $M_{N, \delta} = 1$.

\paragraph{The error conditional on the pilot.}
The accepted probability of state $0$ is
\[
    q_\eta(\tau)
    =
    \begin{cases}
    \eta, & 0< \tau \leq 1/2,\\
    \frac{\eta/2}{\eta/2 + (1 - \eta)\tau}, & 1/2 < \tau < 1, \\
    \frac{\eta}{2 - \eta}, & \tau \geq 1.
    \end{cases}
\]
It is nonincreasing in $\tau$ and lies between $\pi_\eta(0)$ and $\mu_\eta(0) = \eta$.
Write
\[
    d_\eta(\tau) := q_\eta(\tau) - \pi_\eta(0), \qquad
    g_\eta := \eta - \pi_\eta(0)
    = \frac{\eta(1 - \eta)}{2 - \eta}.
\]
Let $r_\eta(\tau) := r_{\tau/Z_\eta}$ be the rejection probability of one fresh trial; the division by $Z_\eta$ converts the raw threshold to the normalized units of Appendix~\ref{app:budget-rs-law}.

Both acceptance and fallback assign state $0$ at least its target probability, so averaging over the pilot cannot cancel this excess.
Consequently,~\eqref{eq:budget-rs-output-law} gives the exact TV error of the relaxed rule:
\begin{equation}
    e_n(M, \eta) = \E\left[d_\eta(\tau_J) + r_\eta(\tau_J)^n\left(g_\eta-d_\eta(\tau_J)\right)\right]
    = \E\left[(1 - r_\eta(\tau_J)^n)d_\eta(\tau_J)
    + r_\eta(\tau_J)^n g_\eta\right].
    \label{eq:pilot-local-error}
\end{equation}
The expectation is over $J$.
The integrand lies in $[d_\eta(\tau_J), g_\eta]$, a fact used for both bounds below.

\paragraph{Step 1: Upper bound from $M = 1$.}
For $J = 0$, the threshold is one, the accepted law is exact, and $r_\eta(1) = \eta/2$.
The contribution is at most $g_\eta(\eta/2)^n = O_n(\eta^{n + 1})$.
For $J = 1$, put $v_n := 1 - 1/(2n) \in (1/2, 1)$, where we use $n \geq 2$.
Direct expansion gives
\[
    d_\eta(v_n)
    = \frac{1}{2}\left(\frac{1}{v_n} - 1\right)\eta + O_n(\eta^2)
    = \frac{\eta}{2(2n - 1)} + O_n(\eta^2).
\]
Here $r_\eta(v_n) = \eta(1 - 1/(2v_n)) = O_n(\eta)$ and $\Pr(J = 1) = n\eta + O_n(\eta^2)$.
The contribution is consequently $n\eta^2/[2(2n - 1)] + O_n(\eta^3)$.
Finally, $\Pr(J \geq 2) = O_n(\eta^2)$ and each conditional error is at most $g_\eta = O(\eta)$.
Hence
\begin{equation}
    e_n(1, \eta) = \frac{n}{2(2n - 1)}\eta^2 + O_n(\eta^3).
    \label{eq:pilot-local-upper}
\end{equation}

\paragraph{Step 2: Typical pilot forces $M_\eta\to1$.}
Suppose $e_n(M_\eta, \eta) = O_n(\eta^2)$, and fix $\gamma \in (0,1/2)$.
If $M_\eta \leq 1 - \gamma$ along a subsequence, then the event $J = 0$ and the monotonicity of $d_\eta$ give
\[
    e_n(M_\eta, \eta)
    \geq (1 - \eta)^n d_\eta(1 - \gamma), \qquad
    \frac{d_\eta(1 - \gamma)}{\eta} \rightarrow
    \frac{\gamma}{2(1 - \gamma)}>0.
\]
This contradicts the assumed second-order error.
If instead $M_\eta \geq 1 + \gamma$ along a subsequence, then on $J = 0$ there is no clipping, but $r_\eta(M_\eta) = 1 - Z_\eta/M_\eta\ge\gamma/(1 + \gamma)$.
Hence
\[
    e_n(M_\eta, \eta) \geq
    (1 - \eta)^n\left(\frac{\gamma}{1 + \gamma}\right)^n g_\eta.
\]
Since $g_\eta/\eta \to 1/2$, this again leaves first-order error.
Therefore $M_\eta \to 1$.

\paragraph{Step 3: The event $J=1$ forces the leading coefficient.}
For each $\eta$, choose $M_\eta>0$ such that
\[
    e_n(M_\eta, \eta) \leq
    \inf_{M>0}e_n(M, \eta) + \eta^3.
\]
Step 1 makes this sequence second-order accurate, so Step 2 gives $M_\eta \to 1$.
Hence $\tau_1 = M_\eta v_n \to v_n$, and
\[
    \frac{d_\eta(M_\eta v_n)}{\eta} \rightarrow
    \frac{1}{2(2n - 1)}.
\]
The nonnegative rejection term in~\eqref{eq:pilot-local-error} can only increase the error.
Keeping just the $J=1$ contribution therefore gives
\[
    \inf_{M>0}e_n(M, \eta) \geq \Pr(J = 1)d_\eta(M_\eta v_n) - \eta^3.
\]
Dividing by $\eta^2$ and using $\Pr(J=1)/\eta\to n$ proves
\[
    \liminf_{\eta \downarrow 0} \frac{\inf_{M>0}e_n(M, \eta)}{\eta^2} \geq \frac{n}{2(2n - 1)}.
\]
Together with the feasible upper bound in~\eqref{eq:pilot-local-upper}, this proves~\eqref{eq:pilot-rs-local-asymptotics}, since $n/[2(2n - 1)] = (N - 1)/[4(N - 2)]$.
In particular, the multiplier may depend on the pair and budget, but not on the realized pilot.
\end{proof}

\paragraph{UR's exact error.}
The largest normalized weight is $C_\infty^{\pi_\eta} = 1/Z_\eta$.
A state $1$ candidate crosses this score level almost surely.
A state $0$ candidate stays below it exactly when its uniform exceeds $1/2$, so a single candidate fails to cross with probability $\eta/2$.
Independence gives $\Pr(H_{C_\infty^{\pi_\eta}}^c) = (\eta/2)^N$.
On this event all candidates are in state $0$, so UR returns $0$; on the complement its output law is $\pi_\eta$ by \eqref{eq:exact-threshold}.
The mixture identity in~\eqref{eq:bounded-mixture} therefore gives
\[
    \DTV(P_N^U,\pi_\eta) = \pi_\eta(1)\left(\frac{\eta}{2}\right)^N
    = \frac{1- \eta}{1- \eta/2}\left(\frac{\eta}{2}\right)^N,
\]
which proves~\eqref{eq:pilot-rs-local-ur}.
This identity holds for every $N \geq 1$ and $0 < \eta < 1$, not only in the local limit.

\paragraph{Comparison to others.}
At fixed $N$, exactly one state $0$ candidate appears with probability $N\eta + O_N(\eta^2)$, and SIR selects it with probability $1/(2N - 1)$.
Batches containing at least two such candidates have total probability $O_N(\eta^2)$.
Hence $P_N^{\mathrm{SIR}}(\{0\})=N\eta/(2N-1)+O_N(\eta^2)$, and subtracting $\pi_\eta(0)=\eta/2+O(\eta^2)$ yields
\[
    \DTV(P_N^{\mathrm{SIR}},\pi_\eta) = \frac{\eta}{2(2N-1)} + O_N(\eta^2).
\]
Known-envelope RS uses the exact raw envelope $Z_\eta C_\infty^{\pi_\eta} = 1$ and a uniform observed-index fallback.
Its output law equals UR's on this two-weight family by Proposition~\ref{prop:two-level-rs}.
These calculations establish the local-error column.
For the sharp guarantee column, UR satisfies the universal bounded-weight bound by Corollary~\ref{cor:rates}, and known-envelope RS does so by its exact accepted law and complete-rejection probability.
On this family the bound is $(1 - 1/C_\infty^{\pi_\eta})^N = (\eta/2)^N$, so the positive first- and second-order coefficients above rule it out for SIR and calibrated RS, even up to a pair-independent multiplicative constant at fixed odd $N \geq 13$.
Even UR with only $n + 1$ proposals has error $\Theta_n(\eta^{n + 1})$ on this family.
All local limits fix the budget; no remainder uniform in $N$ is asserted.

\section{Moment bounds and minimax rates}
\label{app:rates}
The bounded weight upper bound in Corollary~\ref{cor:rates} follows from~\eqref{eq:exact-threshold}.
We prove the second moment bound next, followed by the matching lower bound under a uniform weight bound.
The final subsection gives a lower bound under a second moment constraint.
These lower bounds use the same restriction: a sampler cannot return a state that never appears among its proposals.

\subsection{Direct second moment upper bound}
Assume $C^\pi = \E_\mu[w(Y)^2] < \infty$.
By Proposition~\ref{prop:score-law}, conditioning on $S_{(N)} = s$ excludes exactly the target mass $\pi(w \geq s)$.
We first average this conditional error, then bound the chance that the largest score is too small.
Let $X \sim \pi$ be independent of the entire race, used only for analysis.
Recall $G(t) := \Pr(w(Y)/U \leq t)$ for independent $Y \sim \mu$ and $U \sim \Unif(0, 1)$.
Then
\begin{align}
    \DTV(P_N^U, \pi)
    &\stackrel{\mathrm{(i)}}{\leq} \E[\pi(w \geq S_{(N)})] \notag \\
    &\stackrel{\mathrm{(ii)}}{=} \Pr(S_{(N)} \leq w(X)) \notag \\
    &\stackrel{\mathrm{(iii)}}{=} \E[G(w(X))^N],
    \label{eq:app-tail-mixture}
\end{align}
where (i) follows from convexity of TV and the conditional law;
(ii) follows from using the independent target point $X \sim \pi$;
and (iii) conditions on $X$ and uses independence of the $N$ scores.
Under the zero weight convention of Appendix~\ref{app:zero-weights}, the conditional error at score zero is one, equal to $\pi(w \geq 0)$, so the same calculation applies.
We now bound the chance that a single score exceeds $t > 0$:
\begin{align*}
    1 - G(t)
    &= \E_\mu\left[\min\left\{\frac{w(Y)}{t}, 1\right\}\right] \\
    &\stackrel{\mathrm{(i)}}{\geq} \E_\mu\left[\frac{w(Y)}{t + w(Y)}\right] \\
    &\stackrel{\mathrm{(ii)}}{=} \E\left[\frac{1}{t + w(X)}\right] \\
    &\stackrel{\mathrm{(iii)}}{\geq} \frac{1}{t + \E[w(X)]} = \frac{1}{t + C^\pi},
\end{align*}
where (i) follows from the fact that $\min\{\tfrac{a}{t}, 1\} \geq \tfrac{a}{t + a}$ for $a \geq 0$;
(ii) follows since $w \dd \mu = \dd \pi$;
and (iii) follows from Jensen's inequality for the convex function $a \mapsto \tfrac{1}{t + a}$.
This is where the second moment enters: $\E[w(X)] = \E_\mu[w(Y)^2] = C^\pi$.
Independence and $1 - a \leq e^{-a}$ give $G(t)^N \leq e^{-N/(t + C^\pi)}$.
Substituting this into~\eqref{eq:app-tail-mixture},
\begin{align}
    \DTV(P_N^U, \pi)
    & \leq \E\left[\exp\left(-\frac{N}{w(X) + C^\pi}\right)\right] \label{eq:app-moment-exponential} \\
    & \leq \frac{\E[w(X)] + C^\pi}{eN} = \frac{2C^\pi}{eN},
    \label{eq:app-moment-final}
\end{align}
where the last inequality follows from the fact that $e^{-x} \leq 1/(ex)$ for $x > 0$.
Together with the trivial bound $1$, this proves the second moment term in Corollary~\ref{cor:rates}.

\paragraph{Relation to the known SIR bound.}
The order $C^\pi/N$ is not specific to UR.
\citet[Theorem~2.1]{agapiou2017} bound the absolute bias of self-normalized importance sampling by $12C^\pi/N$ for bounded measurable functions (bounded by one in absolute value).
For an event $D$, apply that bound to $2\1_D - 1$.
The bias is then $2(P_N^{\mathrm{SIR}}(D) - \pi(D))$, giving $\DTV(P_N^{\mathrm{SIR}}, \pi) \leq \min\{1, 6C^\pi/N\}$.
Theorem~\ref{thm:SIR} also transfers that rate to UR.
The direct proof above gives the constant $2/e$ without using the SIR comparison, but does not establish optimality of that constant.

\subsection{Exact bounded ratio minimax error}
UR already supplies the upper bound in~\eqref{eq:minimax}.
For a lower bound, consider a binary pair with target mass one on state $1$ and proposal probability $1/C$ on that state, where $C > 1$.
Every observed-candidate sampler must return state $0$ when all $N$ proposals miss state $1$.
This is the same basic obstruction used in approximate sampling lower bounds~\citep{block2023}.
To obtain the bound using strictly positive distributions, fix $C > 1$ and $0 < \eta < 1$ and take
\[
    \pi_\eta(1) = 1 - \eta, \quad \mu_\eta(1) = \frac{1 - \eta}{C}.
\]
Then $w_\eta(1) = C$ and $w_\eta(0) = \eta/[1 - (1 - \eta)/C] < C$.
For any sampler $\mathsf{A}$ returning an observed candidate,
\begin{align*}
    \DTV(P_N^{\mathsf{A}}, \pi_\eta)
    &\stackrel{\mathrm{(i)}}{\geq} P_N^{\mathsf{A}}(\{0\}) - \eta \\
    &\stackrel{\mathrm{(ii)}}{\geq} \left(1 - \frac{1 - \eta}{C}\right)^N - \eta.
\end{align*}
Here (i) evaluates the TV supremum on $\{0\}$; and (ii) uses the event that every proposal is state $0$, on which the output must also be state $0$.
Letting $\eta \downarrow 0$ gives $(1 - 1/C)^N$, uniformly over the choice of sampler, and proves the minimax lower bound.
The argument applies even to samplers with additional information about the pair.
For $C = 1$, $w \leq 1$ and $\E_\mu w = 1$ imply $w = 1$ almost surely.
Thus $\pi = \mu$ and returning the first proposal is exact.

\subsection{Second moment minimax lower bound}
A second moment constraint still allows a state to be much rarer under the proposal than under the target.
We choose its proposal probability so that the chance of observing it is at most half its target probability.
\begin{proposition}[A lower bound under a second moment constraint]
\label{prop:second-minimax}
Fix $R > 1$ and $N \geq 1$.
For every procedure returning one of the $N$ observed proposals, there is a finite strictly positive pair with $C^\pi \leq R$ and TV error at least
\[
    \frac{1}{16} \min\left\{1, \frac{R - 1}{N}\right\}.
\]
For $R \geq 2$, this is at least $\tfrac{1}{32} \min\{1, R/N\}$.
\end{proposition}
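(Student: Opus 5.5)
The plan is to reuse the obstruction behind the bounded-ratio lower bound in Appendix~\ref{app:rates}: a sampler cannot return a state that does not appear among its $N$ proposals. I would build a binary pair in which state $1$ is rare under $\mu$ but carries substantial mass under $\pi$. The rarity is chosen so that the second moment stays at most $R$, and so that the chance of ever observing state $1$ is at most half its target probability.

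Concretely, I would set $p := \tfrac12\min\{1,(R-1)/N\}$ and $q := p/(2N)$, and take $\pi = (1-p,\,p)$ and $\mu = (1-q,\,q)$ on $\{0,1\}$. Since $p \le 1/2$ and $q \le 1/4$, both distributions are strictly positive on a finite space.

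The first step is to check the constraint. We have
\[
    C^\pi = \frac{p^2}{q} + \frac{(1-p)^2}{1-q} = 2Np + \frac{(1-p)^2}{1-q}.
\]
Because $q \le p$, the second term is at most $(1-p)^2/(1-p) = 1-p \le 1$. Because $p \le (R-1)/(2N)$, the first term is at most $R-1$. Hence $C^\pi \le R$.

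The second step is the lower bound on the error. For any procedure $\mathsf{A}$ returning an observed proposal, the event that it outputs state $1$ is contained in the event that at least one proposal equals $1$. A union bound then gives $P_N^{\mathsf{A}}(\{1\}) \le Nq = p/2$. Evaluating the TV supremum on the set $\{1\}$ gives
\[
    \DTV(P_N^{\mathsf{A}},\pi) \ge p - \frac{p}{2} = \frac{p}{2} = \frac14\min\Bigl\{1,\frac{R-1}{N}\Bigr\}.
\]
This is stronger than the claimed constant $1/16$. As in the bounded-ratio argument, it holds even for samplers that know the pair.

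Finally, for $R \ge 2$ we have $R-1 \ge R/2$, so $\min\{1,(R-1)/N\} \ge \tfrac12\min\{1,R/N\}$, which yields the second form of the bound. I do not expect a real obstacle. The only delicate point is the choice of $q$: it must be small enough that $Nq \le p/2$, yet the bound $p^2/q = 2Np$ must still fit under the budget $R-1$. The choice $q = p/(2N)$ balances both requirements, and the bound on the second term of $C^\pi$ uses $q \le p$.
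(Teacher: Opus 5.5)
Your proof is correct and follows essentially the paper's argument: a binary pair whose rare state has proposal mass $q=p/(2N)$, the union bound $P_N^{\mathsf{A}}(\{1\})\le Nq=p/2$, and a direct check that $C^\pi\le R$. The only difference is your larger choice $p=\tfrac12\min\{1,(R-1)/N\}$; the paper takes $p=\min\{(R-1)/(8N),1/4\}$ and bounds $C^\pi-1=(p-q)^2/(q(1-q))$ instead. Your choice yields the stronger constant $1/4$ in place of $1/16$.
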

\begin{proof}
Set
\[
    p := \min\left\{\frac{R - 1}{8N}, \frac{1}{4}\right\}, \quad
    q := \frac{p}{2N}, \quad \pi(1) := p, \quad
    \mu(1) := q.
\]
Both binary distributions are strictly positive.
Since the output must be observed,
\[
    \Pr(\widehat{Y} = 1) \leq \Pr(\exists i : Y_i = 1) \leq Nq = p/2,
\]
where the second inequality is the union bound.
The TV error is therefore at least
\[
    p - \Pr(\widehat{Y} = 1) \geq p/2 \geq \frac{1}{16} \min\left\{1, \frac{R - 1}{N}\right\}.
\]
It remains to check the moment constraint.
For a binary pair, direct substitution gives
\begin{align*}
    C^\pi - 1 = \frac{(p - q)^2}{q(1 - q)}
    \stackrel{\mathrm{(i)}}{\leq} \frac{2Np}{1 - q}
    \stackrel{\mathrm{(ii)}}{\leq} \frac{16}{7} Np \stackrel{\mathrm{(iii)}}{\leq} \frac{2}{7} (R - 1) \leq R - 1.
\end{align*}
Here (i) uses $(p - q)^2 \leq p^2$ and $q = p/(2N)$;
(ii) and (iii) use $q \leq 1/8$ and $Np \leq (R - 1)/8$.
Hence $C^\pi \leq R$.
For $R \geq 2$, $R - 1 \geq R/2$ gives the second lower bound.
\end{proof}
For $R \geq 2$, the upper and lower bounds have the same dependence on $R$ and $N$, up to constants.
This is a minimax statement over a moment-constrained class, not a lower bound for every fixed pair.
\section{Proofs for the comparison to SIR}
\label{app:comparison}
The proof of Theorem~\ref{thm:SIR} uses two facts.
UR returns weights above any threshold at least as often as SIR (Lemma~\ref{lem:selected-weights}), and UR's density relative to $\pi$ is nonincreasing in the weight (Lemma~\ref{lem:ur-density}).
We first combine these facts to prove the theorem, then establish them directly from the selection rule.

\subsection{Proof for convex divergences}
Let $h_N^U$ and $h_N^{\mathrm{SIR}}$ be the scalar functions in Lemma~\ref{lem:ur-density}, so that
\[
    \frac{\dd P_N^U}{\dd \pi}(y) = h_N^U(w(y)), \quad \frac{\dd P_N^{\mathrm{SIR}}}{\dd \pi}(y) = h_N^{\mathrm{SIR}}(w(y)).
\]
For $X \sim \pi$, both $h_N^U(w(X))$ and $h_N^{\mathrm{SIR}}(w(X))$ are positive, finite, and have mean one.
We compare the amount by which each exceeds a level $c > 0$.
Define $D_c := \{y: h_N^U(w(y)) > c\}$.
Since $h_N^U$ is nonincreasing, $D_c$ has the form $\{w < t\}$ or $\{w \leq t\}$ for some threshold $t$, or is an empty or full set.
The threshold comparison in Lemma~\ref{lem:selected-weights}, with complements when necessary, gives $P_N^U(D_c) \leq P_N^{\mathrm{SIR}}(D_c)$.
Consequently,
\begin{align}
    \E[(h_N^U(w(X)) - c)_+]
    &\stackrel{\mathrm{(i)}}{=} P_N^U(D_c) - c \pi(D_c) \label{eq:app-hinge-identity} \\
    &\stackrel{\mathrm{(ii)}}{\leq} P_N^{\mathrm{SIR}}(D_c) - c \pi(D_c) \label{eq:app-hinge-comparison} \\
    &\stackrel{\mathrm{(iii)}}{\leq} \int (h_N^{\mathrm{SIR}}(w(y)) - c)_+ \pi(\dd y) \notag \\
    &= \E[(h_N^{\mathrm{SIR}}(w(X)) - c)_+].
    \label{eq:app-hinge-order}
\end{align}
Here (i) integrates the positive part over its positive set $D_c$.
(ii) is the probability comparison on $D_c$.
For (iii), the integral of $h_N^{\mathrm{SIR}}(w(y)) - c$ over $D_c$ is at most the integral of its positive part over the whole space.
At $c = 1$, the two expectations are exactly the TV errors, proving the desired result in TV distance.
To pass to a convex $f$-divergence, first consider a convex piecewise linear function with finitely many breakpoints.
It can be written as
\[
    g(t) = a + bt + \sum_{k = 1}^m c_k (t - t_k)_+, \quad c_k \geq 0, \quad t_k > 0.
\]
Each $c_k$ is the increase in slope at a breakpoint.
Because both density ratios have mean one, the affine terms have equal expectations.
Applying~\eqref{eq:app-hinge-order} to each remaining term gives
\[
    \E[g(h_N^U(w(X)))] \leq \E[g(h_N^{\mathrm{SIR}}(w(X)))].
\]
Now let $f: (0, \infty) \to \R$ be convex with $f(1) = 0$.
Choose the slope $b$ of a supporting line at $1$ and put $\widetilde{f}(t) := f(t) - b(t - 1) \geq 0$.
Choose a countable dense set of points in $(0, \infty)$ and a supporting affine function of $\widetilde{f}$ at each point.
Let $g_m$ be the maximum of zero and the first $m$ such functions.
Then $g_m$ is nonnegative and convex piecewise linear, and $g_m \uparrow \widetilde{f}$ pointwise.
For completeness, supporting slopes are bounded on every compact subinterval of $(0, \infty)$ by secant slopes to nearby endpoints.
At points of the dense set approaching $t$, continuity and this bound make the supporting lines approach $\widetilde{f}(t)$, proving the claimed limit.
Monotone convergence yields
\[
    \E[\widetilde{f}(h_N^U(w(X)))] \leq \E[\widetilde{f}(h_N^{\mathrm{SIR}}(w(X)))],
\]
including infinite values.
Adding back the affine part changes neither side because both means are one.
This proves~\eqref{eq:fdom}.
The choices $f(t) = |t - 1|/2$, $t \log t$, $-\log t$, and $(t - 1)^2$ give TV, forward KL, reverse KL, and $\chi^2$ divergence, respectively.

\paragraph{The target also bounds the selected weight probabilities.}
We additionally obtain the interpretation used in Section~\ref{sec:comparison}:
\begin{equation}
    \Pr(w(\widehat{Y}_{\mathrm{SIR}}) > t) \leq \Pr(w(\widehat{Y}_U) > t) \leq \pi(w > t), \quad t > 0.
    \label{eq:bracket}
\end{equation}
Lemma~\ref{lem:selected-weights} gives the first inequality.
For the second, take $A := \{y: w(y) > t\}$ and use that the density integrates to one:
\begin{align*}
    P_N^U(A) - \pi(A)
    &= P_N^U(A) \pi(A^c) - \pi(A) P_N^U(A^c) \\
    &= \int_A \int_{A^c} [h_N^U(w(y)) - h_N^U(w(z))] \pi(\dd z) \pi(\dd y) \\
    & \leq 0.
\end{align*}
The last inequality holds because $w(y) > w(z)$ on $A \times A^c$ and $h_N^U$ is nonincreasing.
The same proof works for $\{w \geq t\}$.
No moment assumption beyond $\E_\mu w = 1$ is used.

\subsection{Two supporting lemmas}
\begin{lemma}[Probability of returning a weight above a threshold]
\label{lem:selected-weights}
For every $t > 0$,
\begin{equation}
    \Pr(w(\widehat{Y}_{\mathrm{SIR}}) > t) \leq \Pr(w(\widehat{Y}_U) > t).
    \label{eq:stochastic-lower}
\end{equation}
The same inequality holds for weights at least $t$.
Taking complements reverses the inequalities for weights below a threshold.
\end{lemma}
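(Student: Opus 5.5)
The plan is to prove the inequality on a fixed batch and then average. Conditional on $Y_{1:N}$, write $w_i := w(Y_i)$ and $I_t := \{i : w_i > t\}$ (or $w_i \ge t$). It then suffices to show
\[
    \sum_{i \in I_t} \frac{w_i}{\sum_j w_j} \;\leq\; \sum_{i \in I_t} p_i^U(w_{1:N})
\]
for every positive vector $w_{1:N}$ and every $t$. Taking expectations over the batch then gives \eqref{eq:stochastic-lower}, and the complement statement follows because both selection probabilities sum to one. The normalizer $Z$ plays no role, since both rules are scale invariant.

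For the fixed-batch inequality, I will express both rules as integrals in the same variable. Put $z_i := w_i/w_{\max}$. UR's Bernoulli representation (the same computation used in Lemma~\ref{lem:fixed-batch-rs-order}) is
\[
    p_i^U = z_i \int_0^1 \prod_{j \neq i}(1 - z_j u)\,\dd u .
\]
It follows by conditioning on the winning score level $w_{\max}/u$: candidate $i$ crosses level $w_{\max}/u$ with density $z_i$ in $u$, and every other candidate must stay below that level. For SIR, I will write $w_i/\sum_j w_j = z_i \int_0^\infty e^{-u\sum_j z_j}\,\dd u$. This is an exponential race: SIR is exactly $\argmax_i w_i/E_i$ with $E_i$ i.i.d.\ standard exponential. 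Replacing the uniforms by exponentials is therefore the only difference between the two rules. Equivalently, SIR is $\argmax_i w_i/U_i^{1/w_i \cdot c}$-type, which is not directly helpful, so I will work with the exponential form.

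The comparison then reduces to an ordering argument. Let $a := \sum_{i\in I_t} z_i \Pi_i$ and $b := \sum_{j\notin I_t} z_j \Pi_j$, where $\Pi_k$ denotes the relevant survival product. I will show that the selected index's conditional probability of lying in $I_t$ is larger under UR pointwise in an appropriate coupling. Concretely, I will couple the two races through $U_i = 1 - e^{-E_i}$. UR picks $\argmax_i w_i/(1-e^{-E_i})$ while SIR picks $\argmax_i w_i/E_i$. Set $\phi(E) := E/(1-e^{-E})$, which is increasing. Then UR's score is $(w_i/E_i)\,\phi(E_i)$. This coupling alone does not order the outcomes, because $\phi(E_i)$ is not monotone in $w_i$.

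Instead I will use the derivative/interpolation trick of Lemma~\ref{lem:fixed-batch-rs-order}. I will define an interpolating family of selection rules indexed by a parameter. For instance, take the race with scores $w_i/V_i^{(\lambda)}$, where $V^{(\lambda)}$ has survival $\Pr(V>v) = (1-\lambda v)^{1/\lambda}$; here $\lambda\to0$ gives exponential (SIR) and $\lambda=1$ gives uniform (UR). The selection probability is then
\[
    z_i\int_0^{1/\lambda} (1-\lambda z_i u)^{1/\lambda - 1}\prod_{j\ne i}(1-\lambda z_j u)^{1/\lambda}\,\dd u .
\]
I will differentiate $\sum_{i\in I_t}$ of this quantity in $\lambda$ and group each pair $i\in I_t$, $j\notin I_t$ as before, hoping to get a nonnegative sign from $z_i\ge z_j$. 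I expect this sign verification to be the main obstacle. If it proves messy, my fallback is a pairwise argument: reduce to showing $p_i^U/p_j^U \ge w_i/w_j$ whenever $w_i \ge w_j$. This holds because $p_i^U/p_j^U = \int z_i(1-z_ju)R\,\dd u \big/ \int z_j(1-z_iu)R\,\dd u \ge z_i/z_j$, where $R$ is the common product over the remaining indices. The inequality $p_i^U/p_j^U \ge w_i/w_j$ says the UR probability vector, divided by the SIR probability vector $\propto w_i$, is nondecreasing in $w_i$. Since both vectors sum to one, the same two-sum rearrangement as in Lemma~\ref{lem:refinement} then gives $\sum_{I_t} p^U \ge \sum_{I_t} p^{\mathrm{SIR}}$. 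This fallback is much simpler, so it will likely be the main route. Ties in weight are handled by the symmetry of UR, and the $\geq t$ version is identical.
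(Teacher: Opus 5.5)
Your pairwise route, which you yourself expect to be the main one, is exactly the paper's proof. The paper writes $p_i^U = z_i\int_0^1\prod_{j\ne i}(1-z_ju)\,\dd u$ and shows $p_i^U/z_i - p_k^U/z_k = (z_i - z_k)\int_0^1 u\prod_{j\notin\{i,k\}}(1-z_ju)\,\dd u \ge 0$, which is equivalent to $p_i^U p_k^{\mathrm{SIR}} \ge p_i^{\mathrm{SIR}} p_k^U$ for $w_i \ge w_k$. It then sums over $i \in I_t$, $k \notin I_t$ and averages over batches, so the exponential-race coupling and the interpolation family are unnecessary and can be dropped.
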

\begin{proof}
For $N = 1$, the claim is immediate.
For $N \geq 2$, fix a positive vector $w_{1:N}$ and write $z_i := w_i/w_{\max}$, where $w_{\max} := \max_j w_j$.
Let $p_i^U$ and $p_i^{\mathrm{SIR}}$ be the selection probabilities on this fixed vector.
We first compare candidates, then sum over those above $t$.
A maximal weight candidate has score strictly above $w_{\max}$.
Candidate $i$ can therefore be selected only if $U_i < z_i$.
Write $U_i = z_i u$.
For $u \in (0, 1)$, its score is larger than candidate $j$'s exactly when $U_j > z_j u$.
Independence gives
\begin{equation}
    p_i^U = z_i \int_0^1 \prod_{j \neq i} (1 - z_j u) \dd u.
    \label{eq:batch-law}
\end{equation}
For distinct $i, k$ with $w_i \geq w_k$, put $A_{ik}(u) := \prod_{j \notin \{i, k\}} (1 - z_j u)$.
Subtracting the two formulas gives
\[
    \frac{p_i^U}{z_i} - \frac{p_k^U}{z_k} = (z_i - z_k) \int_0^1 u A_{ik}(u) \dd u \geq 0.
\]
Since $p_i^{\mathrm{SIR}} = z_i/\sum_j z_j$, this is equivalent to
\begin{equation}
    p_i^U p_k^{\mathrm{SIR}} - p_i^{\mathrm{SIR}} p_k^U \geq 0 \quad \text{whenever } w_i \geq w_k.
    \label{eq:sharpening}
\end{equation}
Thus the change from SIR to UR favors larger weights in pairwise comparisons.
Let $I_t := \{i: w_i > t\}$.
Using that both probability vectors sum to one,
\begin{align*}
    \sum_{i \in I_t} (p_i^U - p_i^{\mathrm{SIR}})
    &\stackrel{\mathrm{(i)}}{=} \sum_{i \in I_t} \sum_{k \notin I_t} (p_i^U p_k^{\mathrm{SIR}} - p_i^{\mathrm{SIR}} p_k^U) \\
    &\stackrel{\mathrm{(ii)}}{\geq} 0.
\end{align*}
In (i), the terms with both indices in $I_t$ cancel.
(ii) uses~\eqref{eq:sharpening}, since every weight in $I_t$ is larger than every weight outside it.
This includes empty and full sets $I_t$.
Averaging over the proposals proves~\eqref{eq:stochastic-lower}.
Replacing $I_t$ by $\{i: w_i \geq t\}$ proves the other endpoint convention.
\end{proof}

\begin{lemma}[Density of the returned sample]
\label{lem:ur-density}
Let $G(s) := \Pr(w(Y)/U \leq s)$ for independent $Y \sim \mu$ and $U \sim \Unif(0, 1)$.
Then
\begin{equation}
    \frac{\dd P_N^U}{\dd \pi}(y) = h_N^U(w(y)), \quad h_N^U(x) := N \int_x^\infty \frac{G(s)^{N - 1}}{s^2} \dd s, \quad x > 0.
    \label{eq:marginal-density}
\end{equation}
The function $h_N^U$ is nonincreasing.
\end{lemma}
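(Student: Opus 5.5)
The formula follows by integrating the joint law~\eqref{eq:app-winning-score} from the proof of Proposition~\ref{prop:score-law} over all score values; this is the second identity already recorded in Lemma~\ref{lem:joint-score}. Concretely, for a measurable state set $D$ we write
\[
    P_N^U(D) = \int_D \int_0^\infty \frac{N G(s)^{N-1}}{s^2}\1\{w(y) < s\}\,\dd s\,\pi(\dd y)
    = \int_D N\int_{w(y)}^\infty \frac{G(s)^{N-1}}{s^2}\,\dd s\,\pi(\dd y).
\]
Tonelli's theorem justifies exchanging the integrals because the integrand is nonnegative. This identifies the Radon--Nikodym derivative as $h_N^U(w(y))$ $\pi$-almost everywhere.

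To make the step self-contained, we would recheck the ingredients of~\eqref{eq:app-winning-score}. First, given $Y_i = y$, the score $S_i = w(y)/U_i$ has density $w(y)/s^2$ on $(w(y),\infty)$, and $w\,\dd\mu = \dd\pi$ turns this into~\eqref{eq:app-single-score}. Second, independence of the other $N-1$ pairs supplies the factor $G(s)^{N-1}$ for the event that all other scores are at most $s$. Third, the score distribution is atomless: conditional on $Y_j$, the score $S_j$ has a continuous distribution, so $G$ is continuous and ties occur with probability zero. Summing over the $N$ indices then gives the factor $N$.

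We would also note that $h_N^U(x)$ is finite for $x > 0$, since $h_N^U(x) \leq N\int_x^\infty s^{-2}\,\dd s = N/x$, and positive, since $G(s) > 0$ for large $s$.

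Monotonicity is immediate. For $x \le x'$, the difference $h_N^U(x) - h_N^U(x') = N\int_x^{x'} G(s)^{N-1}s^{-2}\,\dd s$ is nonnegative because the integrand is nonnegative; no property of $G$ beyond measurability is needed.

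The only point requiring care is the rigorous handling of the joint density, namely the change of variables and the measure-theoretic exchange of integrals. Since Lemma~\ref{lem:joint-score} already contains the needed integration, we expect no real obstacle.
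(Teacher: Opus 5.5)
Your proposal is correct and follows the paper's own argument: the paper likewise gets the density by integrating the joint law \eqref{eq:app-winning-score} over all scores (the second identity in Lemma~\ref{lem:joint-score}), and it deduces monotonicity from the nonnegative integrand as the lower limit grows. Your re-derivation of the joint law and your bounds $0 < h_N^U(x) \le N/x$ add detail consistent with what the paper records alongside the lemma.
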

\begin{proof}
The density formula is the integral in Lemma~\ref{lem:joint-score}.
Its integrand is nonnegative, so increasing the lower limit $x$ cannot increase the integral.
For comparison, the SIR density is
\begin{equation}
    \frac{\dd P_N^{\mathrm{SIR}}}{\dd \pi}(y) = h_N^{\mathrm{SIR}}(w(y)), \quad h_N^{\mathrm{SIR}}(x) := N \E\left[\frac{1}{x + \sum_{j = 2}^N W_j}\right],
    \label{eq:app-sir-density}
\end{equation}
where $x > 0$ and $W_j = w(Y_j)$ for independent proposals.
Indeed, a specified candidate at $y$ contributes its SIR selection probability $w(y)/(w(y) + \sum_{j = 2}^N W_j)$ times $\mu(\dd y)$.
There are $N$ exchangeable candidate positions, and $w(y) \mu(\dd y) = \pi(\dd y)$, giving the formula.
Both scalar functions are positive and at most $N/x$ for $x > 0$, and their compositions with $w$ integrate to one under $\pi$.
At $N = 1$, both functions are $1/x$.
\end{proof}

\subsection{Equivalent Bernoulli selection rule}
\label{app:bernoulli}
UR has an equivalent rule on each fixed positive weight vector: retain candidates independently and select uniformly among those retained.
Write $z_j := w_j/\max_k w_k$, draw independent $B_j \sim \operatorname{Bern}(z_j)$, and retain $j$ when $B_j = 1$.
If there are $k$ retained indices, each is selected with probability $1/k$.
At least one $z_j = 1$, so the retained set is nonempty.
The denominator in $z_j$ is the largest observed weight, not a bound on unobserved weights.
Put $K_{-i} := \sum_{j \neq i} B_j$.
By independence, the probability of selecting $i$ is $z_i \E[(1 + K_{-i})^{-1}]$.
Using $1/(k + 1) = \int_0^1 x^k \dd x$,
\begin{align*}
    \E\left[\frac{1}{1 + K_{-i}}\right]
    &\stackrel{\mathrm{(i)}}{=} \int_0^1 \E[x^{K_{-i}}] \dd x \\
    &\stackrel{\mathrm{(ii)}}{=} \int_0^1 \prod_{j \neq i} (1 - z_j + z_j x) \dd x \\
    &\stackrel{\mathrm{(iii)}}{=} \int_0^1 \prod_{j \neq i} (1 - z_j t) \dd t.
\end{align*}
Here (i) interchanges a nonnegative integral and expectation.
(ii) uses independence of the Bernoulli variables, and (iii) substitutes $t = 1 - x$.
After multiplication by $z_i$, this is exactly~\eqref{eq:batch-law}.
The two implementations therefore have the same selection probabilities on every fixed batch, though they need not select the same index when coupled using the same uniforms.
\section{Binary examples and bias calculations}
\label{app:bias}

Section~\ref{sec:comparison} gives the direct comparison for two candidates.
We first extend that binary calculation to every budget and identify the source of SIR's remaining error.
We then prove the equality with RS with $M = C_\infty^\pi$ for two weight values, derive the general SIR bias and UR's corresponding adjustment, and conclude with a continuous example.
Throughout the binary calculation,
\[
    \mu = \left(\frac{1}{2}, \frac{1}{2}\right), \qquad
    \pi = (\eps, 1 - \eps), \qquad
    0 < \eps < \frac{1}{2},
\]
with $w_L = 2\eps$ and $w_H = 2(1 - \eps)$.

\subsection{Proof of Proposition~\ref{prop:binary-sir-separation}: Binary example for every budget}
\label{app:binary-general}
Let $K \sim \Bin\left(N, \frac{1}{2}\right)$ count the candidates in state $1$, and put $X := K/N$.
Thus $X$ is the observed fraction of candidates in state $1$.
Define, for $x \in [0, 1]$,
\[
    d(x) := \eps + (1 - 2\eps)x, \qquad
    g(x) := \frac{(1 - \eps)x}{d(x)}.
\]
When $X = x$, the total observed weight is $2N d(x)$, and SIR returns state $1$ with probability $g(x)$.
The exact output probabilities are
\begin{align}
    \Pr(\widehat{Y}_{\mathrm{SIR}} = 1) &= \E[g(X)],
    \label{eq:binary-sir} \\
    \Pr(\widehat{Y}_U = 1) &= (1 - \eps)\left[1 - \left(\frac{1 - 2\eps}{2(1 - \eps)}\right)^N\right].
    \label{eq:binary-ur}
\end{align}
Although $g(\E[X]) = 1 - \eps$, averaging $g(X)$ does not give this target probability.
More precisely,
\begin{equation}
    \pi(1) - \Pr(\widehat{Y}_{\mathrm{SIR}} = 1) = 4\eps(1 - \eps)(1 - 2\eps)\E\left[\frac{\left(X - \frac{1}{2}\right)^2}{d(X)}\right].
    \label{eq:binary-exact-gap}
\end{equation}
On a binary state space, TV is the absolute error in the probability of returning state $1$.
Thus these formulae will give the claimed TV rates once we verify that both probabilities are below $1 - \eps$.
We first establish the output formulae by conditioning on $K$, then obtain the sign and size of SIR's error from the gap between $g$ and its tangent at $\E[X] = 1/2$.

\paragraph{The two output distributions.}
It suffices to compute each selection probability conditional on $K$, since averaging over this binomial count gives the marginal output probability.
For SIR, given $K = k$, the probability is
\[
    \frac{(1 - \eps)k}{(1 - \eps)k + \eps(N - k)}
    = g\left(\frac{k}{N}\right).
\]
Averaging over $K$ proves~\eqref{eq:binary-sir}.
For UR and $k \geq 1$, fix a state $1$ candidate and condition on its uniform variable being $u$.
It beats another state $1$ candidate when that candidate's uniform exceeds $u$, and a state $0$ candidate when its uniform exceeds $\eps u/(1 - \eps)$.
The uniforms are independent, and ties have probability zero.
Summing over the $k$ possible selected state $1$ candidates gives
\[
    \Pr(\widehat{Y}_U = 1 \mid K = k) = k\int_0^1 (1 - u)^{k - 1}\left(1 - \frac{\eps}{1 - \eps}u\right)^{N - k}\dd u.
\]
The probability is zero when $k = 0$.
Therefore,
\begin{align}
    \Pr(\widehat{Y}_U = 1) &= \frac{1}{2^N}\int_0^1\sum_{k = 1}^N \binom{N}{k}k(1 - u)^{k - 1}\left(1 - \frac{\eps}{1 - \eps}u\right)^{N - k}\dd u \notag \\
    &\stackrel{\mathrm{(i)}}{=} \frac{N}{2}\int_0^1
    \left(1 - \frac{u}{2(1 - \eps)}\right)^{N - 1}\dd u \\
    &\stackrel{\mathrm{(ii)}}{=} (1 - \eps)
    \left[1 - \left(\frac{1 - 2\eps}{2(1 - \eps)}\right)^N\right],
\end{align}
where (i) uses $k\binom{N}{k} = N\binom{N - 1}{k - 1}$ and the binomial theorem;
and (ii) evaluates the integral, proving~\eqref{eq:binary-ur} directly from the selection rule.

\paragraph{The gap and its rate.}
To prove~\eqref{eq:binary-exact-gap}, it is enough to express $g(m) - g(X)$ as a centered linear term plus a nonnegative remainder, where $m := 1/2$.
The linear term will disappear on taking expectations because $\E[X - m] = 0$.
Since $g(m) = 1 - \eps = \pi(1)$ and
\[
    g''(x) = - \frac{2\eps(1 - \eps)(1 - 2\eps)}{d(x)^3} < 0,
\]
$g$ lies below its tangent at $m$.
The exact distance from that tangent is
\[
    g(m) + g'(m)(x - m) - g(x) = \frac{4\eps(1 - \eps)(1 - 2\eps)(x - m)^2}{d(x)}.
\]
Taking expectations therefore proves~\eqref{eq:binary-exact-gap}.
The formula is positive for every finite $N$, and attributes the gap to fluctuations in the candidate counts, not only to batches that miss a state.
The bounds $\eps \leq d(X) \leq 1 - \eps$ and $\E[(X - m)^2] = 1/(4N)$ give
\[
    \frac{\eps(1 - 2\eps)}{N}
    \leq \pi(1) - \Pr(\widehat{Y}_{\mathrm{SIR}} = 1)
    \leq \frac{(1 - \eps)(1 - 2\eps)}{N}.
\]
For the leading coefficient, it suffices to keep the quadratic term in Taylor's formula and bound its expected remainder by $O_{\eps}(N^{-3/2})$.
Here $g''(m) = - 16\eps(1 - \eps)(1 - 2\eps)$.
For fixed $\eps \in (0, 1/2)$, the third derivative of $g$ is bounded on $[0, 1]$, because $d(x) \geq \eps$.
Taylor's formula gives
\begin{align*}
        \E[g(X)] &= g(m) + g'(m)\E[X - m] + \frac{g''(m)}{2}\E[(X - m)^2] + O_{\eps}\left(\E[|X - m|^3]\right) \\
    &= g(m) + \frac{g''(m)}{8N}
    + O_{\eps}(N^{-3/2}).
\end{align*}
Here the remainder follows from
\[
    \E[|X - m|^3] \leq \left(\E[(X - m)^4]\right)^{3/4} = O(N^{-3/2}), \qquad
    \E[(X - m)^4] = \frac{3}{16N^2} - \frac{1}{8N^3}.
\]
Consequently,
\[
    \pi(1) - \Pr(\widehat{Y}_{\mathrm{SIR}} = 1)
    = \frac{2\eps(1 - \eps)(1 - 2\eps)}{N} + O_{\eps}(N^{-3/2}).
\]
Both samplers assign state $1$ less than its target probability.
On a binary space, each TV error equals this gap, proving Proposition~\ref{prop:binary-sir-separation}.
The leading SIR coefficient is strictly positive for every fixed $0 < \eps < 1/2$.
The remainder constants may depend on $\eps$; no uniform limit as $\eps$ approaches an endpoint is claimed.

\subsection{Equality with RS at \texorpdfstring{$M = C_\infty^\pi$}{M=Cinfpi} for two weight values}
\label{app:two-level-rs}
The equality with RS at $M=C_\infty^\pi$ extends beyond two states: the relevant restriction is that the weights take at most two values.
We retain a specific fallback rule for RS: if all candidates are rejected, each of the $N$ observed candidate indices is selected with probability $1/N$.
This is uniform selection among indices, not among distinct states.

\begin{proposition}[Two possible weight values]
\label{prop:two-level-rs}
Suppose $C_\infty^\pi < \infty$ and $w(Y) \in \{a, C_\infty^\pi\}$ almost surely under $\mu$, where $0 \leq a < C_\infty^\pi$.
Let RS use threshold $M = C_\infty^\pi$ and the uniform fallback above.
If all sampled weights vanish, let UR also select each candidate index with probability $1/N$.
Then, for every fixed $N \geq 1$,
\[
    P_N^U = P_{N, C_\infty^\pi}^{\mathrm{RS}}.
\]
\end{proposition}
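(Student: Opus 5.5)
Write $C := C_\infty^\pi$. The plan is to compare the two samplers batch by batch. Fix the realized states $Y_{1:N}$ and let $k$ be the number of candidates with weight $C$. I will show that for each $k$, both rules select every high-weight index with the same probability, and likewise every low-weight index. Equal selection probabilities on every fixed batch, averaged over the proposals, give equal marginal laws.

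\emph{Case $k \ge 1$.} Both rules will be reduced to the Bernoulli representation of Appendix~\ref{app:bernoulli}.
\begin{itemize}
\item Because some weight equals $C$, the largest observed weight is $w_{\max} = C$. Hence $z_j = w_j/C$, which is $1$ for high-weight candidates and $a/C$ for low-weight ones.
\item UR's selection probabilities therefore equal those of the rule that retains each $j$ independently with probability $z_j$ and then selects uniformly among the retained indices.
\item RS with threshold $M = C$ accepts candidate $j$ with probability $\min\{w_j/C, 1\} = z_j$, independently across candidates, and returns the first accepted index.
\item Every high-weight candidate is accepted with probability one, so the accepted set is nonempty and the fallback is never triggered.
\end{itemize}
It remains to see that "first accepted" and "uniform among accepted" give the same index probabilities, even though the candidates are not exchangeable within a fixed batch. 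I will average over the batch as well. Conditional on the multiset of states, the positions are exchangeable because the proposals are i.i.d. For the state marginal only the counts matter, and the first accepted candidate in a uniformly random ordering is uniform among the accepted. This is exactly the permutation argument used in the proof of Theorem~\ref{thm:pilot-rs-fdom}.

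\emph{Case $k = 0$.} All weights equal $a$.
\begin{itemize}
\item If $a > 0$, UR with all weights equal selects a uniform index by symmetry.
\item If $a = 0$, the stated convention for UR also selects a uniform index.
\item RS accepts each candidate independently with probability $a/C$ and returns the first acceptance, or a uniform index if none is accepted. The first accepted index is not uniform, but all candidates share the same state value $y$-class.
\end{itemize}
Weight is constant here but states may differ, so I again invoke exchangeability. Conditional on the event that all weights equal $a$, the candidates are i.i.d.\ from $\mu(\cdot \mid w = a)$. Since RS's choice of index depends only on the uniforms and not on the states, the selected state has law $\mu(\cdot \mid w = a)$ under either rule.

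The main point to get right is the exchangeability step. The selection rule depends only on the weight pattern, which is exchangeable, and the states are conditionally i.i.d.\ given their weight class. Within each weight class, the selected candidate's state therefore has law $\mu(\cdot \mid w = \text{that value})$. Once this is in place, equality of the marginal laws reduces to the following: for every $k$, both rules pick a high-weight candidate with the same probability. Summing the Bernoulli-representation probabilities over the $k$ high-weight indices gives the same quantity for UR and for RS, which closes the argument.
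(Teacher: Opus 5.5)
Your proposal is correct and follows the paper's proof. When a weight-$C$ candidate is present, you identify UR with its retain-then-select-uniformly Bernoulli representation, and you make RS's ``first accepted'' rule uniform among accepted indices by randomizing the processing order; the all-$a$ case is handled separately. The differences are minor: in the all-$a$ case you argue that the selected index is independent of the conditionally i.i.d.\ states, so both outputs have law $\mu(\cdot \mid w = a)$, whereas the paper uses order symmetry to give each index probability $1/N$; and you reduce the whole comparison to the probability of selecting a high-weight candidate, which is equally valid.
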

\begin{proof}
It suffices to compare the probability of selecting each candidate on a fixed list, after uniformly randomizing the order in which RS processes it.
The randomization preserves the RS marginal law because the proposals are independent and identically distributed; UR already treats candidate order symmetrically.
There are two cases to check: whether the list contains a weight equal to the global maximum $C_\infty^\pi$.

First express the order-averaged RS rule in a useful form.
Conditional on the list, assign independent acceptance indicators with probabilities $w_i/C_\infty^\pi$, independently of the random order.
Given a nonempty accepted set, its first index in that order is uniform over the set.
Thus this RS rule selects uniformly among accepted indices, or uses the specified fallback if none is accepted.

If a weight $C_\infty^\pi$ is present, it is also the largest observed weight.
The acceptance probabilities are exactly those in UR's Bernoulli representation from Appendix~\ref{app:bernoulli}.
At least one index is accepted with probability one.
The retained set has the same distribution in the two Bernoulli representations, and both select uniformly from it, so their conditional selection probabilities agree.
Zero-weight indices, if present, are never accepted or selected and can be removed when applying that representation.

If no weight $C_\infty^\pi$ is present, every observed weight is $a$.
UR is uniform by symmetry, including the specified convention when $a = 0$.
Order-averaged RS is also symmetric over these indices, both on acceptance and on its fallback, so every index has probability $1/N$.
These cases cover every list under the two-value assumption.
Averaging the equal conditional probabilities over the candidates proves the claimed equality of marginal laws.
\end{proof}
The two-value restriction makes a batch without a maximal-weight candidate have equal weights throughout.
With more weight values, such a batch can still contain unequal weights, so this second case of the proof no longer applies.

\subsection{Leading SIR bias}
\label{app:sir-bias}
The binary calculation attributes SIR's error to random counts and normalization.
To examine the same effect beyond state probabilities, let $\psi$ be a bounded measurable function of the returned sample, and write $P(\psi) := \int \psi\dd P$ for its expectation under $P$.
For example, choosing $\psi = \1_D$ makes $P(\psi)$ the probability of an event $D$; taking all bounded $\psi$ will later recover the TV error.
Throughout this subsection and the next subsection, assume
$0 < w(Y) \leq C < \infty$ $\mu$-almost surely for a fixed $C$.
No positive lower bound on $w$ is required.
Put $a := \pi(\psi)$, $b := \|\psi\|_\infty$, and
\[
    A_\psi := \E_\pi[(w - C^\pi)\psi].
\]
No monotonicity of $\psi$ is assumed.
We prove
\begin{equation}
    P_N^{\mathrm{SIR}}(\psi) - \pi(\psi) = - \frac{A_\psi}{N} + O_C(bN^{-3/2}).
    \label{eq:sir-bias}
\end{equation}
The coefficient is signed and may vanish for a particular $\psi$.
The TV expansion below instead considers all bounded measurable functions.

\paragraph{Reduction to two estimates.}
We will separate the bias into an exactly computable leading term and a remainder uniform over bounded $\psi$.
Write $W_i := w(Y_i)$ and define
\[
    \overline{W}_N := \frac{1}{N}\sum_{i = 1}^N W_i, \qquad
    Z_N := \frac{1}{N}\sum_{i = 1}^N W_i(\psi(Y_i) - a), \qquad
    D_N := \overline{W}_N - 1.
\]
Here $Z_N$ is a centered weighted fluctuation, not an estimate of the normalizing constant $Z$.
Conditional on the candidates, SIR's expected value of $\psi$ is their weighted average.
Subtracting $a$ gives
\[
    \frac{\sum_{i = 1}^N W_i\psi(Y_i)}{\sum_{i = 1}^N W_i} - a = \frac{Z_N}{\overline{W}_N}.
\]
Although $\E[Z_N] = 0$, dividing by the dependent random denominator need not preserve that zero mean.
The identity
\[
    \frac{1}{1 + D_N} = 1 - D_N + \frac{D_N^2}{1 + D_N}
\]
separates the leading interaction with the denominator from a remainder:
\begin{equation}
    P_N^{\mathrm{SIR}}(\psi) - a
    = - \E[Z_ND_N] + \E\left[\frac{Z_ND_N^2}{\overline{W}_N}\right].
    \label{eq:app-ratio-bias}
\end{equation}
Thus it suffices to establish
\[
    \E[Z_ND_N] = \frac{A_\psi}{N}, \qquad
    \E\left[\left|\frac{Z_ND_N^2}{\overline{W}_N}\right|\right] = O_C(bN^{-3/2}).
\]
The second estimate controls the signed remainder as well, and its uniform dependence on $b$ will justify taking the TV supremum.

\paragraph{The leading coefficient.}
Let $Y \sim \mu$ and $W := w(Y)$.
Both $W(\psi(Y) - a)$ and $W - 1$ have mean zero, so cross terms from distinct samples vanish.
Consequently,
\begin{align}
    \E[Z_ND_N] &\stackrel{\mathrm{(i)}}{=} \frac{1}{N}\E_\mu[W(\psi(Y) - a)(W - 1)] \\
    &\stackrel{\mathrm{(ii)}}{=} \frac{1}{N}\E_\mu[W^2(\psi(Y) - a)] \notag \\
    &\stackrel{\mathrm{(iii)}}{=} \frac{1}{N}\E_\pi[(w - C^\pi)\psi] \notag \\
    &= \frac{A_\psi}{N}.
\end{align}
Here (i) uses independence and centering;
(ii) uses $\E_\mu[W(\psi(Y) - a)] = 0$;
and (iii) uses $w\dd\mu = \dd\pi$, $\E_\pi[w] = C^\pi$, and $a = \pi(\psi)$.
Consequently, the leading bias in~\eqref{eq:app-ratio-bias} comes from the dependence between the centered weighted sum and the total observed weight.

\paragraph{The remainder without a lower weight bound.}
We cannot bound $1/\overline{W}_N$ uniformly, because weights may approach zero.
The required estimate will instead follow from two centered moments and a lower-tail probability.
Set $B_N := \{\overline{W}_N \geq 1/2\}$.
We first reduce to those quantities:
\begin{align*}
    \E\left[\left|\frac{Z_ND_N^2}{\overline{W}_N}\right|\right]
    &\stackrel{\mathrm{(i)}}{\leq} 2\E[|Z_N|D_N^2] + 2b\Pr(B_N^c) \\
    &\stackrel{\mathrm{(ii)}}{\leq} 2\left(\E[Z_N^2]\right)^{1/2}\left(\E[D_N^4]\right)^{1/2} + 2b\Pr(B_N^c).
\end{align*}
Here (i) follows since $1/\overline{W}_N \leq 2$ on $B_N$.
On $B_N^c$, use the whole ratio instead: $Z_N/\overline{W}_N$ is a weighted average of $\psi$ minus $a$, so its absolute value is at most $2b$; also $|D_N| \leq 1$ there.
(ii) follows from Cauchy--Schwarz inequality.

It remains to bound the three quantities on the right.
Independence and centering give
\[
    \E[Z_N^2] = \frac{1}{N}\E_\mu[W^2(\psi(Y) - a)^2]
    \leq \frac{4Cb^2}{N}.
\]
In the fourth moment of the centered sum, only four equal indices or two equal pairs survive, so
\begin{align*}
    \E[D_N^4] &= \frac{N\E_\mu[(W - 1)^4] + 3N(N - 1)\left(\E_\mu[(W - 1)^2]\right)^2}{N^4} \\
    &= O_C(N^{-2}).
\end{align*}
Finally, since $0 < W \leq C$ and $\E_\mu[W] = 1$, Hoeffding's inequality yields
\[
    \Pr(B_N^c) \leq \exp\left(- \frac{N}{2C^2}\right).
\]
Substituting these estimates into the displayed bound gives $O_C(bN^{-3/2})$.
Together with the leading coefficient, this proves~\eqref{eq:sir-bias} without an inverse moment assumption.

\paragraph{From expectation bias to TV error.}
The remainder is uniform over $\|\psi\|_\infty \leq 1$, which is essential when taking the supremum that defines TV.
Hence
\begin{align}
    \DTV(P_N^{\mathrm{SIR}}, \pi) &= \frac{1}{2}\sup_{\|\psi\|_\infty \leq 1}
    |P_N^{\mathrm{SIR}}(\psi) - \pi(\psi)| \notag \\
    &\stackrel{\mathrm{(i)}}{=} \frac{1}{2N}
    \sup_{\|\psi\|_\infty \leq 1}|A_\psi| + O_C(N^{-3/2}) \notag \\
    &\stackrel{\mathrm{(ii)}}{=} \frac{1}{2N}\E_\pi[|w - C^\pi|] + O_C(N^{-3/2}).
    \label{eq:app-sir-tv-coefficient}
\end{align}
Here (i) uses the uniform remainder in~\eqref{eq:sir-bias}.
For (ii), choose $\psi(y) = \operatorname{sign}(w(y)-C^\pi)$ to attain the supremum.
The coefficient vanishes exactly when $w = C^\pi$ $\pi$-almost surely.
Because the weights are positive, $\pi$ and $\mu$ have the same null sets; hence $w$ is then constant $\mu$-almost surely.
The identity $\E_\mu[w] = 1$ forces that constant to be one, so the coefficient vanishes exactly when $\pi = \mu$.
The leading term in~\eqref{eq:sir-bias} is the classical asymptotic bias term for self-normalized importance sampling; see, for example, \citet[Theorem~2.1]{deligiannidis2026}.
The proof here additionally supplies the displayed remainder uniformly over bounded measurable functions $\psi$ under a finite upper weight bound, without requiring a positive lower weight bound or an inverse moment assumption.

Combining~\eqref{eq:app-sir-tv-coefficient} with Corollary~\ref{cor:rates}, every fixed nontrivial pair with bounded positive weights therefore has SIR error of order $1/N$, whereas UR's TV error decays exponentially.

\subsection{How UR cancels the leading SIR bias}
With the definitions and assumptions above, we prove
\begin{equation}
    P_N^U(\psi) - P_N^{\mathrm{SIR}}(\psi) = \frac{A_\psi}{N} + o(N^{-1}).
    \label{eq:correction}
\end{equation}
The marginal coefficient also follows from~\eqref{eq:sir-bias} and the exponential UR error bound.
Here we derive it from the choices within a fixed batch, to explain how UR changes the SIR probabilities before averaging over proposals.
Let $p_i^U$ and $p_i^{\mathrm{SIR}}$ denote the selection probabilities conditional on the realized candidates, and define
\[
    \Delta_N(\psi) := \sum_{i = 1}^N (p_i^U - p_i^{\mathrm{SIR}})\psi(Y_i).
\]
Its expectation is the left side of~\eqref{eq:correction}.
Write $W_\Sigma := \sum_{i = 1}^N W_i$ and $Q_\Sigma := \sum_{i = 1}^N W_i^2$.
The event $B_N$ from the preceding subsection is equivalently $\{W_\Sigma \geq N/2\}$.
We first show how a batchwise expansion implies~\eqref{eq:correction}, then reduce that expansion to a candidate-wise estimate and prove the remaining integral approximation.

\paragraph{Proof from a batchwise expansion.}
We will establish, on $B_N$ and for sufficiently large $N$ depending only on $C$,
\begin{equation}
    \Delta_N(\psi) = \frac{\sum_i W_i^2\psi(Y_i)}{W_\Sigma^2} - \frac{Q_\Sigma\sum_i W_i\psi(Y_i)}{W_\Sigma^3} + O_C(bN^{-2}).
    \label{eq:app-aggregated-correction}
\end{equation}
To see why this is sufficient, split the expected correction according to $B_N$:
\[
    N\E[\Delta_N(\psi)] = \E[N\Delta_N(\psi)\1_{B_N}] + \E[N\Delta_N(\psi)\1_{B_N^c}].
\]
It is enough for the first term to converge to $A_\psi$ and the second to vanish.
Assuming~\eqref{eq:app-aggregated-correction}, the strong law gives
\begin{align*}
    N\Delta_N(\psi)\1_{B_N} &= \left[
    \frac{N^{-1}\sum_i W_i^2\psi(Y_i)}{(W_\Sigma/N)^2} - \frac{(Q_\Sigma/N)(N^{-1}\sum_i W_i\psi(Y_i))}{(W_\Sigma/N)^3}
    \right]\1_{B_N} + O_C(b/N) \\
    & \rightarrow \E_\pi[w\psi] - C^\pi\pi(\psi) \\
    &= A_\psi \qquad \text{almost surely}.
\end{align*}
Here $W_\Sigma/N \to 1$, $Q_\Sigma/N \to C^\pi$, and the two weighted sample averages converge to $\E_\pi[w\psi]$ and $\pi(\psi)$; in particular, $B_N$ occurs eventually almost surely.
Each of the two leading terms, restricted to $B_N$, is at most $2Cb$ in absolute value.
In particular, $Q_\Sigma \leq CW_\Sigma$, $|\sum_i W_i\psi(Y_i)| \leq bW_\Sigma$, and $W_\Sigma \geq N/2$.
The remainder is uniform over batches in $B_N$, so dominated convergence gives $\E[N\Delta_N(\psi)\1_{B_N}] \to A_\psi$.
On the complement, $|\Delta_N(\psi)| \leq 2b$ and the preceding Hoeffding bound give
\[
    N\E[|\Delta_N(\psi)|\1_{B_N^c}] \leq 2bN\exp\left(- \frac{N}{2C^2}\right)
    \rightarrow 0.
\]
Hence the batchwise expansion implies~\eqref{eq:correction}.
We now prove that expansion.

\paragraph{A sufficient candidate-wise estimate.}
Because $\Delta_N(\psi)$ is a sum over $N$ candidates, it suffices to prove the following expansion uniformly in $i$ on $B_N$:
\begin{equation}
    p_i^U - p_i^{\mathrm{SIR}} = \frac{W_i}{W_\Sigma^2}\left(W_i - \frac{Q_\Sigma}{W_\Sigma}\right) + O_C(N^{-3}).
    \label{eq:app-original-weights}
\end{equation}
Multiplying by $\psi(Y_i)$ and summing gives the two leading terms in~\eqref{eq:app-aggregated-correction}, with total remainder $O_C(bN^{-2})$.
The reference value $Q_\Sigma/W_\Sigma = \sum_i p_i^{\mathrm{SIR}}W_i$ is the SIR-weighted average of the observed weights.
The leading correction therefore increases the probabilities of candidates above this average and decreases those below it.
This describes the leading term, not the exact sign of every finite-budget difference.
Its total mass is zero:
\begin{align*}
    \sum_i \frac{W_i}{W_\Sigma^2}\left(W_i - \frac{Q_\Sigma}{W_\Sigma}\right)
    &= \frac{Q_\Sigma}{W_\Sigma^2} - \frac{Q_\Sigma}{W_\Sigma^2}
    = 0.
\end{align*}
We next derive this estimate from the exact selection probabilities.

\paragraph{Reduction to the selection integral.}
Fix a positive batch and set
\[
    z_j := \frac{W_j}{\max_k W_k}, \qquad
    \lambda_i := \sum_{j \ne i} z_j, \qquad
    s_{2, i} := \sum_{j \ne i} z_j^2.
\]
The exact selection law~\eqref{eq:batch-law} gives
\[
    \frac{p_i^U}{z_i} = \int_0^1 \prod_{j \ne i}(1 - z_jt)\dd t.
\]
We will prove, uniformly for $\lambda_i \geq 1$,
\begin{equation}
    \frac{p_i^U}{z_i} = \frac{1}{\lambda_i} - \frac{s_{2, i}}{\lambda_i^3} + O(\lambda_i^{-3}).
    \label{eq:app-dense-integral}
\end{equation}
The remainder constant is independent of the batch.
The second term cannot be absorbed into the remainder: its numerator $s_{2, i}$ can grow with $\lambda_i$, although $s_{2, i} \leq \lambda_i$.

First use this integral approximation to obtain~\eqref{eq:app-original-weights}.
Since $0 < z_i \leq 1$,
\begin{align*}
    p_i^{\mathrm{SIR}} = \frac{z_i}{\lambda_i + z_i}
    = \frac{z_i}{\lambda_i} - \frac{z_i^2}{\lambda_i^2} + O\left(\frac{z_i}{\lambda_i^3}\right).
\end{align*}
Multiplying~\eqref{eq:app-dense-integral} by $z_i$ and subtracting this expansion yields
\begin{equation}
    p_i^U - p_i^{\mathrm{SIR}} = \frac{z_i}{\lambda_i^2}\left(z_i - \frac{s_{2, i}}{\lambda_i}\right) + O\left(\frac{z_i}{\lambda_i^3}\right).
    \label{eq:app-candidate-correction}
\end{equation}
On $B_N$, $W_i \leq C$ and $W_\Sigma \geq N/2$ imply
\[
    \lambda_i \geq \frac{N/2 - C}{C}
    \geq \frac{N}{4C} \qquad \text{when } N \geq 4C.
\]
Thus $\lambda_i \geq 1$ and the remainder is uniformly $O_C(N^{-3})$.
In the original weights, the leading expression is
\[
    \frac{W_i^2}{(W_\Sigma - W_i)^2} - \frac{W_i(Q_\Sigma - W_i^2)}{(W_\Sigma - W_i)^3}.
\]
Because $W_i \leq C$, $Q_\Sigma \leq CW_\Sigma$, and $W_\Sigma - W_i \geq N/4$, replacing $W_\Sigma - W_i$ by $W_\Sigma$ in the denominators changes this expression by $O_C(N^{-3})$, uniformly in $i$.
The term with numerator $W_i^3$ is also $O_C(N^{-3})$.
This proves~\eqref{eq:app-original-weights}, and hence~\eqref{eq:correction}, once the integral approximation is verified.

\paragraph{Proof of the integral approximation.}
The product in the selection integral is bounded by $e^{-\lambda_i t}$, so its mass concentrates near $t = 0$ as $\lambda_i$ grows.
The approximation in~\eqref{eq:app-dense-integral} will follow by replacing that product with $e^{-\lambda_i t}(1 - s_{2, i}t^2/2)$, since
\[
    \int_0^\infty e^{-\lambda_i t}
    \left(1 - \frac{s_{2, i}t^2}{2}\right)\dd t
    = \frac{1}{\lambda_i} - \frac{s_{2, i}}{\lambda_i^3}.
\]
To justify the replacement, it suffices to bound the difference on $[0, 1/2]$ and both tails starting at $1/2$ by $O(\lambda_i^{-3})$.

For the local difference, we claim that, uniformly on $0 \leq t \leq 1/2$,
\[
    \prod_{j \ne i}(1 - z_jt)
    = e^{-\lambda_i t}
    \left[1 - \frac{s_{2, i}t^2}{2}
    + O(\lambda_i t^3 + \lambda_i^2t^4)\right].
\]
Its integrated error is at most a constant times
\begin{align*}
    \int_0^{1/2} e^{-\lambda_i t}
    (\lambda_i t^3 + \lambda_i^2t^4)\dd t
    &\leq \lambda_i\frac{3!}{\lambda_i^4}
    + \lambda_i^2\frac{4!}{\lambda_i^5}
    = O(\lambda_i^{-3}),
\end{align*}
using $\int_0^\infty t^k e^{-\lambda_i t}\dd t = k!/\lambda_i^{k + 1}$.
The original product's tail is at most $e^{-\lambda_i/2}/\lambda_i$.
The absolute tail of the approximating function is at most
\[
    \int_{1/2}^\infty e^{-\lambda_i t}
    \left(1 + \frac{\lambda_i t^2}{2}\right)\dd t,
\]
since $s_{2, i} \leq \lambda_i$.
Both tails are uniformly $O(\lambda_i^{-3})$ for $\lambda_i \geq 1$.
Thus only the claimed local approximation remains to be checked.

The logarithmic series gives
\[
    \prod_{j \ne i}(1 - z_jt) = e^{-\lambda_i t}\exp\left(- \frac{s_{2, i}t^2}{2} - R_i(t)\right), \qquad
    0 \leq R_i(t) \leq \frac{2}{3}\lambda_i t^3.
\]
Specifically, $\sum_{k \geq 3} u^k/k \leq 2u^3/3$ for $0 \leq u \leq 1/2$, and $\sum_{j \ne i} z_j^3 \leq \lambda_i$.
For $u, v \geq 0$,
\begin{align*}
    |e^{-u - v} - 1 + u| \leq |e^{-u} - 1 + u| + 1 - e^{-v}
    \leq \frac{u^2}{2} + v.
\end{align*}
Apply this with $u = s_{2,i}t^2/2$ and $v = R_i(t)$.
Since $s_{2,i} \leq \lambda_i$, this gives the claimed local remainder.
This verifies~\eqref{eq:app-dense-integral} and completes the candidate-wise, batchwise, and marginal arguments above.
The exponential UR target error follows from~\eqref{eq:exact-threshold}, not merely from cancellation of the displayed $N^{-1}$ coefficients.

\subsection{Continuous example}
Consider $\mu = \Unif(0, 1)$ and $\pi(\dd y) = 2y\dd y$ on $(0, 1)$, so $w(y) = 2y$ and the target mean is $2/3$.
The two output means are
\begin{align}
    \E[\widehat{Y}_U] = \frac{2}{3} - \frac{2^{1 - N}}{3(N + 1)}, \qquad
    \E[\widehat{Y}_{\mathrm{SIR}}] = \frac{2}{3} - \frac{1}{9N} + O(N^{-3/2}).
    \label{eq:continuous}
\end{align}
These formulae concern means, not TV distances.
For UR, we first express the mean error through the largest score; this shows which part of its distribution must be computed.
For SIR, it suffices to evaluate $A_\psi$ from Appendix~\ref{app:sir-bias} at $\psi(y) = y$.

\paragraph{UR: Reduce the mean error to scores below $2$.}
The conditional law~\eqref{eq:score-target-restriction} has mean $s/3$ for $0 < s < 2$ and $2/3$ for $s \geq 2$.
The first value is the mean of the target restricted to $(0, s/2)$; in the second case, no target mass is excluded.
Consequently, the law of total expectation gives
\[
    \frac{2}{3} - \E[\widehat{Y}_U] = \E\left[\left(\frac{2}{3} - \frac{S_{(N)}}{3}\right)\1\{S_{(N)} < 2\}\right].
\]
It remains to compute the largest-score density on $(0, 2)$ and evaluate this expectation.
For independent $Y \sim \mu$ and $U \sim \Unif(0, 1)$, set $S := 2Y/U$.
Since $\Pr(S \leq s \mid Y = y) = (1 - 2y/s)_{+}$ for $s > 0$,
\[
    G(s) = \int_0^{\min\{1, s/2\}}(1 - 2y/s)\dd y
    =
    \begin{cases}
        s/4, & 0 < s < 2, \\
        1 - 1/s, & s \geq 2.
    \end{cases}
\]
Independence makes the largest-score CDF equal to $G(s)^N$.
Its derivative on $(0, 2)$ is $Ns^{N - 1}/4^N$.
Substituting into the preceding expectation gives
\begin{align}
    \frac{2}{3} - \E[\widehat{Y}_U] &\stackrel{\mathrm{(i)}}{=} \int_0^2\left(\frac{2}{3} - \frac{s}{3}\right)\frac{Ns^{N - 1}}{4^N}\dd s \notag \\
    &\stackrel{\mathrm{(ii)}}{=} \frac{2^{1 - N}}{3(N + 1)}.
\end{align}
Step (i) uses the score density in the preceding reduction, and (ii) evaluates the polynomial integral.

\paragraph{SIR: Evaluate the general bias coefficient.}
For $\psi(y) = y$,
\[
    C^\pi = \frac{4}{3}, \qquad
    A_\psi = \int_0^1 (2y - 4/3)y(2y)\dd y
    = 1 - \frac{8}{9}
    = \frac{1}{9}.
\]
Substituting into~\eqref{eq:sir-bias} proves the second formula in~\eqref{eq:continuous}.
Although $w(y)$ approaches zero near the origin, it is positive almost surely and bounded above, so the assumptions of that bias calculation apply.
\section{Proof of Theorem~\ref{thm:uniqueness}}
\label{app:uniqueness}
Fix $N \geq 2$ and maintain the structural restrictions on $\mathsf{A}$ mentioned in Section~\ref{sec:uniqueness}.
The proof keeps relative weights fixed while varying their proposal probabilities.
The resulting marginal output probabilities are polynomials in these probabilities;
the required error rate forces their low-degree coefficients to agree with UR's, thereby determining the selection probabilities on each batch.
We first illustrate the argument for two and three candidates, then provide the general proof.

\paragraph{Why accuracy determines a choice.}
Take $N = 2$ and keep the relative weights of states $0$ and $1$ fixed at $1/2$ and $1$.
We change only how often state $0$ is proposed:
let $\mu_\eta(0) = \eta$ and $\mu_\eta(1) = 1 - \eta$, where $0 < \eta < 1$.
Since $\pi(y) \propto w_u(y)\mu(y)$, 
\[
    \pi_\eta(0) = \frac{\eta/2}{\eta/2 + 1 - \eta} = \frac{\eta}{2 - \eta}, \qquad
    C_\infty^{\pi_\eta} = \frac{1}{1 - \eta/2} = \frac{2}{2 - \eta}.
\]
Let $q$ denote the probability that $\mathsf{A}$ selects state $0$ when the batch contains both states.
The sampler sees the same relative weights $(1/2, 1)$ on every such batch, regardless of $\eta$, and symmetry makes this probability independent of the order of the two candidates.
Hence the same $q$ must be used for every $\eta$.
Decomposing over the three batch types,
\[
    P_2^{\mathsf{A}}(\{0\}) = \Pr(00) + \Pr(\{01,10\})\Pr(\text{return } 0 \mid \{01,10\})
    = \eta^2 + 2\eta(1 - \eta)q.
\]
For small $\eta$, one can readily see that $P_2^{\mathsf{A}}(\{0\}) = 2q\eta + O(\eta^2)$ and $\pi_\eta(0) = \frac{\eta}{2} + O(\eta^2)$.
Hence any mismatch between $2q$ and $1/2$ creates a first-order error in $\eta$.
However, the assumed guarantee allows only second-order error, since $1 - \tfrac{1}{C_\infty^{\pi_\eta}} = \tfrac{\eta}{2}$.
Since the state space is binary, $\lvert P_2^{\mathsf{A}}(\{0\})-\pi_\eta(0)\rvert = \DTV(P_2^{\mathsf{A}},\pi_\eta)$ and~\eqref{eq:global-characterization} becomes:
\[
    \left\lvert \eta^2 + 2\eta(1 - \eta)q - \frac{\eta}{2 - \eta}\right\rvert
    = \lvert P_2^{\mathsf{A}}(\{0\})-\pi_\eta(0)\rvert
    \leq \frac{L_2\eta^2}{4}
    \Longleftrightarrow
    \left\lvert \eta + 2(1 - \eta)q - \frac{1}{2-\eta}\right\rvert
    \leq \frac{L_2\eta}{4}.
\]
Letting  $\eta \downarrow 0$, this forces $2q = 1/2$, so $q = 1/4$.
This is exactly UR's probability, since $\Pr(U_L < U_H/2) = 1/4$ for independent uniforms on $(0,1)$.
The general proof below applies a similar argument with several rare states, determining the probabilities on every possible batch.

\subsection{Three candidates show how accuracy determines a batchwise choice}
\label{app:three-candidate}
The preceding binary example determines a choice from a first-order term.
With three candidates, a batch containing two rare states first appears in a second-order term.
This is why one must examine more than the probability of seeing a single rare candidate.

Take distinct relative weights $a, b \in (0, 1)$ and a reference weight $1$.
Give these three states proposal probabilities $\eta_1, \eta_2, 1 - \eta_1 - \eta_2$, respectively, where $\eta_1, \eta_2 > 0$ and $\eta_1 + \eta_2 < 1$.
Write $\boldsymbol{\eta} := (\eta_1, \eta_2)$.
The common normalizer is
\(
    Z_{\boldsymbol{\eta}} := 1 - (1-a)\eta_1 - (1-b)\eta_2.
\)
The largest normalized importance weight is therefore
\[
    C_\infty^{\pi_{\boldsymbol{\eta}}} = \frac{1}{Z_{\boldsymbol{\eta}}},
\]
and hence
\[
    1 - \frac{1}{C_\infty^{\pi_{\boldsymbol{\eta}}}}
    = (1-a)\eta_1 + (1-b)\eta_2.
\]
The assumed bound~\eqref{eq:global-characterization}, specialized to $N = 3$, gives
\[
    \DTV(P_3^{\mathsf{A}}, \pi_{\boldsymbol{\eta}})
    \leq L_3\left((1 - a)\eta_1 + (1 - b)\eta_2\right)^3.
\]
Therefore, any first- or second-order mismatch between the sampler and the target would violate the cubic error bound.

First, consider the separate binary family containing only relative weights $a$ and $1$.
Let $\ell$ be the probability that $\mathsf{A}$ selects the sole $a$ candidate from $(a, 1, 1)$.
The sampler returns the lower weight state with probability $3\eta_1(1-\eta_1)^2\ell + O(\eta_1^2) = 3\eta_1\ell + O(\eta_1^2)$, while the corresponding target probability is
\[
    \frac{a\eta_1}{1-(1-a)\eta_1} = a\eta_1 + O(\eta_1^2).
\]
The assumed error is $O(\eta_1^3)$, so their linear coefficients agree and $\ell = a/3$.

Now return to the ternary state family and let $x$ be the probability of selecting the $a$ candidate from $(a,b,1)$.
The target probability of the state with relative weight $a$ expands as:
\begin{align*}
    \frac{a\eta_1}{Z_{\boldsymbol{\eta}}} &= a\eta_1 + a(1-a)\eta_1^2 + a(1-b)\eta_1\eta_2 + O((\eta_1+\eta_2)^3).
\end{align*}
Hence its mixed $\eta_1\eta_2$ coefficient is $a(1-b)$.
We now compute the same coefficient for the sampler.
The contribution of the batch composition $(a,1,1)$ is
\begin{align*}
    3\eta_1(1-\eta_1-\eta_2)^2\frac{a}{3} &= a\eta_1 -2a\eta_1^2 -2a\eta_1\eta_2 +O((\eta_1+\eta_2)^3),
\end{align*}
so its mixed coefficient is $-2a$.
The contribution of $(a,b,1)$ is
\[
    6\eta_1\eta_2(1-\eta_1-\eta_2)x
    =
    6x\eta_1\eta_2
    +O((\eta_1+\eta_2)^3),
\]
so its mixed coefficient is $6x$.
Batches without an $a$ candidate contribute zero.
Every remaining composition with an $a$ candidate contains either at least two $a$ candidates or three lower weight candidates, so none contributes to the $\eta_1\eta_2$ coefficient.
Therefore the sampler's mixed coefficient is $-2a + 6x$.

The cubic error bound forces this coefficient to match the target's, giving
\[
    -2a + 6x = a(1 - b).
\]
Solving for $x$ gives
\begin{align}
    x = \frac{a(3 - b)}{6} = a\int_0^1 (1 - bu)(1 - u)\dd u,
    \label{eq:three-candidate}
\end{align}
which is precisely UR's probability from~\eqref{eq:batch-law}.
The contribution from $(a, 1, 1)$ was determined first, leaving the mixed coefficient to determine the choice on $(a, b, 1)$.
The general proof repeats this procedure in increasing order of the number of lower weight candidates.

\subsection{General proof in three steps}

\paragraph{Step 1. Construct a family with fixed relative weights.}
We construct a family of target--proposal pairs that keeps the relative weights fixed while making the lower weight states rare.

To this end, we choose $1 \leq d \leq N-1$ distinct relative weights
$a_1, \ldots, a_d \in (0,1)$ and let $a_0 := 1$.
State $0$ is the reference state with the largest relative weight, while states $1, \ldots, d$ have smaller fixed weights.
For $j = 1, \ldots, d$, choose $\eta_j > 0$ with $\sum_{j = 1}^d \eta_j < 1$, and set
\[
    \eta_0 := 1 - \sum_{j = 1}^d \eta_j.
\]
We consider the setting where the $a_j$'s remain fixed throughout the argument, whereas the $\eta_j$'s tend to zero.

Now write $\boldsymbol{\eta} := (\eta_1, \ldots, \eta_d)$ and define the normalizing constant
\[
    Z_{\boldsymbol{\eta}} := \sum_{j = 0}^d a_j\eta_j = 1 - \sum_{j = 1}^d (1 - a_j)\eta_j.
\]
The proposal, target, and normalized weights are, for $j = 0, \ldots, d$,
\begin{equation}
    \mu_{\boldsymbol{\eta}}(j) := \eta_j, \qquad
    \pi_{\boldsymbol{\eta}}(j) := \frac{a_j\eta_j}{Z_{\boldsymbol{\eta}}}, \qquad
    w_{\boldsymbol{\eta}}(j) := \frac{a_j}{Z_{\boldsymbol{\eta}}}.
    \label{eq:rare-types}
\end{equation}
Hence the fixed values $a_j$ are unnormalized importance weights, in the same sense as $w_u = Zw$ in Section~\ref{sec:intro}.
Note that varying $\boldsymbol{\eta}$ changes their frequencies and their common normalization, but not their ratios.
The largest normalized weight is $C_\infty^{\pi_{\boldsymbol{\eta}}} = 1/Z_{\boldsymbol{\eta}}$, so
\begin{equation}
    1 - \frac{1}{C_\infty^{\pi_{\boldsymbol{\eta}}}} = \sum_{j = 1}^d (1 - a_j)\eta_j.
    \label{eq:app-near-ratio}
\end{equation}
As the smaller weights become rare, this quantity tends to zero and the required error becomes small.

\paragraph{Step 2: Express marginal output differences as a polynomial.}
We compare $\mathsf{A}$ with UR rather than directly with the target.
The target probabilities involve the changing normalizer $Z_{\boldsymbol{\eta}}^{-1}$, whereas both samplers' output probabilities are polynomials in $\boldsymbol{\eta}$.
To this end, let $\boldsymbol{n} := (n_1, \ldots, n_d)$ be a count vector which records how many candidates from each lower weight state appear in the batch.
Here $|\boldsymbol{n}| := \sum_{j = 1}^d n_j \leq N$,
so the remaining $N - |\boldsymbol{n}|$ candidates are in state $0$.
For example, if $d = 2$, then $\boldsymbol n = (1,2)$ means one candidate from state $1$, two from state $2$, and the remaining $N-3$ candidates from state $0$.
Let $q_j^{\mathsf{A}}(\boldsymbol{n})$ be the probability of returning state $j$ from this batch, and define $q_j^U(\boldsymbol{n})$ similarly.
Unlike $p_i^{\mathsf{A}}$, which refers to one candidate, $q_j^{\mathsf{A}}$ sums the probabilities of all candidates in state $j$.
Neither the ordering nor the common normalization affects these probabilities, so they are independent of $\boldsymbol{\eta}$.
Write $\Delta_j(\boldsymbol{n}) := q_j^{\mathsf{A}}(\boldsymbol{n}) - q_j^U(\boldsymbol{n})$.

\begin{lemma}[Output probability differences are polynomials]
\label{lem:polynomial-batch}
For the pair in~\eqref{eq:rare-types} and $j = 1, \ldots, d$, let $D_j(\boldsymbol{\eta}) := P_N^{\mathsf{A}}(\{j\}) - P_N^U(\{j\})$.
Then
\begin{equation}
    D_j(\boldsymbol{\eta}) = \sum_{|\boldsymbol{n}| \leq N} \frac{N!}{(N - |\boldsymbol{n}|)!\prod_{h = 1}^d n_h!} \eta_0^{N - |\boldsymbol{n}|}\prod_{h = 1}^d \eta_h^{n_h}\Delta_j(\boldsymbol{n}).
    \label{eq:app-polynomial}
\end{equation}
This is a polynomial in $\eta_1, \ldots, \eta_d$ of total degree at most $N$.
\end{lemma}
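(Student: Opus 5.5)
The plan is to condition on the batch composition and then use linearity of the marginal output law over batches. Under $\mu_{\boldsymbol{\eta}}$ the $N$ candidates are i.i.d. on the states $\{0,1,\ldots,d\}$. The ordered state sequence $(Y_1,\ldots,Y_N)$ has probability $\prod_i \eta_{Y_i}$. Grouping sequences by their count vector $\boldsymbol{n}$ (with $N-|\boldsymbol{n}|$ entries equal to $0$) gives the multinomial probability
\[
\frac{N!}{(N-|\boldsymbol{n}|)!\prod_{h} n_h!}\,\eta_0^{N-|\boldsymbol{n}|}\prod_{h}\eta_h^{n_h}
\]
for observing the composition $\boldsymbol{n}$.

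Next I would check that, conditional on a specific ordered sequence with composition $\boldsymbol{n}$, the probability that $\mathsf{A}$ returns state $j$ equals $q_j^{\mathsf{A}}(\boldsymbol{n})$. The observed unnormalized weights on such a sequence are the vector $(a_{Y_1},\ldots,a_{Y_N})$, up to the common factor. By the information restriction and the scale invariance in \eqref{eq:invariances}, the selection probabilities depend only on this vector and not on $\boldsymbol{\eta}$. By the permutation equivariance in \eqref{eq:invariances}, reordering the sequence only permutes the $p_i^{\mathsf{A}}$. Summing $p_i^{\mathsf{A}}$ over indices $i$ with $Y_i=j$ therefore gives a quantity depending only on $\boldsymbol{n}$, which is $q_j^{\mathsf{A}}(\boldsymbol{n})$. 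The same argument applies to UR, whose selection probabilities \eqref{eq:batch-law} are explicitly scale invariant and symmetric.

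The law of total probability then gives $P_N^{\mathsf{A}}(\{j\})=\sum_{\boldsymbol{n}} \mathrm{mult}(\boldsymbol{n})\,q_j^{\mathsf{A}}(\boldsymbol{n})$, and similarly for UR. Subtracting the two expressions yields \eqref{eq:app-polynomial}. For the degree claim, substitute $\eta_0=1-\sum_h\eta_h$. Each term becomes a product of $(1-\sum_h \eta_h)^{N-|\boldsymbol{n}|}$ and a monomial of degree $|\boldsymbol{n}|$, which is a polynomial of total degree at most $N$, and the coefficients $\Delta_j(\boldsymbol{n})$ are constants.

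The only real subtlety is the second step: justifying that $q_j^{\mathsf{A}}$ is well defined independently of both the ordering and $\boldsymbol{\eta}$. This is exactly where the structural restrictions are used, and the rest is bookkeeping.
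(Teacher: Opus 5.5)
Your proposal is correct and follows the paper's own argument. You compute the multinomial probability of each batch composition, show that $q_j^{\mathsf{A}}(\boldsymbol{n})$ depends only on $\boldsymbol{n}$ (not on candidate order or on $\boldsymbol{\eta}$) via the restrictions in \eqref{eq:invariances}, apply the law of total probability, and substitute $\eta_0 = 1-\sum_h \eta_h$ to get the degree bound; the paper records the well-definedness step just before the lemma and in its remark that no continuity of the selection functions is needed.
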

\begin{proof}
For each count vector, the multinomial factor times the powers of $\eta_h$ is the probability of that batch composition.
Multiplying by its conditional output difference and summing is the law of total probability.
Since $\eta_0 = 1 - \sum_{h = 1}^d \eta_h$, each term has degree at most $N$.
No continuity of the selection functions is needed: their values on each fixed relative weight vector remain unchanged while the probabilities of those vectors vary.
\end{proof}

\paragraph{Step 3: Use the error guarantee to recover the batchwise probabilities.}
We now use the assumed bound~\eqref{eq:global-characterization} to determine the choices within each batch.
Fix $\boldsymbol{v} \in (0, \infty)^d$ and let $\boldsymbol{\eta} = \theta\boldsymbol{v}$ for sufficiently small $\theta > 0$.
By~\eqref{eq:app-near-ratio}, the triangle inequality and the two error bounds give
\begin{align*}
    |D_j(\theta\boldsymbol{v})| &\stackrel{\mathrm{(i)}}{\leq} \DTV(P_N^{\mathsf{A}}, \pi_{\theta\boldsymbol{v}}) + \DTV(P_N^U, \pi_{\theta\boldsymbol{v}}) \\
    &\stackrel{\mathrm{(ii)}}{\leq} (L_N + 1)\left(\theta\sum_{h = 1}^d (1 - a_h)v_h\right)^N \\
    &= O(\theta^N) \\
    &= o(\theta^{N - 1}),
\end{align*}
where (i) compares both samplers with the same target on the event $\{j\}$;
and (ii) uses~\eqref{eq:global-characterization} and Corollary~\ref{cor:rates}.
The constant $L_N$ cannot change with $\theta$, since it is independent of the target and proposal.
The following lemma explains why this small error removes every lower-degree term.

\begin{lemma}[Small marginal differences remove the lower-degree terms]
\label{lem:vanishing-coefficients}
Suppose that, for every fixed $\boldsymbol{v} \in (0, \infty)^d$,
\[
    D_j(\theta\boldsymbol{v}) = o(\theta^{N - 1}) \qquad (\theta \downarrow 0).
\]
Then every coefficient of $D_j$ of total degree below $N$ is zero.
\end{lemma}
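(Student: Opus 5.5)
The plan is to decompose $D_j$ into homogeneous parts and exploit the scaling along rays $\boldsymbol{\eta} = \theta\boldsymbol{v}$. By Lemma~\ref{lem:polynomial-batch}, $D_j$ is a polynomial in $\eta_1,\ldots,\eta_d$ of total degree at most $N$. I would write
\[
    D_j(\boldsymbol{\eta}) = \sum_{k=0}^{N} H_k(\boldsymbol{\eta}),
\]
where $H_k$ is homogeneous of degree $k$, obtained by collecting monomials after expanding $\eta_0 = 1 - \sum_h \eta_h$. Along a ray, $D_j(\theta\boldsymbol{v}) = \sum_{k=0}^N \theta^k H_k(\boldsymbol{v})$, which is a univariate polynomial in $\theta$ whose coefficients are the numbers $H_k(\boldsymbol{v})$.

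The key step is an induction on $k$ from $0$ to $N-1$. Suppose $H_0(\boldsymbol{v}) = \cdots = H_{k-1}(\boldsymbol{v}) = 0$. Divide $D_j(\theta\boldsymbol{v})$ by $\theta^k$. The hypothesis $D_j(\theta\boldsymbol{v}) = o(\theta^{N-1})$ with $k \le N-1$ gives $\theta^{-k}D_j(\theta\boldsymbol{v}) \to 0$. The polynomial side, however, tends to $H_k(\boldsymbol{v})$ as $\theta \downarrow 0$. Hence $H_k(\boldsymbol{v}) = 0$. Since this holds for every fixed $\boldsymbol{v} \in (0,\infty)^d$, we obtain $H_k(\boldsymbol{v}) = 0$ for all $k \le N-1$ and all $\boldsymbol{v}$ in the open positive orthant.

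It remains to pass from vanishing on the orthant to vanishing of the coefficients. A polynomial vanishing on a nonempty open subset of $\mathbb{R}^d$ is the zero polynomial. The argument is a standard induction on $d$: fix all but one variable, so that a univariate polynomial with infinitely many zeros has zero coefficients, and each such coefficient is a polynomial in the remaining variables vanishing on an open set. Therefore every coefficient of $H_k$ is zero for $k < N$. These coefficients are exactly the coefficients of $D_j$ of total degree $k$, so all coefficients of total degree below $N$ vanish.

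I expect no serious obstacle. The only points needing care are bookkeeping ones. First, the homogeneous decomposition must refer to the expanded polynomial in $\eta_1,\ldots,\eta_d$, with $\eta_0$ eliminated, so that "coefficients of total degree below $N$" is unambiguous. Second, the hypothesis is used only for fixed $\boldsymbol{v}$, with no uniformity in $\boldsymbol{v}$ required, because each $H_k(\boldsymbol{v})$ is identified separately from the limit along its own ray.
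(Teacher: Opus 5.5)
Your proposal is correct and follows essentially the same route as the paper. Both restrict to rays $\boldsymbol{\eta}=\theta\boldsymbol{v}$, identify the coefficient of $\theta^k$ with the degree-$k$ homogeneous part evaluated at $\boldsymbol{v}$, eliminate it by dividing by $\theta^k$ (you phrase this as an induction, the paper as a contradiction on the first nonzero term), and then pass from vanishing on the positive orthant to vanishing of all coefficients by the one-variable-at-a-time argument.
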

\begin{proof}
Setting $\boldsymbol{\eta} = \theta\boldsymbol{v}$ scales all rare-state probabilities down together while keeping their proportions fixed.
For this fixed $\boldsymbol{v}$, write
\[
    D_j(\theta\boldsymbol{v}) = c_0(\boldsymbol{v}) + c_1(\boldsymbol{v})\theta + \cdots + c_N(\boldsymbol{v})\theta^N.
\]
If the first nonzero term had degree $m < N$, dividing by $\theta^m$ would give a nonzero limit.
The assumed bound instead gives a limit of zero, including when $m = N - 1$.
Hence $c_m(\boldsymbol{v}) = 0$ for every $m < N$.

This conclusion holds for every positive $\boldsymbol{v}$, not just one choice of proportions.
For each $m$, $c_m(\boldsymbol{v})$ is exactly the degree-$m$ part of $D_j$ evaluated at $\boldsymbol{v}$.
It is a polynomial that vanishes throughout the positive orthant, and therefore all its coefficients vanish.
In particular, fix all but one coordinate in an open box and use that a univariate polynomial vanishing on an interval is zero; repeating this for the other coordinates proves the claim.
Lastly, notice that varying the proportions is essential:
a polynomial such as $\eta_1 - \eta_2$ vanishes when $\eta_1 = \eta_2$ without having zero coefficients.
\end{proof}

\begin{lemma}[The coefficients determine the selection probabilities]
\label{lem:recover-batches}
If all coefficients of $D_j$ of total degree below $N$ vanish for $j = 1, \ldots, d$, then $\Delta_j(\boldsymbol{n}) = 0$ whenever $|\boldsymbol{n}| < N$.
Allowing the relative weights to vary identifies each candidate's selection probability on every positive input vector.
\end{lemma}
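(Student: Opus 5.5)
Starting from the polynomial representation in Lemma~\ref{lem:polynomial-batch}, I will induct on $|\boldsymbol{n}|$. The substitution $\eta_0 = 1 - \sum_h \eta_h$ makes the structure triangular. The batch composition $\boldsymbol{n}$ contributes the monomial $\prod_h \eta_h^{n_h}$ times $\eta_0^{N-|\boldsymbol{n}|}$. Expanding $\eta_0^{N-|\boldsymbol{n}|}$ yields only monomials $\prod_h \eta_h^{m_h}$ with $m_h \geq n_h$ for all $h$, i.e.\ $\boldsymbol{m} \geq \boldsymbol{n}$ componentwise. Hence the coefficient of $\prod_h \eta_h^{n_h}$ in $D_j$ equals
\[
\frac{N!}{(N-|\boldsymbol{n}|)!\prod_h n_h!}\,\Delta_j(\boldsymbol{n}) \;+\; \sum_{\boldsymbol{n}' \leq \boldsymbol{n},\, \boldsymbol{n}'\neq \boldsymbol{n}} c_{\boldsymbol{n},\boldsymbol{n}'}\,\Delta_j(\boldsymbol{n}'),
\]
where the constants $c_{\boldsymbol{n},\boldsymbol{n}'}$ come from the binomial expansion. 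Inducting over $\boldsymbol{n}$ in increasing total size, with base case $\boldsymbol{n}=0$, where $\Delta_j(0)=0$ trivially because no state-$j$ candidate is present, all earlier $\Delta_j(\boldsymbol{n}')$ vanish. For $|\boldsymbol{n}|<N$ the coefficient has degree below $N$, so it vanishes, and since the multinomial factor is positive, $\Delta_j(\boldsymbol{n})=0$. This gives the first claim for every $j=1,\ldots,d$. The composition $\boldsymbol{n}$ with $|\boldsymbol{n}|=N$ is excluded, but it is not needed.

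Next I identify individual selection probabilities. Take any $w_{1:N}\in(0,\infty)^N$ and rescale it by $1/\max_i w_i$ using the first invariance in~\eqref{eq:invariances}, so that the maximum relative weight is $1$ and at least one candidate sits at the reference state. If all entries equal $1$, the permutation invariance forces $p_i^{\mathsf{A}}=1/N=p_i^U$. Otherwise, let $a_1,\dots,a_d$ be the distinct values below $1$, so $d\leq N-1$, and let $\boldsymbol{n}$ record their multiplicities. Then $|\boldsymbol{n}|\leq N-1<N$, which is exactly why the full reference-free batch never needs to be treated. The first part gives $q_j^{\mathsf{A}}(\boldsymbol{n})=q_j^U(\boldsymbol{n})$ for $j\geq1$, and by complement also for $j=0$. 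Candidates sharing the same weight must receive equal probability: a transposition swapping them leaves the weight vector unchanged, so the second invariance gives equal $p_i^{\mathsf{A}}$, and UR satisfies the same property. Dividing $q_j$ by $n_j$ (or by $N-|\boldsymbol{n}|$ for $j=0$) therefore gives $p_i^{\mathsf{A}}(w_{1:N})=p_i^U(w_{1:N})$ for each $i$.

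One prerequisite must be checked: $q_j^{\mathsf{A}}(\boldsymbol{n})$ has to be a well-defined function of the composition. By the information restriction and~\eqref{eq:invariances}, the sampler's behaviour depends only on the relative weight vector up to order. The states in the family~\eqref{eq:rare-types} are labelled by their weights, since the $a_j$ are distinct and all lie below $a_0=1$. Consequently, every ordering of a given composition yields the same per-state totals. The main obstacle is the triangular coefficient bookkeeping. The key step is choosing the right partial order, componentwise $\leq$ rather than total degree alone, so that the expansion of $\eta_0$ only feeds lower compositions into higher monomials. Once that order is fixed, the induction is immediate.
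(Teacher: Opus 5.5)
Your proposal is correct and follows the paper's proof: the same triangular induction on $|\boldsymbol{n}|$, followed by the same normalization by the maximum weight and the within-group symmetry argument. Where you read off the coefficient of each monomial $\prod_h \eta_h^{n_h}$, the paper reads off the whole degree-$m$ homogeneous part; the two are equivalent. Your componentwise order is convenient but not essential, because compositions of lower total degree already vanish entirely by the induction hypothesis.
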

\begin{proof}
Fix $j \geq 1$ and proceed in increasing order of $m := |\boldsymbol{n}|$.
For $m = 0$, state $j$ is absent, so both rules assign it probability zero.
Now fix $1 \leq m \leq N - 1$ and suppose all compositions with fewer than $m$ lower weight candidates already have $\Delta_j = 0$.
Their terms in~\eqref{eq:app-polynomial} vanish entirely.
Compositions with more than $m$ lower weight candidates cannot contribute to degree $m$, since they contain more than $m$ factors of the rare-state probabilities.
For a composition with exactly $m$ such candidates,
\[
    \eta_0^{N - m}
    = \left(1 - \sum_{h = 1}^d \eta_h\right)^{N - m}
\]
has constant term one, and all its other terms raise the total degree above $m$.
Therefore the degree-$m$ part is exactly
\[
    \sum_{|\boldsymbol{n}| = m}\frac{N!}{(N - m)!\prod_{h = 1}^d n_h!}\Delta_j(\boldsymbol{n})\prod_{h = 1}^d \eta_h^{n_h}.
\]
Each composition produces a distinct monomial with a positive multinomial coefficient.
Since the displayed polynomial is identically zero, every $\Delta_j(\boldsymbol{n})$ with $|\boldsymbol{n}| = m$ is zero.
Induction therefore proves the assertion for $j \geq 1$; the probabilities of state $0$ then agree since each rule's probabilities sum to one.

Finally, divide an arbitrary positive weight vector by its maximum.
If all entries become one, symmetry gives probability $1/N$ to each candidate.
Otherwise, its distinct smaller values are some $a_1, \ldots, a_d$, with multiplicities $n_1, \ldots, n_d$.
At least one maximal entry remains, so $|\boldsymbol{n}| \leq N - 1$.
The preceding induction identifies the total selection probability of each equal-weight group.
Within a group, exchanging two candidates leaves the weight vector unchanged, so symmetry assigns them equal probabilities.
In conclusion, the group probability determines each individual probability.
\end{proof}

Lemma~\ref{lem:vanishing-coefficients} removes the lower-degree coefficients, and Lemma~\ref{lem:recover-batches} determines every batchwise probability.
Conversely, Corollary~\ref{cor:rates} supplies the UR bound with $L_N = 1$.
This completes the proof of Theorem~\ref{thm:uniqueness}.

\paragraph{A weaker local condition.}
For fixed $N$, the same conclusion holds under the weaker requirement
\[
    \sup_{(\pi, \mu): C_\infty^\pi \leq 1 + \delta}
    \DTV(P_N^{\mathsf{A}}, \pi) = o(\delta^{N - 1}) \qquad (\delta \downarrow 0),
\]
where the supremum ranges over finite target--proposal pairs with strictly positive probabilities.
For each fixed $\boldsymbol{v} \in (0, \infty)^d$, \eqref{eq:app-near-ratio} shows that $C_\infty^{\pi_{\theta\boldsymbol{v}}} - 1$ is proportional to $\theta$ to first order.
The local requirement and UR's bound therefore give $D_j(\theta\boldsymbol{v}) = o(\theta^{N - 1})$.
Lemmas~\ref{lem:vanishing-coefficients} and~\ref{lem:recover-batches} then apply unchanged.

\section{Nonuniform divisors can also converge exponentially}
\label{app:noise}

Uniformity of the auxiliary variables is not necessary for exponential convergence.
Consider the more general score
\[
    S_i^F := \frac{w(Y_i)}{T_i},
\]
where the $T_i$ are i.i.d. positive random variables independent of the proposals.
As a simple example, let
\[
    T \sim \frac{1}{2} \Unif(0,1) + \frac{1}{2} \Unif(2,3).
\]
Its CDF satisfies
\[
    F(t) = \frac{t}{2}, \qquad 0 \leq t \leq 1.
\]
Suppose $C_\infty^\pi<\infty$.
A candidate with state $y$ has score at least $C_\infty^\pi$ exactly when
\[
    T_i \leq \frac{w(y)}{C_\infty^\pi}.
\]
Since $w(y)/C_\infty^\pi\le1$,
\[
    \Pr\left(T_i \leq \frac{w(y)}{C_\infty^\pi} \middle| Y_i = y\right)
    = \frac{w(y)}{2C_\infty^\pi}.
\]

Let $\widehat{I}_F := \arg\max_{i\in[N]} S_i^F$ and $P_N^F := \mathcal L(Y_{\widehat{I}_F})$.
Suppose $C_\infty^\pi < \infty$ and define
\[
    H_i := \left\{T_i \leq \frac{w(Y_i)}{C_\infty^\pi}\right\}, \qquad
    V_i := \frac{C_\infty^\pi T_i}{w(Y_i)} \quad \text{on }H_i .
\]
For every measurable set $D$ and $0\le v\le1$,
\[
\begin{aligned}
    \Pr(Y_i\in D,H_i,V_i\le v) &= \int_D F\left(\frac{v w(y)}{C_\infty^\pi}\right)\mu(dy)
    = \frac{v}{2C_\infty^\pi}\pi(D).
\end{aligned}
\]
Hence, one can observe that conditional on $H_i$, $Y_i \sim \pi$ and $V_i \sim \Unif(0,1)$ are independent.
Conditioning on the set of crossing indices preserves independence across candidates, since each crossing event depends only on its own candidate and divisor.
Among the candidates satisfying $H_i$, the rule selects the smallest $V_i$, since $S_i^F = C_\infty^\pi/V_i$ on $H_i$. Therefore, conditional on at least one threshold crossing, the selected state has law $\pi$.

Each candidate crosses independently with probability $1/(2C_\infty^\pi)$.
It follows that
\[
    \DTV(P_N^F, \pi) \leq \left(1 - \frac{1}{2C_\infty^\pi}\right)^N.
\]
In conclusion, we observe that uniform divisors are not necessary for exponential convergence.
Notice that this bound has a slower exponential rate than UR's bounded weight guarantee though, since only half of the divisor mass lies in the linear part of the CDF.

\section{Experimental Details}
\label{app:exp_details}

This section provides additional details on the experimental setup, evaluation procedure, and results.
\vspace{-0.2em}
\paragraph{Datasets.}
Throughout the experiments, we evaluate on two mathematical reasoning benchmarks.

\textbf{GSM8K}~\citep{cobbe2021training} is a dataset created by OpenAI, consisting of 8.5K grade-level math word problems.
For each evaluated prompt, we pre-generated a response pool $\mathcal{A}_x^\star$ of $N^\star :=  \text{4,096}$ responses from the reference policy, with a maximum generation length of 500 tokens.

\textbf{MATH500}~\citep{hendrycks2021measuring, lightman2024let} is a 500-question evaluation subset of the MATH benchmark, composed of competition-level mathematics problems.
Compared to GSM8K, MATH500 requires more complex multi-step mathematical reasoning, providing a setting where we can evaluate our method with more challenging problems.
For each problem (i.e., evaluated prompt), we pre-generated a response pool $\mathcal{A}_x^\star$ of 4,096 responses, with a maximum generation length of 1,024 tokens.
\vspace{-0.2em}
\paragraph{Reference policy and response generation.}
We use \path{meta-llama/Llama-3.2-3B-Instruct}~\citep{grattafiori2024llama} as the reference policy.
Specifically, for each evaluated prompt $x$, we pre-generate a fixed response pool
\[
    \mathcal{A}_x^\star := \{a_1, \ldots, a_{N^\star}\}, \qquad
    N^\star = 4{,}096.
\]
We use a generation temperature of $0.3$ and maximum generation lengths of 500 tokens for GSM8K and 1,024 tokens for MATH500 respectively.
The response generation and reward model settings are summarized in Table~\ref{tab:settings}.
All response generation and reward model evaluation were run on two NVIDIA RTX 6000 Pro Blackwell GPUs.
\begin{table}[!htbp]
  \centering
  \small
  \renewcommand{\arraystretch}{1.25}
  \begin{tabularx}{\linewidth}{@{}l l >{\raggedright\arraybackslash}X@{}}
    \toprule
    \textbf{Component} & \textbf{Setting} & \textbf{Value} \\
    \midrule
    \multirow{5}{*}{\shortstack[l]{Response generation}}
      & Model
      & \path{meta-llama/Llama-3.2-3B-Instruct} \\
      & Max tokens
      & 500 (GSM8K); 1,024 (MATH500) \\
      & Generation temperature
      & 0.3 \\
      & Pre-generated responses/prompt
      & 4,096 \\
    \midrule
    \multirow{2}{*}{\shortstack[l]{Reward model}}
      & Model
      & \path{OpenAssistant/reward-model-deberta-v3-large-v2} \\
    \bottomrule
  \end{tabularx}
  \caption{Response generation and reward model settings.}
  \label{tab:settings}
\end{table}
\vspace{-0.2em}
\paragraph{Finite empirical proposal.}
The fixed response pool $\mathcal{A}_x^\star$ defines the empirical distribution
\(
    \widehat{\mu}(y \mid x) := \tfrac{1}{N^\star} \sum_{i=1}^{N^\star}\1\{a_i = y\}.
\)
Since $\mathcal{A}_x^\star$ may contain repeated responses, let $\supp(\widehat{\mu}(\cdot \mid x))$ denote the set of distinct responses represented in $\mathcal{A}_x^\star$.
For brevity, we write $\supp(\widehat{\mu})$ when the prompt $x$ is fixed.
This fixed empirical proposal allows us to draw repeated independent candidate sets \emph{without additional LLM generations.}
The pool size $N^\star$ is fixed throughout the experiment, whereas $N$ denotes the sampling budget available to each sampling procedure and is varied to produce the curves.
\vspace{-0.7em}
\paragraph{Reward model and target policy.}
Each distinct response $y \in \supp(\widehat{\mu})$ is scored by \path{OpenAssistant/reward-model-deberta-v3-large-v2}~\citep{kopf2023openassistant}, yielding a reward $\widehat{r}(x,y)$.
Motivated by the standard KL-regularized RLHF objective~\citep{jaques2017sequence, jaques2020human, rafailov2023direct},
we define the reward-tilted target policy
\[
    \pi_\beta(y \mid x) = \frac{\widehat{\mu}(y \mid x)\exp(\widehat{r}(x,y)/\beta)}{\sum_{y' \in \supp(\widehat{\mu})}\widehat{\mu}(y'\mid x)\exp(\widehat{r}(x,y')/\beta)}, \quad y \in \supp(\widehat{\mu}).
\]
Since $\supp(\widehat{\mu})$ is finite, the normalizing constant can be computed exactly for evaluation.
The sampling procedures themselves receive only the unnormalized importance weights
\[
    w_u(x,y) = \exp\left(\widehat{r}(x,y)/\beta\right).
\]
Hence, $(\pi_\beta, \widehat{\mu})$ forms a finite target--proposal pair to which the theoretical results in Sections~\ref{sec:oracle}--\ref{sec:uniqueness} apply directly.

\subsection{Sampling Algorithms}
\label{app:exp_sampling_alg}

We consider the four sampling algorithms mentioned in Section~\ref{sec:experiments}.
In particular, all four samplers receive the same $N$ i.i.d.\ candidates
\(
    Y_1, \ldots, Y_N \stackrel{\mathrm{iid}}{\sim} \widehat{\mu}(\cdot \mid x)
\)
and their unnormalized weights $w_u(x, Y_i)$.
Each sampler returns exactly one of the observed candidates.
If $N = 1$, all four methods return the sole candidate and hence coincide with $\widehat{\mu}$.
As the implementations of UR and SIR are described clearly in previous sections, we provide the detailed implementations of the remaining two algorithms.
\vspace{-1.2em}
\paragraph{Envelope RS.}
This baseline implements the approximate rejection sampler of Section~\ref{sec:intro}.
For convenience, we parameterize its threshold directly on the unnormalized weight scale.
Given a threshold $\tau > 0$, candidates are examined in order and candidate $i$ is accepted with probability
\[
    \min\left\{\frac{w_u(x, Y_i)}{\tau}, 1\right\}.
\]
The first accepted candidate is returned; if all $N$ candidates are rejected, one of the observed candidates is returned uniformly at random.
Equivalently, $\tau = ZM$ for the normalized threshold $M$ used in the theoretical formulation.

For each $(x, \beta)$, we set the threshold using the fixed response pool $\mathcal{A}_x^\star$ and use the same threshold for every $N$.
Specifically, we set
\(
    \tau := \max_{y \in \supp(\widehat{\mu})} w_u(x,y).
\)
Hence, envelope RS serves as a proxy for the best fixed-threshold RS in hindsight.
\vspace{-0.3em}
\paragraph{Budget-calibrated RS.}
This baseline is adapted from~\citet[Algorithm~5]{rohatgi2025} as described in Section~\ref{sec:rohatgi}.
In particular, given a sampling budget $N$, set
\(
    n = \left\lfloor\frac{N-1}{2}\right\rfloor.
\)
For $N \in \{1, 2\}$, for which $n = 0$, we define the baseline to return a single fresh candidate $Y_1 \sim \widehat{\mu}(\cdot \mid x)$.
The algorithm first uses $n$ candidates as pilot samples to estimate the normalizing constant,
\(
    \widehat{Z} := \tfrac{1}{n}\sum_{i=1}^n w_u(x, Y_i),
\)
and the threshold multiplier is set to
\(
    M_{N,\delta} := \tfrac{n}{4\log(4/\delta)},
\)
$\delta \in (0,1)$.
The algorithm then runs standard RS on the next $n$ candidates with acceptance probability
\[
    \min\left\{\frac{w_u(x, Y_i)}{M_{N,\delta}\widehat{Z}}, 1\right\},
\]
and the fresh fallback candidate $Y_{2n+1}$ is returned if all are rejected.
Hence, at most $2n + 1 \leq N$ candidates are used.
In addition, we select $\delta$ from $\{0.001, 0.01, 0.05, 0.1, 0.25, 0.5, 0.9\}$ separately for each $(x, \beta, N)$.
For the TV plots, we choose the value minimizing the estimated TV error;
for the accuracy plots, we choose the value maximizing ground-truth accuracy.
All candidate values of $\delta$ are evaluated on the same Monte Carlo runs.

\subsection{Evaluation Metrics and Estimation}

We report two complementary metrics:
(1) the TV error $\DTV(P_N, \pi_\beta)$ and
(2) ground-truth accuracy.
\vspace{-0.2em}
\paragraph{TV error.}
Let $P_N(\cdot \mid x)$ denote the marginal distribution of the response returned by a sampler with budget $N$.
Since all four algorithms are designed to approximate the same target distribution $\pi_\beta$, we measure sampling fidelity by $\DTV(P_N, \pi_\beta)$.
We note that this is also the quantity controlled directly by our theoretical results.
\vspace{-0.2em}
\paragraph{Ground-truth accuracy.}
Ground-truth accuracy is not the objective optimized by the samplers.
Rather, it measures whether the sampled response gives the correct final answer to the underlying math problem.
For a fixed prompt $x$, define the target accuracy
\[
    \operatorname{Acc}(\pi_\beta; x) := \sum_y \pi_\beta(y \mid x)\1\{y\text{ has the correct final answer}\}.
\]
Since correctness corresponds to an event,
\(
    \left|\operatorname{Acc}(P_N; x) - \operatorname{Acc}(\pi_\beta; x)\right|
    \leq \DTV(P_N, \pi_\beta).
\)
Hence, as the output distribution approaches $\pi_\beta$, its ground-truth accuracy approaches the corresponding target accuracy.
We report accuracy to assess whether improved sampling fidelity also appears in downstream task performance.
Importantly, the target accuracy may be either higher or lower than the reference policy accuracy, so accuracy need not increase monotonically with $N$.

\subsubsection{Estimating the TV Error on a Finite Pool}
\label{app:empirical_setup}

Computing the TV error requires knowledge of the target distribution $\pi_\beta$ and the output law $P_N$.
Note that the target distribution $\pi_\beta$ can be computed exactly on the finite empirical support $\supp(\widehat{\mu})$, 
but the output law $P_N$ of a sampling procedure should be estimated through repeated trials.

To estimate $P_N$, for each prompt $x$, temperature parameter $\beta$, and sampling budget $N$, each trial independently draws
\(
    Y_1, \ldots, Y_N \stackrel{\mathrm{iid}}{\sim}\widehat{\mu}(\cdot \mid x)
\)
and applies the corresponding sampling procedure.
We repeat this experiment $m$ times, using the trial counts specified below and record the returned responses
\(
    \widehat{Y}^{(1)}, \ldots, \widehat{Y}^{(m)}.
\)
A fresh candidate set is drawn in every trial because $P_N$ is the marginal law of the returned response, averaging over both candidate generation and the sampler's internal randomness.
Reusing a single candidate set would instead estimate the output law conditional on that set.
With this, we estimate $P_N$ by its empirical frequencies,
\[
    \widehat{P}_N(y \mid x) := \frac{1}{m}\sum_{t=1}^m \1\{\widehat{Y}^{(t)}=y\},\quad y \in \supp(\widehat{\mu}),
\]
and report the plug-in TV estimate
\[
    \widehat{D}_{\mathrm{TV}}(P_N, \pi_\beta)
    := \frac{1}{2}\sum_{y \in \supp(\widehat{\mu})}\left|\widehat{P}_N(y \mid x) - \pi_\beta(y \mid x)\right|.
\]
\vspace{-0.2em}
\paragraph{Monte Carlo floor.}
\label{app:monte_carlo_floor}

We highlight that even when a sampler is exact, the plug-in TV estimate above is generally positive because $\widehat P_N$ is computed from finitely many Monte Carlo
trials.
Consequently, once the true sampling error becomes sufficiently small, the observed TV error is limited by the statistical resolution of the estimator.

To quantify this effect without relying on an asymptotic approximation, we compute an exact sampler baseline.
For each setting, we draw $m$ independent samples directly from $\pi_\beta$, form their empirical distribution using the same estimator, and compute its TV distance from $\pi_\beta$.
This provides the Monte Carlo floor at the same number of trials used for the sampling algorithms.
We show this exact sampler floor as a dotted line in the corresponding appendix figures.
Once a sampler reaches this level, differences of comparable magnitude cannot be reliably distinguished from Monte Carlo estimation error.

\subsection{Further experimental results}
\label{app:exp_results_analysis}

This section provides additional empirical results supporting
Section~\ref{sec:experiments}.
We first summarize the theoretical comparisons relevant to the experiments and then examine their empirical behavior across prompts and values of $\beta$.
\paragraph{Theoretical comparisons and empirical reference points.}
Since $(\pi_\beta, \widehat{\mu})$ forms a valid finite target--proposal pair, the theoretical results apply directly to the corresponding population output distributions:
\begin{itemize}[leftmargin=1em, nosep]
    \item \textbf{UR vs. envelope RS.} 
    (Theorem~\ref{thm:main})
    UR attains the best fixed-threshold RS guarantee in hindsight without requiring threshold information. 
    Envelope RS serves as an empirical proxy for this benchmark, and we therefore expect the two methods to exhibit comparable empirical performance.
    \item \textbf{UR vs. budget-calibrated RS}
    (Theorem~\ref{thm:pilot-rs-fdom}):
    for every prompt, $\beta$, sampling budget $N$, and $\delta$,
    \(
        \DTV(P_N^U, \pi_\beta) \leq \DTV(P_{N, \delta}^{R}, \pi_\beta).
    \)
    \item \textbf{UR vs. SIR} (Theorem~\ref{thm:SIR}):
    for every prompt, $\beta$, and sampling budget $N$,
    \(
        \DTV(P_N^U, \pi_\beta) \leq \DTV(P_N^{\mathrm{SIR}}, \pi_\beta).
    \)
    \item \textbf{Monte Carlo resolution.}
    Once the true TV error becomes sufficiently small, the estimated curves approach the scale of the exact-sampler Monte Carlo baseline.
    Differences at this scale cannot be reliably distinguished from Monte Carlo estimation error.
\end{itemize}

\paragraph{Number of trials.}
All appendix results in this section use $m=10^5$ independent runs.
For Figure~\ref{fig:tv_error_and_acc}, we use $m=10^6$ to reduce Monte Carlo estimation error.

\paragraph{Confidence intervals for TV error.} 
Since $\widehat{D}_{\mathrm{TV}}$ is computed from the empirical output distribution $\widehat{P}_N$ based on $m$ independent runs, we estimate its Monte Carlo variability using a bootstrap.
For each $b = 1, \ldots, B$ with $B = 200$, we draw $m$ responses independently from $\widehat{P}_N$, then form the corresponding empirical distribution $\widehat{P}_N^{\star(b)}$, and compute
\[
    \widehat{D}_b^\star := \frac{1}{2}\sum_{y \in \supp(\widehat{\mu})}
    \left|\widehat{P}_N^{\star(b)}(y \mid x) - \pi_\beta(y \mid x)\right|.
\]
We take $\widehat{\mathrm{se}}$ to be the sample standard deviation of $\{\widehat{D}_b^\star\}_{b = 1}^B$ and report the approximate 95\% interval
\(
    \widehat{D}_{\mathrm{TV}}\pm z_{0.975}\widehat{\mathrm{se}}.
\)
The interval reflects Monte Carlo variability of the estimator.
The exact-sampler baseline separately shows the nonzero plug-in error induced by finite $m$ when the true TV error is zero.

\paragraph{Confidence intervals for accuracy.}
For accuracy, each of the $m$ independent runs yields a Bernoulli outcome.
We report
\(
    \widehat{a}\pm t_{m-1,0.975}\widehat{\mathrm{se}},
\)
and
\(
    \widehat{\mathrm{se}} = \tfrac{s}{\sqrt m},
\)
where $\widehat{a}$ and $s$ denote the sample mean and sample standard deviation of the outcomes, respectively.
At $m = 10^5$, $t_{m-1, 0.975} \approx 1.96$.

\begin{figure}[!t]
    \begin{subfigure}[b]{1\textwidth}
        \centering
        \includegraphics[width=1\linewidth]{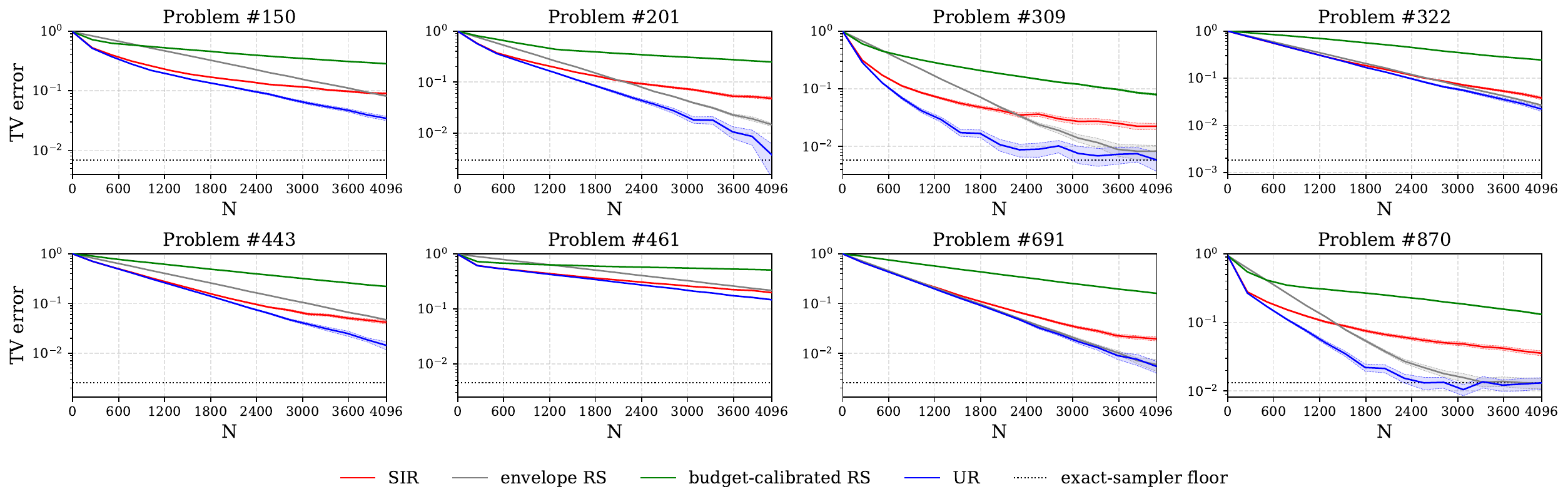}
        \caption{$\beta = 0.1$ for eight randomly selected problems.}
        \label{fig:gsm8k_tv_beta_0p1}
    \end{subfigure}
    \\
    
    \begin{subfigure}[b]{1\textwidth}
        \centering
        \includegraphics[width=1\linewidth]{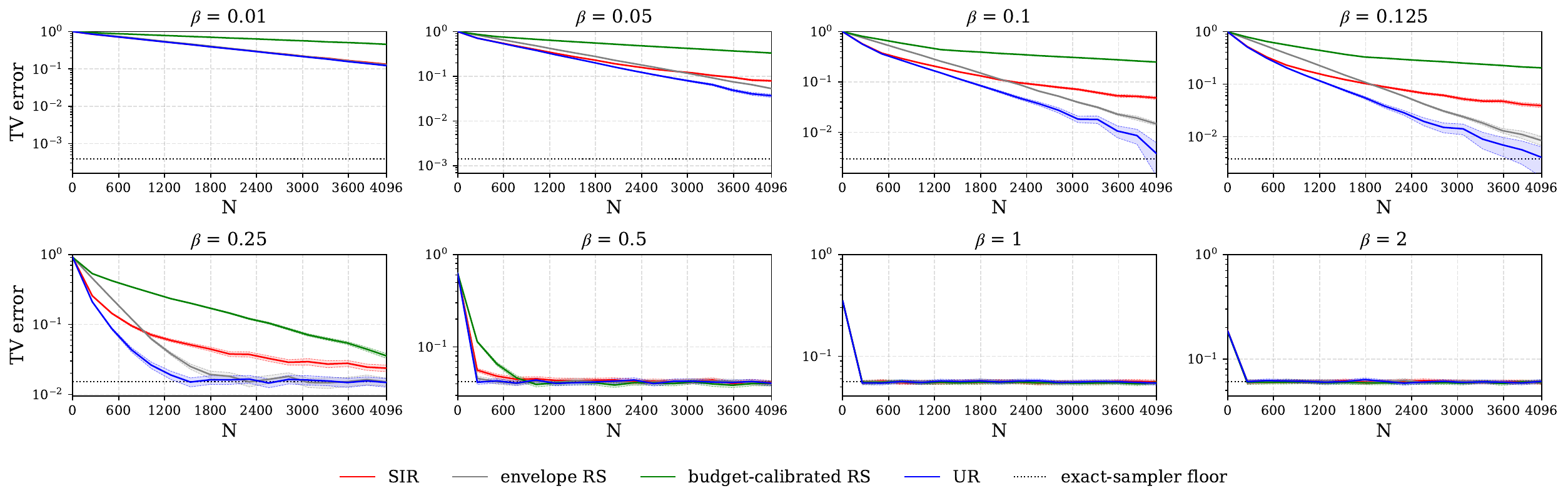}
        \caption{Problem~\#201 across eight values of $\beta$ from $0.01$ to $2$.}
        \label{fig:gsm8k_tv_question_201}
    \end{subfigure}
    \caption{TV error $\widehat{D}_{\mathrm{TV}}(P_N, \pi_\beta)$ w.r.t sampling budget $N$ on GSM8K.
    The dotted line indicates the exact-sampler Monte Carlo floor.}
    \label{fig:gsm8k_tv}
\end{figure}

\begin{figure}[p]
    \centering
    \begin{subfigure}[b]{1\textwidth}
        \centering
        \includegraphics[width=1\linewidth]{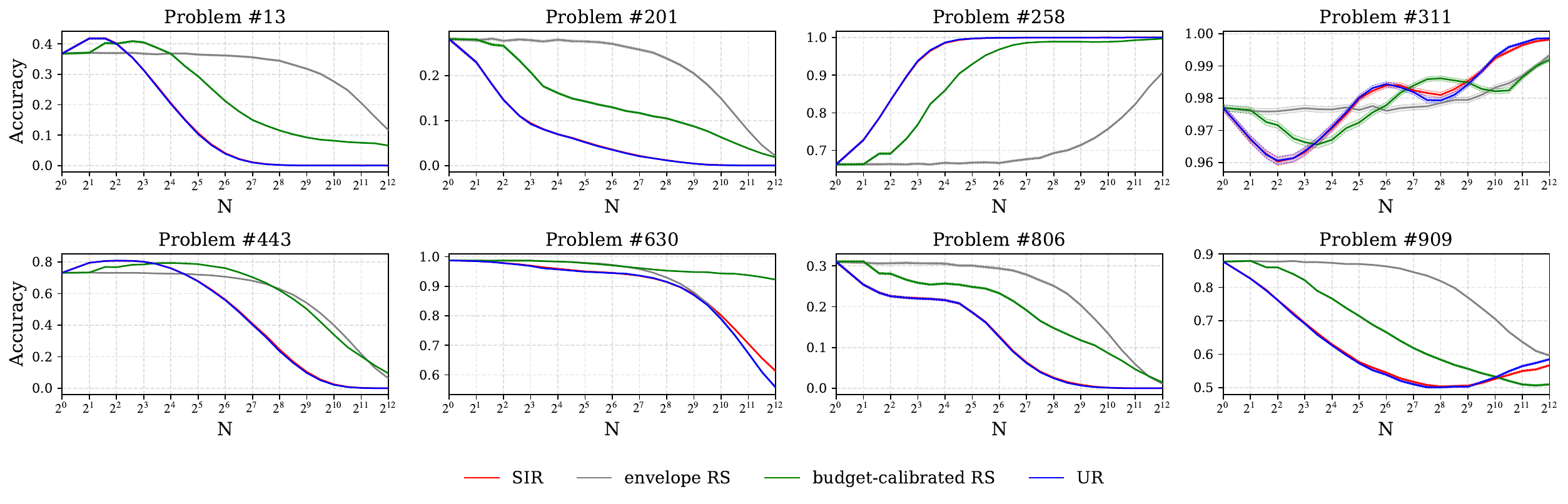}
        \caption{$\beta = 0.1$ for eight randomly selected problems.}
        \label{fig:gsm8k_acc_beta_0p1}
    \end{subfigure}
    \\
    
    \begin{subfigure}[b]{1\textwidth}
        \centering
        \includegraphics[width=1\linewidth]{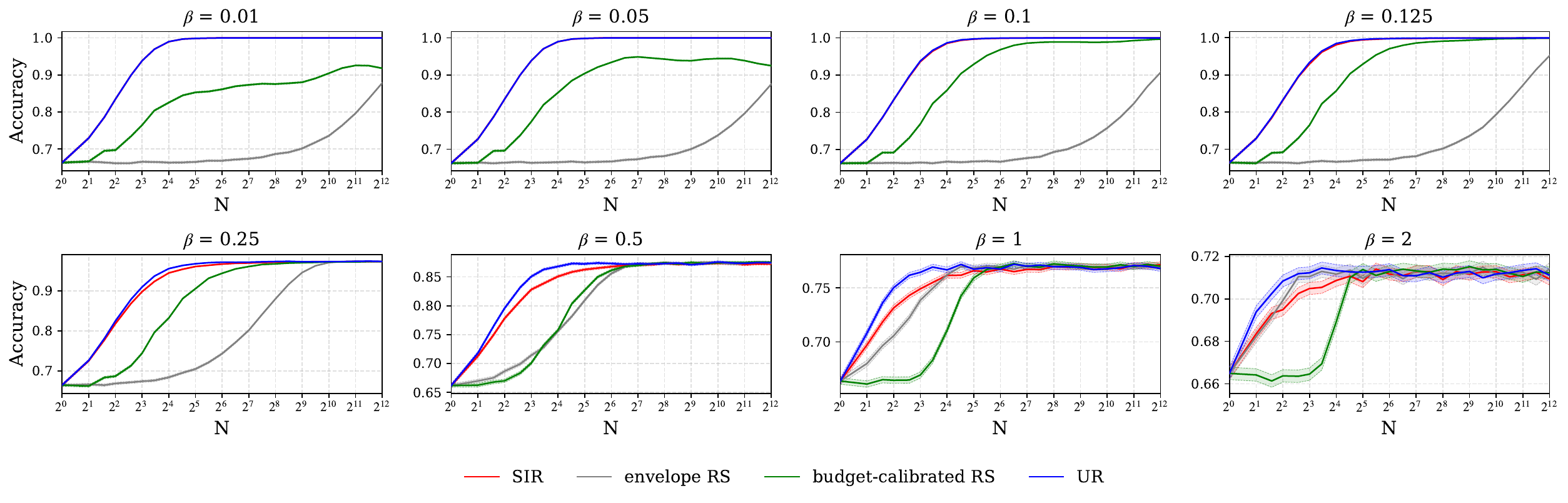}
        \caption{Problem~\#258 for $\beta$ from $0.01$ to $2$.}
        \label{fig:gsm8k_acc_question_258}
    \end{subfigure}
    \caption{Ground-truth accuracy w.r.t.\ sampling budget $N$ on GSM8K.}
    \label{fig:gsm8k_acc}
\end{figure}

\paragraph{GSM8K: variation across prompts.}
Figure~\ref{fig:gsm8k_tv_beta_0p1} reports the TV error at $\beta = 0.1$ for eight randomly selected GSM8K problems.
The difficulty of approximating the target varies substantially across prompts.
For some prompts, the methods reach the Monte Carlo floor within a small sampling budget, leaving little resolution for distinguishing them at larger $N$.
For others, substantial separation remains throughout the tested range.
Across the displayed prompts, UR achieves estimated TV error below or comparable to SIR and budget-calibrated RS, up to Monte Carlo resolution, consistent with Theorems~\ref{thm:pilot-rs-fdom} and~\ref{thm:SIR}.
UR remains competitive with envelope RS without requiring threshold information.

\paragraph{GSM8K: variation across $\beta$.}
Figure~\ref{fig:gsm8k_tv_question_201} fixes Problem~\#201 and varies $\beta \in \{0.01, 0.05, 0.1, 0.125, 0.25, 0.5, 1, 2\}$.
Changing $\beta$ changes the concentration of the reward-tilted target and therefore the finite-budget difficulty of the sampling problem.
At smaller $\beta$, the target is more strongly tilted toward high-reward responses, whereas larger $\beta$ makes $\pi_\beta$ closer to the reference proposal $\widehat{\mu}$.
Accordingly, the sampling difficulty varies substantially with $\beta$.
The same qualitative comparison persists across the displayed values: UR remains competitive with other baselines without requiring threshold information.

\FloatBarrier

\begin{figure}[!t]
    \begin{subfigure}[b]{1\textwidth}
        \centering
        \includegraphics[width=1\linewidth]{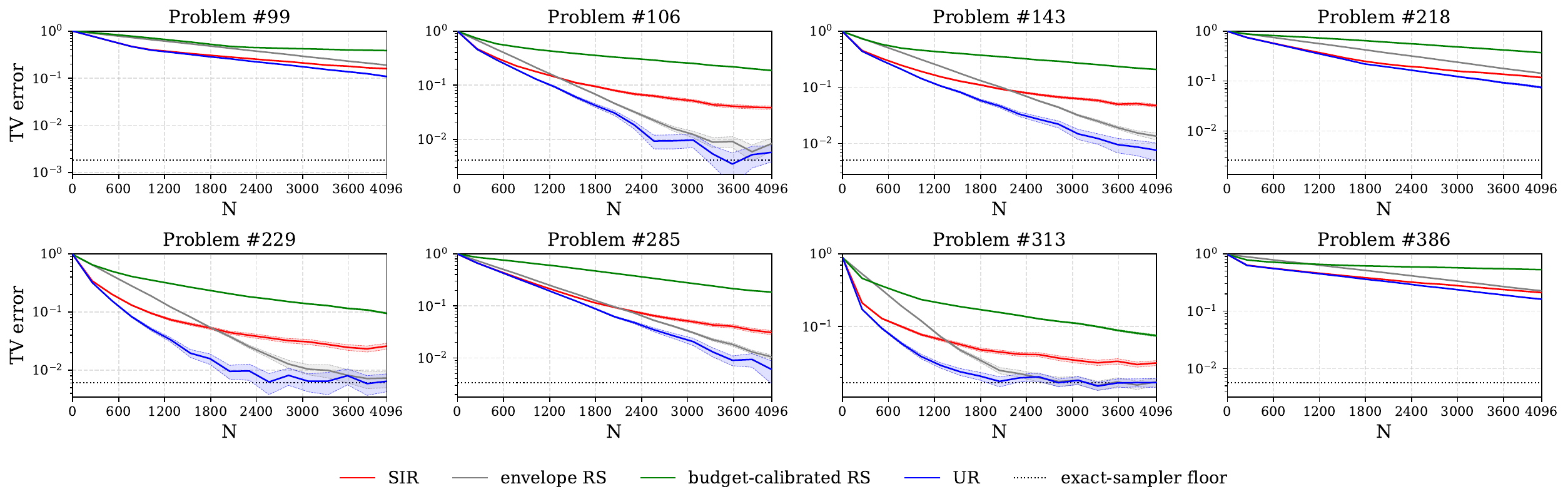}
        \caption{$\beta = 0.1$ for eight randomly selected problems.}
        \label{fig:math500_tv_beta_0p1}
    \end{subfigure}
    \begin{subfigure}[b]{1\textwidth}
        \centering
        \includegraphics[width=1\linewidth]{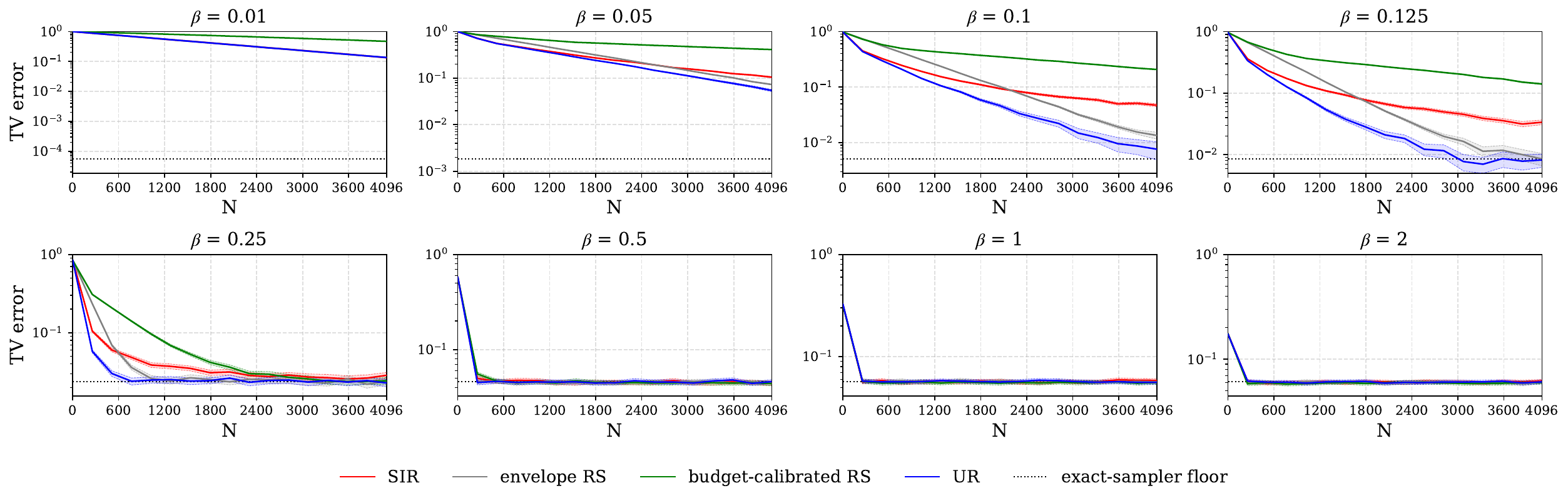}
        \caption{Problem~\#143 for $\beta$ from $0.01$ to $2$.}
        \label{fig:math500_tv_question_143}
    \end{subfigure}
    \caption{TV error $\widehat{D}_{\mathrm{TV}}(P_N, \pi_\beta)$ w.r.t sampling budget $N$ on MATH500.
    The dotted line indicates the exact-sampler Monte Carlo floor.}
    \label{fig:math500_tv}
\end{figure}

\begin{figure}[p]
    \begin{subfigure}[b]{1\textwidth}
        \centering
        \includegraphics[width=1\linewidth]{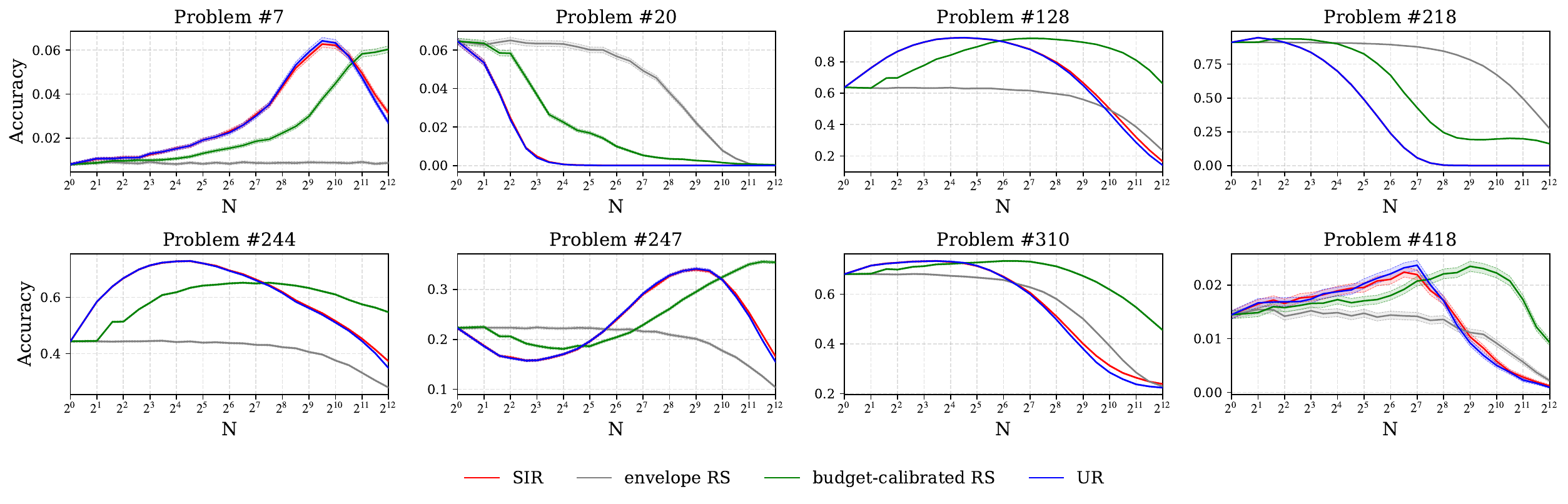}
        \caption{$\beta = 0.1$ for eight randomly selected problems.}
        \label{fig:math500_acc_beta_0p1}
    \end{subfigure}
    \begin{subfigure}[b]{1\textwidth}
        \centering
        \includegraphics[width=1\linewidth]{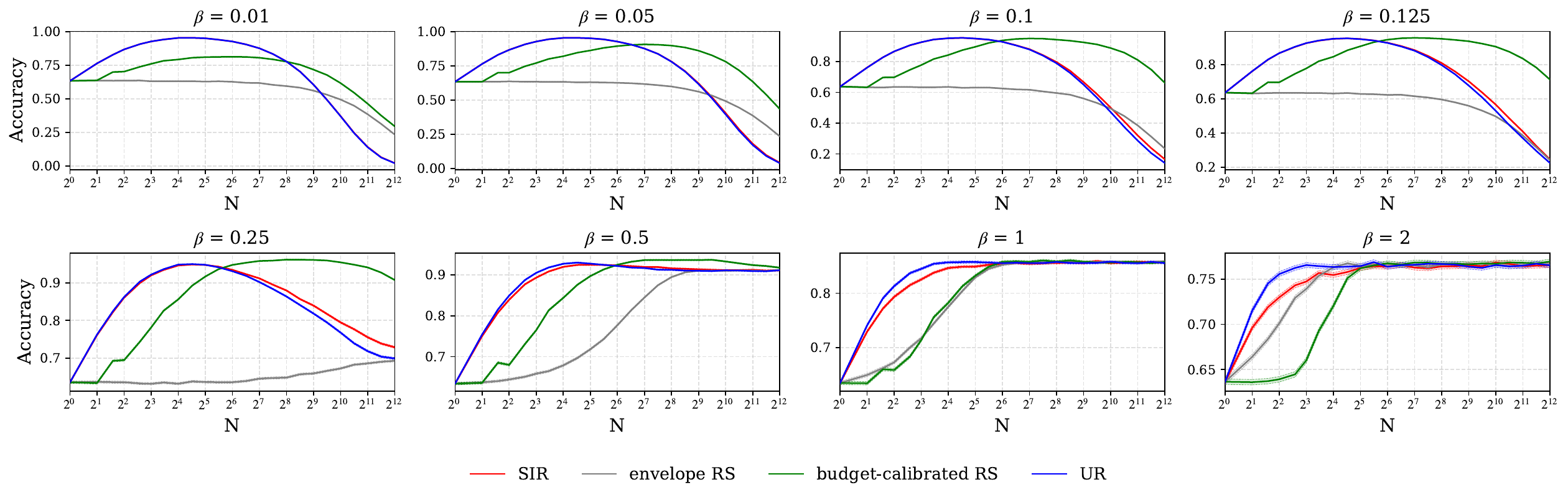}
        \caption{Problem~\#128 for $\beta$ from $0.01$ to $2$.}
        \label{fig:math500_acc_question_128}
    \end{subfigure}
    \caption{Accuracy w.r.t sampling budget $N$ on MATH500.}
    \label{fig:math500_acc}
\end{figure}

\FloatBarrier

\paragraph{GSM8K: Ground-truth accuracy.}
Figure~\ref{fig:gsm8k_acc} reports ground-truth accuracy across eight prompts
at $\beta = 0.1$ and, for Problem~\#258, across $\beta \in \{0.01, 0.05, 0.1, 0.125, 0.25, 0.5, 1, 2\}$.
Distributional closeness to $\pi_\beta$ translates directly into closeness in ground-truth accuracy, since correctness corresponds to an event.
However, the target accuracy may be either above or below the reference policy accuracy.
Accordingly, as a sampler approaches $\pi_\beta$, its accuracy may either increase or decrease toward the corresponding target accuracy.
For this reason, we use TV error as the primary metric for comparing the sampling procedures and accuracy as a complementary metric.

\paragraph{MATH500.}
Figures~\ref{fig:math500_tv} and~\ref{fig:math500_acc} report the MATH500 results across eight problems at $\beta = 0.1$ and across values of $\beta$ for selected problems.
Across the displayed settings, the qualitative behavior observed on GSM8K also appears on MATH500, indicating that the empirical comparisons carry over to these more challenging mathematical reasoning problems.

\end{document}